\documentclass[11pt]{article}

% Recommended, but optional, packages for figures and better typesetting:
\usepackage{microtype}
\usepackage{graphicx}
\usepackage{subfigure}
\usepackage{booktabs} % for professional tables

% hyperref makes hyperlinks in the resulting PDF.
% If your build breaks (sometimes temporarily if a hyperlink spans a page)
% please comment out the following usepackage line and replace
% \usepackage{icml2019} with \usepackage[nohyperref]{icml2019} above.
%\usepackage{hyperref}

% Dylan stuff %%%%%%%%%%%%%%%%%%%%%%%%%%%%%%%%%%%%%%%%

% dylan's things
% !TEX root = paper.tex

%\usepackage{geometry}
\usepackage[letterpaper, left=1in, right=1in, top=1in, bottom=1in]{geometry}

\setlength{\parindent}{0pt}
\setlength{\parskip}{.5\baselineskip}

\usepackage[dvipsnames]{xcolor}
\usepackage[colorlinks=true, linkcolor=blue, citecolor=blue, urlcolor=black]{hyperref}
\usepackage{microtype}

% algorithms
\usepackage{algorithm}
\usepackage{algpseudocode}

\usepackage{natbib}
\bibliographystyle{plainnat}
\bibpunct{(}{)}{;}{a}{,}{,}

\usepackage{amsthm}
\usepackage{mathtools}
\usepackage{amsmath}
\usepackage{bbm}
\usepackage{amsfonts}
\usepackage{amssymb}

\usepackage{MnSymbol} %Actually conflicts with amssymb and others

\usepackage{xpatch}

%%% theorems

%\theoremstyle{plain}
\newtheorem{theorem}{Theorem}
\newtheorem{remark}{Remark}

\newtheorem{definition}{Definition}
\newtheorem{proposition}{Proposition}
\theoremstyle{definition}  %Sets style of subsequent newtheorems to 'definition'

\newtheorem{lemma}{Lemma}
\newtheorem{assumption}{Assumption}

\xpatchcmd{\proof}{\itshape}{\normalfont\proofnameformat}{}{}
\newcommand{\proofnameformat}{\bfseries}

%%% prettyref

\usepackage{prettyref}
\newcommand{\pref}[1]{\prettyref{#1}}
\newcommand{\pfref}[1]{Proof of \prettyref{#1}}
\newcommand{\savehyperref}[2]{\texorpdfstring{\hyperref[#1]{#2}}{#2}}
\newrefformat{eq}{\savehyperref{#1}{\textup{(\ref*{#1})}}}
\newrefformat{eqn}{\savehyperref{#1}{Equation~\ref*{#1}}}
\newrefformat{lem}{\savehyperref{#1}{Lemma~\ref*{#1}}}
\newrefformat{def}{\savehyperref{#1}{Definition~\ref*{#1}}}
\newrefformat{fact}{\savehyperref{#1}{Fact~\ref*{#1}}}
\newrefformat{line}{\savehyperref{#1}{line~\ref*{#1}}}
\newrefformat{thm}{\savehyperref{#1}{Theorem~\ref*{#1}}}
% \newrefformat{corr}{\savehyperref{#1}{Corollary~\ref*{#1}}}
\newrefformat{cor}{\savehyperref{#1}{Corollary~\ref*{#1}}}
\newrefformat{sec}{\savehyperref{#1}{Section~\ref*{#1}}}
\newrefformat{app}{\savehyperref{#1}{Appendix~\ref*{#1}}}
\newrefformat{ass}{\savehyperref{#1}{Assumption~\ref*{#1}}}
\newrefformat{ex}{\savehyperref{#1}{Example~\ref*{#1}}}
\newrefformat{fig}{\savehyperref{#1}{Figure~\ref*{#1}}}
\newrefformat{alg}{\savehyperref{#1}{Algorithm~\ref*{#1}}}
\newrefformat{rem}{\savehyperref{#1}{Remark~\ref*{#1}}}
\newrefformat{conj}{\savehyperref{#1}{Conjecture~\ref*{#1}}}
\newrefformat{prop}{\savehyperref{#1}{Proposition~\ref*{#1}}}
\newrefformat{proto}{\savehyperref{#1}{Protocol~\ref*{#1}}}
\newrefformat{prob}{\savehyperref{#1}{Problem~\ref*{#1}}}
\newrefformat{claim}{\savehyperref{#1}{Claim~\ref*{#1}}}

% Math delimiters
\DeclarePairedDelimiter{\abs}{\lvert}{\rvert} %
\DeclarePairedDelimiter{\brk}{[}{]}
\DeclarePairedDelimiter{\crl}{\{}{\}}
\DeclarePairedDelimiter{\prn}{(}{)}
\DeclarePairedDelimiter{\nrm}{\|}{\|}
\DeclarePairedDelimiter{\tri}{\langle}{\rangle}

\let\Pr\undefined

\DeclareMathOperator{\En}{\mathbb{E}}

\DeclareMathOperator{\Pr}{Pr}

% Arg<x>
\DeclareMathOperator*{\argmin}{arg\,min} % * Places subscript directly under operator
\DeclareMathOperator*{\argmax}{arg\,max}

% one-off macros
\newcommand{\ls}{\ell}
\newcommand{\ind}{\mathbbm{1}}    %Indicator
\newcommand{\pmo}{\crl*{\pm{}1}}
\newcommand{\eps}{\epsilon}
\newcommand{\veps}{\varepsilon}

\newcommand{\ldef}{\vcentcolon=}
\newcommand{\rdef}{=\vcentcolon}

% styles

\newcommand{\wt}[1]{\widetilde{#1}}
\newcommand{\wh}[1]{\widehat{#1}}

% Special letters: blackboard, mathcal, widehat
% djhsu magic
\def\ddefloop#1{\ifx\ddefloop#1\else\ddef{#1}\expandafter\ddefloop\fi}
\def\ddef#1{\expandafter\def\csname bb#1\endcsname{\ensuremath{\mathbb{#1}}}}
\ddefloop ABCDEFGHIJKLMNOPQRSTUVWXYZ\ddefloop
\def\ddefloop#1{\ifx\ddefloop#1\else\ddef{#1}\expandafter\ddefloop\fi}
\def\ddef#1{\expandafter\def\csname b#1\endcsname{\ensuremath{\mathbf{#1}}}}
\ddefloop ABCDEFGHIJKLMNOPQRSTUVWXYZ\ddefloop
\def\ddef#1{\expandafter\def\csname c#1\endcsname{\ensuremath{\mathcal{#1}}}}
\ddefloop ABCDEFGHIJKLMNOPQRSTUVWXYZ\ddefloop
\def\ddef#1{\expandafter\def\csname h#1\endcsname{\ensuremath{\widehat{#1}}}}
\ddefloop ABCDEFGHIJKLMNOPQRSTUVWXYZ\ddefloop
\def\ddef#1{\expandafter\def\csname hc#1\endcsname{\ensuremath{\widehat{\mathcal{#1}}}}}
\ddefloop ABCDEFGHIJKLMNOPQRSTUVWXYZ\ddefloop
\def\ddef#1{\expandafter\def\csname t#1\endcsname{\ensuremath{\widetilde{#1}}}}
\ddefloop ABCDEFGHIJKLMNOPQRSTUVWXYZ\ddefloop
\def\ddef#1{\expandafter\def\csname tc#1\endcsname{\ensuremath{\widetilde{\mathcal{#1}}}}}
\ddefloop ABCDEFGHIJKLMNOPQRSTUVWXYZ\ddefloop

% Names
\newcommand{\Holder}{H{\"o}lder}
%%% Local Variables:
%%% mode: plain-tex
%%% TeX-master: "paper"
%%% End:

% New macros for this project %%%%%%%%%%%%%%%%%%%%%%

%\usepackage[left=1in,right=1in,top=1in,bottom=1in]{geometry}

\let\wt\undefined

\newcommand{\wt}[1]{\widetilde{#1}}

\newcommand{\diag}{\textrm{diag}}

\newcommand{\sgn}{\textnormal{sgn}}

%%%%%%%%%%%%%%%%%%%%%%%%%%%%%%%%%%%%%%%%%%%%%%%%%%%%

%\newcommand{\x}{\mathbf{x}}

%\newcommand{\Reg}{\mathrm{\mathbf{Reg}_{n}}}
%\newcommand{\Rel}{\mathrm{\mathbf{Rel}}}
%\newcommand{\RegM}{\mathrm{\mathbf{Reg}^{Meta}_{n}}}
%\newcommand{\Rad}{\mathrm{\mathbf{Rad}}}
%\newcommand{\RadH}{\widehat{\Rad}_{\F}}
%\newcommand{\Radiid}{\mathrm{\mathbf{Rad}}_{\F, n}^{\mathrm{iid}}}
%\newcommand{\RadD}{\mathrm{\mathbf{Rad}}_{\F, n}^{\D}}
%\newcommand{\Radseq}{\mathrm{\mathbf{Rad}}_{\F, n}^{\mathrm{seq}}}

%

\newcommand{\grad}{\nabla}

\newcommand{\trn}{\intercal}

\renewcommand{\trn}{\dagger}
\newcommand{\Tr}{\mathbf{Tr}}

\usepackage{times}

% General packages %%%%%%%%%%%%%%%%%%%%%%%%%%%%%%%%%%%%%%%%

\usepackage{bold-extra}

\usepackage{enumitem}

\usepackage{nicefrac}

\usepackage{algorithm}
\usepackage{mdframed}
\theoremstyle{definition}
\newmdtheoremenv[innertopmargin=5pt,linewidth=.3mm]{framedalgorithm}[algorithm]{Algorithm}

% Misc macros %%%%%%%%%%%%%%%%%%%%%%%%%%%%%%%%%%%%%%%%

\newcommand{\midsem}{\,;\,}
\newcommand{\dmid}{\|}

\newcommand{\reg}{\cR}
\newcommand{\breg}{D_{\reg}}

\newcommand{\algcomment}[1]{\textcolor{Blue}{\footnotesize{\texttt{\textbf{// #1}}}}}

\newcommand{\poprisk}{L_{\cD}}
\newcommand{\loss}{\ls}

\newcommand{\halg}{\hat{h}}
\newcommand{\walg}{\wh{w}}

\newcommand{\poly}{\mathrm{poly}}

\newcommand{\maurey}{Q^{s}}

\renewcommand{\trn}{\top}
\renewcommand{\Tr}{\mathsf{tr}}

\newcommand{\comp}{c}

\newcommand{\sigmah}{\wh{\sigma}}

\newcommand{\approxleq}{\lesssim}

\renewcommand{\paragraph}[1]{\par\textbf{#1}\hspace{5pt}}

% Editing %%%%%%%%%%%%%%%%%%%%%%%%%%%%%%%%%%%%%%%%
%\usepackage[suppress]{color-edits}
%\usepackage{color-edits}
%\addauthor{df}{red}
%\addauthor{ja}{purple}
%\usepackage{showlabels}

%\newcommand{\new}[1]{\textcolor{orange}{#1}}
\newcommand{\new}[1]{#1}

% Document %%%%%%%%%%%%%%%%%%%%%%%%%%%%%%%%%%%%%%%%
%\usepackage{icml2019}

% The \icmltitle you define below is probably too long as a header.
% Therefore, a short form for the running title is supplied here:

%\icmltitlerunning{Distributed Learning with Sublinear Communication}
%\title{\textbf{Distributed Learning with Sublinear Communication}}
\title{Distributed Learning with Sublinear Communication}
\author{
Jayadev Acharya\\
Cornell University\\
{\small acharya@cornell.edu}
\and
Christopher De Sa\\
Cornell University\\
{\small cdesa@cs.cornell.edu}
\and
Dylan J. Foster\\
MIT\\
{\small dylanf@mit.edu}
\and
Karthik Sridharan\\
Cornell University\\
{\small sridharan@cs.cornell.edu}
}
\date{}

\begin{document}
\maketitle

%\twocolumn[

%\icmlsetsymbol{equal}{*}

%\begin{icmlauthorlist}
%\icmlauthor{Jayadev Acharya}{cornell}
%\icmlauthor{Christopher De Sa}{cornell}
%\icmlauthor{Dylan J. Foster}{mit}
%\icmlauthor{Karthik Sridharan}{cornell}
%\end{icmlauthorlist}

%\icmlaffiliation{cornell}{Cornell University}
%\icmlaffiliation{mit}{Massachusetts Institute of Technology}
%
%\icmlcorrespondingauthor{Dylan Foster}{djf244@cornell.edu}
%\icmlkeywords{Machine Learning, ICML}

%\vskip 0.3in
%]

%\printAffiliationsAndNotice{}  % leave blank if no need to mention equal contribution
%\printAffiliationsAndNotice{\icmlEqualContribution} % otherwise use the standard text.

\begin{abstract}
In distributed statistical learning, $N$ samples are split across $m$ machines and a learner wishes to use minimal communication to learn as well as if the examples were on a single machine. 
This model has received substantial interest in machine learning due
to its scalability and potential for parallel speedup. However, in high-dimensional settings, where the number examples is smaller than the number of features (``dimension''), the speedup afforded by distributed learning may be overshadowed by the cost of communicating a single example. This paper investigates
 the following question: When is it possible to learn a $d$-dimensional model in the distributed setting with total communication sublinear in $d$?

Starting with a negative result, we observe that for learning $\ls_1$-bounded or sparse linear models, no algorithm can obtain optimal error
until communication is linear in dimension. Our main result is that
that by slightly relaxing the standard boundedness assumptions for
linear models, we can obtain distributed algorithms that enjoy optimal
error with communication \emph{logarithmic} in dimension. This result is based on a family of algorithms that combine mirror descent
with randomized sparsification/quantization of iterates, and extends to
the general stochastic convex optimization model.

\end{abstract}

\section{Introduction}
\label{sec:intro}
% !TEX root = paper.tex

In statistical learning, a learner receives examples $z_1,\ldots,z_{N}$ i.i.d. from an unknown distribution $\cD$. Their  goal is to output a hypothesis $\halg\in\cH$ that minimizes the prediction error $\poprisk(h)\ldef{}\En_{z\sim{}\cD}\loss(h,z)$, and in particular to guarantee that \emph{excess risk} of the learner is small, i.e.
\begin{equation}
\label{eq:excess_risk}
\poprisk(\halg) - \inf_{h\in\cH}\poprisk(h)\leq\veps(\cH,N),
\end{equation}
where $\veps(\cH,N)$ is a decreasing function of $N$. This paper focuses on \emph{distributed} statistical learning. Here, the $N$ examples are split evenly across $m$ machines, with $n\ldef{}N/m$ examples per machine, and the learner wishes to achieve an excess risk guarantee such as \pref{eq:excess_risk} with minimal overhead in computation or communication.

Distributed learning has been the subject of extensive investigation due to its scalability for processing massive data: We may wish to efficiently process datasets that are spread across multiple data-centers, or we may want to distribute data across multiple machines to allow for parallelization of learning procedures. The question of parallelizing computation via distributed learning is a well-explored problem \citep{bekkerman2011scaling,recht2011hogwild,dekel2012optimal,chaturapruek2015asynchronous}. However, one drawback that limits the practical viability of these approaches is that the communication cost amongst machines may overshadow gains in parallel speedup \citep{bijral2016data}. Indeed, for high-dimensional statistical inference tasks where $N$ \new{could be} much smaller than the dimension $d$, or in modern deep learning models where \new{the} number of model parameters exceeds the number of examples (e.g. \cite{he2016deep}), communicating a single gradient or sending the raw model parameters between machines constitutes a significant overhead. 

Algorithms with reduced communication complexity in distributed learning have received significant recent development \citep{seide20141,alistarh2017qsgd,zhang2017zipml,suresh2017distributed, bernstein2018signsgd,tang2018communication}, but typical results here take as a given that when gradients or examples live in $d$ dimensions, communication will scale as $\Omega(d)$. Our goal is to revisit this tacit assumption and understand when it can be relaxed. We explore the question of \emph{sublinear communication}:

{\centering
\textit{Suppose a hypothesis class $\cH$ has $d$ parameters. When is it possible to achieve optimal excess risk for $\cH$ in the distributed setting using $o(d)$ communication?}

}

\subsection{Sublinear Communication for Linear Models?}
In this paper we focus on linear models, which are a special case of the general learning setup \pref{eq:excess_risk}. We restrict to linear hypotheses of the form $h_{w}(x) = \tri*{w,x}$ where $w,x\in\bbR^{d}$ and write $\ls(h_{w},z)=\phi(\tri*{w,x},y)$, where $\phi(\cdot,y)$ is a fixed link function and $z=(x,y)$. We overload notation slightly and write 
\begin{equation}
\label{eq:linear_model}
\poprisk(w)=\En_{(x,y)\sim\cD}\phi(\tri*{w,x},y).
\end{equation} The formulation captures standard learning tasks such as square loss regression, where $\phi(\tri*{w,x},y)=\prn*{\tri*{w,x}-y}^{2}$, logistic regression, where $\phi(\tri*{w,x},y)=\log\prn*{1+e^{-y\tri*{w,x}}}$, and classification with surrogate losses such as the hinge loss, where $\phi(\tri*{w,x},y)=\max\crl*{1-\tri*{w,x}\cdot{}y,0}$.

Our results concern the communication complexity of learning for linear models in the \emph{$\ls_p/\ls_q$-bounded setup}: weights belong to $\cW_{p}\ldef\crl*{w\in\bbR^{d}\mid\nrm*{w}_{p}\leq{}B_{p}}$ and feature vectors belong to $\cX_{q}\ldef\crl*{x\in\bbR^{d}\mid\nrm*{x}_{q}\leq{}R_{q}}$.\footnote{Recall the definition of the $\ls_p$ norm: $\nrm*{w}_{p}=\prn*{\sum_{i=1}^{d}\abs*{w_i}^{p}}^{1/p}$.} This setting is a natural starting point to investigate sublinear-communication distributed learning because \emph{learning is possible even when $N\ll{}d$.} 

Consider the case where $p$ and $q$ are dual, i.e. $\frac{1}{p}+\frac{1}{q}=1$, and where $\phi$ is $1$-Lipschitz. Here it is well known \citep{zhang2002covering,kakade2009complexity} that whenever $q\geq{}2$, the optimal sample complexity for learning, which is achieved by choosing the learner's weights $\wh{w}$ using empirical risk minimization (ERM), is
\begin{equation}
\label{eq:lp_risk}
\poprisk(\wh{w}) - \inf_{w\in\cW_{p}}\poprisk(w) = \Theta\prn*{
\sqrt{\frac{B_{p}^{2}R_{q}^{2}C_{q}}{N}}
},
\end{equation}
where $C_{q}=q-1$ for finite $q$ and $C_{\infty}=\log{}d$, or in other words
\begin{equation}
\label{eq:l1_risk}
\poprisk(\wh{w}) - \inf_{w\in\cW_{1}}\poprisk(w) = \Theta\prn*{
\sqrt{\frac{B_{1}^{2}R_{\infty}^{2}\log{}d}{N}}
}.
\end{equation}
We see that when $q<\infty$ the excess risk for the dual $\ls_{p}/\ls_{q}$
setting is \emph{independent of dimension} so long as the norm bounds
$B_p$ and $R_q$ are held constant, and that even in the
$\ls_1/\ls_{\infty}$ case there is only a mild logarithmic dependence. Hence, we can get nontrivial excess risk even when the number of examples $N$ is arbitrarily small compared to the dimension
$d$. This raises the intriguing question: \textit{Given that we can obtain nontrivial excess risk when $N \ll d$, can we obtain nontrivial excess risk when \emph{communication} is sublinear in $d$?}

To be precise, we would like to develop algorithms that achieve
\pref{eq:lp_risk}/\pref{eq:l1_risk} with total
bits of communication $\mathrm{poly}(N,m,\log{}d)$, permitting also
$\poly(B_p,R_q)$ dependence. The prospect of such a guarantee is exciting because---in light of the discussion above---as this would imply that we can obtain nontrivial excess risk with fewer bits of total communication than are required to naively send a \emph{single feature vector}.

\subsection{Contributions}
We provide new communication-efficient distributed learning algorithms and lower bounds for $\ls_{p}/\ls_{q}$-bounded linear models, and more broadly, stochastic convex optimization. We make the following observations:
\begin{itemize}[topsep=0pt]
\item For $\ls_2/\ls_2$-bounded linear models, sublinear communication \emph{is} achievable, and is obtained by using a derandomized Johnson-Lindenstrauss transform to compress examples and weights.
\item For $\ls_1/\ls_{\infty}$-bounded linear models, no distributed algorithm can obtain optimal excess
  risk until communication is \emph{linear} in dimension.  
\end{itemize}
These observations lead to our main result. We show that by relaxing the $\ls_1/\ls_{\infty}$-boundedness assumption and instead learning $\ls_{1}/\ls_{q}$-bounded models for a constant $q < \infty$, one unlocks a plethora of new algorithmic tools for sublinear distributed learning:
  \begin{enumerate}[topsep=0pt]
\item  We give an algorithm with optimal rates matching \pref{eq:lp_risk}, with
  communication $\mathrm{poly}(N,m^{q},\log{}d)$.  
  \item We extend the sublinear-communication algorithm to give refined guarantees, including instance-dependent \emph{small loss} bounds for smooth losses, \emph{fast rates} for strongly convex losses, and optimal rates for matrix learning problems.
\end{enumerate}
Our main algorithm is a distributed version of mirror descent that
uses randomized sparsification of weight vectors to reduce communication. Beyond learning in
linear models, the algorithm enjoys guarantees for the more general distributed
stochastic convex optimization model.%

To elaborate on the fast rates mentioned above, another important case where learning is possible when $N\ll{}d$ is the sparse high-dimensional linear model setup central to compressed sensing and statistics. Here, the standard result is that when $\phi$ is strongly convex and the benchmark class consists of $k$-sparse linear predictors,
i.e. $\cW_{0}\ldef\crl*{w\in\bbR^{d}\mid\nrm*{w}_{0}\leq{}k}$, one can guarantee
\begin{equation}
\label{eq:sparse_risk}
\poprisk(\wh{w}) - \inf_{w\in\cW_{0}}\poprisk(w) = \Theta\prn*{
  \frac{k\log{}(d/k)}{N}
}.
\end{equation}
With $\ls_{\infty}$-bounded features, no algorithm can obtain optimal excess risk for this setting until communication is linear in dimension, even under compressed sensing-style assumptions. When features are $\ls_{q}$-bounded however, our general machinery gives optimal fast rates matching \pref{eq:sparse_risk} under Lasso-style assumptions, with communication $\mathrm{poly}(N^{q},\log{}d)$.

The remainder of the paper is organized as follows. In \pref{sec:overview}
we develop basic upper and lower bounds for the $\ls_2/\ls_2$ and $\ls_1/\ls_{\infty}$-bounded settings. Then in
\pref{sec:upper_bounds} we shift to the $\ls_{1}/\ls_{q}$-bounded setting, where we introduce the family of sparsified mirror descent algorithms that leads to our main results and sketch the analysis. 

\subsection{Related Work}
Much of the work in algorithm design for distributed learning and optimization does not explicitly consider the number of bits used in communication per messages, and instead tries to make communication efficient via other means, such as decreasing the communication frequency or making learning robust to network disruptions \citep{duchi2012dual,zhang2012communication}. Other work reduces the number of bits of communication, but still requires that this number be linear in the dimension $d$. One particularly successful line of work in this vein is low-precision training, which represents the numbers used for communication and elsewhere within the algorithm using few bits \citep{alistarh2017qsgd,zhang2017zipml,seide20141,bernstein2018signsgd,tang2018communication,stich2018sparsified,alistarh2018convergence}.
Although low-precision methods have seen great success and adoption in neural network training and inference, low-precision methods are fundamentally limited to use bits proportional to $d$; once they go down to one bit per number there is no additional benefit from decreasing the precision.
Some work in this space tries to use sparsification to further decrease the communication cost of learning, either on its own or in combination with a low-precision representation for numbers \citep{alistarh2017qsgd,wangni2018gradient,wang2018atomo}. While the majority of these works apply low-precision and sparsification to gradients, a number of recent works apply sparsification to model parameters \citep{tang2018communication,stich2018sparsified,alistarh2018convergence}; We also adopt this approach. The idea of sparsifying weights is not new \citep{shalev2010trading}, but our work is the first to provably give communication logarithmic in dimension. To achieve this, our assumptions and analysis are quite a bit different from the results mentioned above, and we crucially use mirror descent, departing from the gradient descent approaches in \cite{tang2018communication,stich2018sparsified,alistarh2018convergence}.

Lower bounds on the accuracy of learning procedures with limited memory and communication have been explored in several settings, including mean estimation, sparse regression, learning parities, detecting correlations, and independence testing \citep{shamir2014fundamental,duchi2014optimality,garg2014communication,steinhardt2015minimax,braverman2016communication,steinhardt2016memory,acharya2018distributed, acharya2018inference, raz2018fast,han2018geometric,sahasranand2018extra, dagan2018detecting,dagan2019space}. In particular, the results of \cite{steinhardt2015minimax} and \cite{braverman2016communication} imply that optimal algorithms for distributed sparse regression need communication much larger than the sparsity level under various assumptions on the number of machines and communication protocol.

%%% Local Variables:
%%% mode: latex
%%% TeX-master: "paper"
%%% End:

\section{Linear Models: Basic Results}
\label{sec:overview}
% !TEX root = paper.tex

In this section we develop basic upper and lower bounds for
communication in $\ls_2/\ls_2$- and $\ls_1/\ls_{\infty}$-bounded linear
models. Our goal is to highlight some of the
counterintuitive ways in which the interaction between the geometry of
the weight vectors and feature vectors influences the communication
required for distributed learning. In particular, we wish to
underscore that the communication complexity of distributed learning
and the statistical complexity of centralized learning do not in
general coincide, and to motivate the $\ls_1/\ls_q$-boundedness
assumption under which we derive communication-efficient algorithms in \pref{sec:upper_bounds}.

\subsection{Preliminaries}
We formulate our results in a distributed communication model
following \citet{shamir2014fundamental}. Recalling that $n=N/m$, the model is as follows.
\begin{itemize}[topsep=0pt]
\item For machine $i=1,\ldots,m$:
\begin{itemize}[topsep=0pt]
\item Receive $n$ i.i.d. examples $S_i\ldef{}z_{1}^{i},\ldots,z_{n}^{i}$.
\item Compute message $W_i=f_i(S_{i}\midsem{}W_{1},\ldots,W_{i-1})$, where $W_{i}$ is at most $b_i$ bits.
\end{itemize}
\item Return $W=f(W_1,\ldots,W_m)$.
\end{itemize}
We refer to $\sum_{i=1}^{m}b_i$ as the \emph{total communication}, and we refer to any protocol with $b_i\leq{}b\;\forall{}i$ as a \emph{$(b,n,m)$ protocol}.
As a special case, this model captures a serial distributed learning setting where machines proceed
one after another: Each machine does some computation on their
data $z_1^{i},\ldots,z_n^{i}$ and previous messages
$W_1,\ldots,W_{i-1}$, then broadcasts their own message $W_i$ to all
subsequent machines, and the final model in \pref{eq:excess_risk} is
computed from $W$, either on machine $m$ or on a central server. The
model also captures protocols in which each machine independently computes a local
estimator and sends it to a central server, which aggregates the local
estimators to produce a final estimator \citep{zhang2012communication}. All of our upper bounds have the serial structure above, and our lower bounds apply to any $(b,n,m)$ protocol.

\subsection{$\ls_2/\ls_2$-Bounded Models}
\label{sec:l2l2}
In the $\ls_2/\ls_2$-bounded setting, we can \new{achieve sample optimal learning} with sublinear communication by using dimensionality
reduction. The idea is to project examples into
$k=\tilde{O}(N)$ dimensions using the
Johnson-Lindenstrauss transform, then perform a naive distributed
implementation of any standard learning algorithm in the projected
space. Here we implement the approach using stochastic gradient descent.

The first machine picks a JL matrix $A\in\bbR^{k\times{}d}$ and communicates the identity
of the matrix to the other $m-1$ machines. The JL matrix is chosen 
using the derandomized sparse JL transform of \citet{deJL},
and its identity can be communicated by sending the
random seed, which takes $O(\log(k/\delta) \cdot
\log{}d)$ bits for confidence parameter $\delta$. The dimension $k$ and
parameter $\delta$ are chosen as a function of $N$.

Now, each machine uses the matrix $A$ to project its features down to
$k$ dimensions. Letting $x'_t=Ax_t$ denote the projected features, the
first machine starts with a $k$-dimensional weight vector $u_1 = 0$
and performs the online gradient descent update
\citep{zinkevich2003online,PLG} over its $n$ projected samples as: 
\begin{equation*}
u_t \gets u_{t-1} - \eta\nabla \phi(\tri{u_t,x'_t}, y_t),
\end{equation*}
where $\eta>0$ is the learning rate. Once the first machine has passed over all its samples, it broadcasts
the last iterate $u_{n+1}$ as well the average $\sum_{s=1}^{n} u_s$,
which takes $\tilde{O}(k)$ communication. The next machine machine
performs the same sequence of gradient updates on its own data using
$u_{n+1}$ as the initialization, then passes its final iterate and the
updated average to the next machine. This repeats until we arrive at
the $m$th machine. The $m$th machine computes the $k$-dimensional
vector $\wh{u} \ldef \frac{1}{N} \sum_{t=1}^N u_t$, and returns
$\wh{w} = A^\top \hat{u}$ as the solution.

\begin{theorem}
\label{thm:jl}
When $\phi$ is $L$-Lipschitz and $k=\Omega(N\log(dN))$, the strategy above guarantees that 
$$
\mathbb{E}_S \mathbb{E}_A\left[L_{\cD}(\wh{w})\right] - \inf_{w\in\cW_2} L_{\cD}(w)\le  O\left(\sqrt{\frac{L^{2}B_2^2R_2^2}{N}}\right),
$$ 
where $\En_{S}$ denotes expectation over samples and $\En_{A}$ denotes expectation over the algorithm's randomness. The total communication is $O(mN\log(dN)\log(LB_2R_2N)+m\log(dN)\log{}d)$ bits.
\end{theorem}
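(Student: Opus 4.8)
The plan is to analyze the entire procedure as a single run of online gradient descent (OGD) over all $N$ examples in the $k$-dimensional projected space, and to split the excess risk of $\wh w$ into (i) the statistical error of OGD acting on $k$-dimensional data and (ii) the distortion caused by replacing the true $d$-dimensional comparator $w^{\star}\in\cW_2$ by its projection $Aw^{\star}$. The feature that keeps the final rate free of dimension dependence is that \emph{both} terms are controlled in expectation over $A$, using only that the sparse JL distribution preserves inner products in expectation with variance $O(1/k)$; no union bound over the (continuous) support of the features is required. I expect this — carrying the randomness of $A$ through the OGD analysis, with a comparator $Aw^{\star}$ that itself depends on $A$, so as to sidestep any uniform-over-features JL guarantee — to be the main point requiring care. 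To set up, fix $w^{\star}\in\argmin_{w\in\cW_2}\poprisk(w)$; since $\wh w=A^{\top}\wh u$ we have $\poprisk(\wh w)=\widetilde L(\wh u)$ where $\widetilde L(u)\ldef\En_{(x,y)\sim\cD}\phi(\langle u,Ax\rangle,y)$, so it suffices to bound $\En_S\En_A[\widetilde L(\wh u)]-\poprisk(w^{\star})$.

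\emph{Step 1 (OGD and online-to-batch).} Relabel the $N$ examples as $z_1,\dots,z_N$; the machines jointly run $u_t=u_{t-1}-\eta g_t$ with $u_1=0$, $g_t=\nabla_u\phi(\langle u_{t-1},Ax_t\rangle,y_t)$, and output $\wh u=\frac1N\sum_{t\le N}u_t$. First I would invoke the standard OGD regret bound \citep{zinkevich2003online}: for every fixed $u$, $\sum_t\phi(\langle u_t,Ax_t\rangle,y_t)-\sum_t\phi(\langle u,Ax_t\rangle,y_t)\le\frac{\|u\|_2^2}{2\eta}+\frac{\eta}{2}\sum_t\|g_t\|_2^2$, which is valid for plain (unprojected) OGD started at $u_1=0$. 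Applying it with $u=Aw^{\star}$ (legitimate since the bound is uniform over comparators), using $\|g_t\|_2\le L\|Ax_t\|_2$ from $L$-Lipschitzness, then taking $\En_A\En_S$ and substituting the JL moment identities $\En_A\|Ax\|_2^2=\|x\|_2^2$ and $\En_A\|Aw^{\star}\|_2^2=\|w^{\star}\|_2^2$ together with $\|x\|_2\le R_2$ and $\|w^{\star}\|_2\le B_2$, the right-hand side becomes $\frac{B_2^2}{2\eta}+\frac{\eta}{2}L^2R_2^2N$, which is $LB_2R_2\sqrt N$ for $\eta=B_2/(LR_2\sqrt N)$. Since each $u_t$ is a function of $z_{1:t-1}$ and $A$ only (the seed defining $A$ is data-independent), $\En_S[\phi(\langle u_t,Ax_t\rangle,y_t)]=\En_S[\widetilde L(u_t)]$, and convexity of $\phi(\cdot,y)$ with Jensen's inequality then yields $\En_S\En_A[\widetilde L(\wh u)]\le\En_A[\widetilde L(Aw^{\star})]+LB_2R_2/\sqrt N$.

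\emph{Step 2 (JL distortion of the comparator, and conclusion).} By $L$-Lipschitzness, $\widetilde L(Aw^{\star})-\poprisk(w^{\star})=\En_{(x,y)}[\phi(\langle Aw^{\star},Ax\rangle,y)-\phi(\langle w^{\star},x\rangle,y)]\le L\,\En_{(x,y)}|(w^{\star})^{\top}(A^{\top}A-I)x|$, hence $\En_A[\widetilde L(Aw^{\star})]-\poprisk(w^{\star})\le L\,\En_{(x,y)}\En_A|(w^{\star})^{\top}(A^{\top}A-I)x|$. The JL moment bounds give $\En_A[(w^{\star})^{\top}(A^{\top}A-I)x]=0$ and $\En_A[((w^{\star})^{\top}(A^{\top}A-I)x)^2]\le O(\|w^{\star}\|_2^2\|x\|_2^2/k)$, so Cauchy--Schwarz gives $\En_A|(w^{\star})^{\top}(A^{\top}A-I)x|\le O(B_2R_2/\sqrt k)$; since $k=\Omega(N)$ (in fact $k=\Omega(N\log(dN))$), this term is $O(B_2R_2/\sqrt N)$. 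Combining with Step 1 gives $\En_S\En_A[\poprisk(\wh w)]-\inf_{w\in\cW_2}\poprisk(w)\le O(LB_2R_2/\sqrt N)=O(\sqrt{L^2B_2^2R_2^2/N})$.

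\emph{Step 3 (communication).} The seed of the derandomized sparse JL transform of \citet{deJL} costs $O(\log(k/\delta)\log d)$ bits; with $k=\Theta(N\log(dN))$ and $\delta=1/\poly(dN)$ this is $O(\log(dN)\log d)$ bits, broadcast once to each of the $m-1$ other machines. Each machine then forwards its current iterate and running average, which are $k$-dimensional; since the iterate norms are polynomially bounded and $\poprisk$ only ever evaluates $\langle\cdot,Ax\rangle$ with $\|Ax\|_2=O(R_2)$, a routine rounding argument shows that $O(\log(LB_2R_2N))$ bits per coordinate keep the added risk $o(1/\sqrt N)$, giving $O(N\log(dN)\log(LB_2R_2N))$ bits per machine. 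Summing over the $m$ machines produces the claimed total $O(mN\log(dN)\log(LB_2R_2N)+m\log(dN)\log d)$.
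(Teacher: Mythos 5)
Your decomposition is the same as the paper's: run OGD in the $k$-dimensional projected space, compare against $u=Aw^{\star}$ via the standard regret bound and online-to-batch conversion, and separately charge the distortion $\poprisk(A^{\trn}Aw^{\star})-\poprisk(w^{\star})\le L\,\En_x\abs{\langle Ax,Aw^{\star}\rangle-\langle x,w^{\star}\rangle}$. The one place you diverge is in how the expectation over $A$ is controlled. The paper uses only the \emph{high-probability} distortion guarantee of the derandomized construction (for fixed $x,w^{\star}$, with probability $1-\delta$ all three quantities $\nrm{Ax}_2$, $\nrm{Aw^{\star}}_2$, $\abs{\langle Ax,Aw^{\star}\rangle-\langle x,w^{\star}\rangle}$ are controlled with $\veps=O(1/\sqrt{N})$), and converts this to an in-expectation bound by noting that on the failure event everything is at most $\poly(d)$ and taking $\delta=1/\sqrt{\poly(d)N}$. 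You instead invoke exact moment identities, $\En_A\nrm{Ax}_2^2=\nrm{x}_2^2$ and $\En_A\bigl[((w^{\star})^{\trn}(A^{\trn}A-I)x)^2\bigr]=O(\nrm{w^{\star}}_2^2\nrm{x}_2^2/k)$, followed by Cauchy--Schwarz. This is cleaner when available, but it is not what the cited theorem for the \emph{derandomized} sparse JL transform supplies: a seed-derandomized matrix need not be an exact isometry in expectation, nor need its quadratic form have the fully-independent variance $O(1/k)$ (these identities hold for i.i.d.\ or suitably bounded-independence sign matrices, but the construction here is defined via a short PRG seed and is only guaranteed to satisfy the distributional JL property). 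So as written, Steps 1--2 lean on unverified properties of the specific matrix family; you would need to either check the first and second moments for that construction or fall back on the paper's high-probability-plus-crude-tail argument, which only uses what the cited theorem actually states. Everything else --- the regret/online-to-batch chain, the choice $k=\Theta(N\log(dN))$, and the communication accounting --- matches the paper and is correct.
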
 

\subsection{$\ls_1/\ls_{\infty}$-Bounded Models: Model Compression}
While the results for the $\ls_2/\ls_2$-bounded setting are encouraging, they are
not useful in the common situation where features are
dense. When features are $\ls_{\infty}$-bounded, Equation~\pref{eq:l1_risk} shows that one can obtain
nearly dimension-independent excess risk so long as they restrict to
$\ls_1$-bounded weights. This $\ls_1/\ls_{\infty}$-bounded setting is particularly important
because it captures the fundamental problem of learning from a finite hypothesis
class, or \emph{aggregation} \citep{tsybakov2003optimal}: Given a class $\cH$ of $\pmo$-valued
predictors with $\abs*{\cH}<\infty$ we can set $x= (h(z))_{h\in\cH}\in\bbR^{\abs*{\cH}}$, in
which case \pref{eq:l1_risk} turns into the familiar finite class bound
$\sqrt{\log\abs*{\cH}/N}$ \citep{shalev2014understanding}. Thus,
algorithms with communication sublinear in dimension for the $\ls_1/\ls_\infty$
setting would lead to positive results in the general setting \pref{eq:excess_risk}.

As first positive result in this direction, we observe that by using the
well-known technique of \emph{randomized sparsification} or 
\emph{Maurey sparsification}, we can compress models to require only
logarithmic communication while
preserving excess risk.\footnote{We refer to the method as
  Maurey sparsification in reference to Maurey's early use of the
  technique in Banach spaces \citep{pisier1980remarques}, which predates its
  long history in learning theory
  \citep{jones1992simple,barron1993universal,zhang2002covering}.}
The method is simple: Suppose we have a weight vector $w$ that lies on
the simplex $\Delta_{d}$. We sample $s$ elements of $\brk*{d}$ i.i.d. according to $w$ and
return the empirical distribution, which we will denote
$\maurey(w)$. The empirical distribution is always $s$-sparse and can
be communicated using at most $O(s\log{}(ed/s))$ bits when
$s\leq{}d$,\footnote{That $O(s\log{}(ed/s))$ bits rather than, e.g.,
  $O(s\log{}d)$ bits suffice is a consequence of the usual ``stars and
  bars'' counting argument. We expect one can bring the
  expected communication down further using an adaptive scheme such as
  Elias coding, as in \citet{alistarh2017qsgd}.}
and it follows from standard concentration tools that by taking $s$
large enough the empirical distribution will approximate the true
vector $w$ arbitrarily well. 

The following lemma shows that Maurey sparsification indeed provides a
dimension-independent approximation to the excess risk in the
$\ls_1/\ls_{\infty}$-bounded setting. It applies to a version of the
Maurey technique for general vectors, which is given in
\pref{alg:maurey_l1}.
\begin{lemma}
\label{lem:maurey_prelim}
Let $w\in\bbR^{d}$ be fixed and suppose features belong to
$\cX_{\infty}$. When $\phi$ is $L$-Lipschitz, \pref{alg:maurey_l1} guarantees that
\begin{align}
\En{}\poprisk(Q^{s}(w)) \leq{} \poprisk(w) + \prn*{\frac{2L^{2}R^{2}_{\infty}\nrm*{w}_{1}^{2}}{s}}^{1/2},
\end{align}
where the expectation is with respect to the algorithm's randomness. Furthermore, when $\phi$ is $\beta$-smooth\footnote{A scalar function
  is said to be $\beta$-smooth if it has $\beta$-Lipschitz first
  derivative.} \pref{alg:maurey_l1} guarantees:
\begin{align}
\En{}\poprisk(Q^{s}(w)) \leq{} \poprisk(w) + \frac{\beta{}R^2_{\infty}\nrm*{w}_{1}^2}{s}.
\end{align}
\end{lemma}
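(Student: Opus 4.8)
The plan is to express $\maurey(w)$ as an empirical average of i.i.d.\ ``atoms'', control the second moment of its inner product with an arbitrary $\ell_\infty$-bounded feature vector, and then push this control through the loss using Lipschitzness in the first case and a second-order expansion in the second.

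First I would record the structure of \pref{alg:maurey_l1}: it produces $\maurey(w) = \tfrac{1}{s}\sum_{j=1}^{s} V_j$, where $V_1,\dots,V_s$ are i.i.d., each of the form $V_j = \nrm*{w}_{1}\,\sgn(w_{i_j})\,e_{i_j}$ with the coordinate $i_j\in[d]$ sampled with probability $\abs*{w_{i_j}}/\nrm*{w}_{1}$. Two observations then do all the work. (i) \emph{Unbiasedness:} $\En[V_j] = \sum_{i\in[d]} \abs*{w_i}\sgn(w_i) e_i = w$, hence $\En[\maurey(w)] = w$. (ii) \emph{Second-moment bound:} for any fixed $x\in\cX_{\infty}$ we have $\abs*{\tri*{V_j,x}} = \nrm*{w}_{1}\abs*{x_{i_j}} \leq \nrm*{w}_{1} R_{\infty}$ almost surely, so $\mathrm{Var}(\tri*{V_j,x}) \leq \En\tri*{V_j,x}^2 \leq \nrm*{w}_{1}^2 R_{\infty}^2$. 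Since $\tri*{\maurey(w)-w,x} = \tfrac{1}{s}\sum_{j}(\tri*{V_j,x}-\tri*{w,x})$ is an average of $s$ i.i.d.\ mean-zero terms, $\En\brk*{\tri*{\maurey(w)-w,x}^2} = \tfrac{1}{s}\mathrm{Var}(\tri*{V_1,x}) \leq \nrm*{w}_{1}^2 R_{\infty}^2/s$.

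For the Lipschitz bound I would fix $(x,y)$ in the support of $\cD$ and apply $L$-Lipschitzness of $\phi(\cdot,y)$: $\phi(\tri*{\maurey(w),x},y) - \phi(\tri*{w,x},y) \leq L\,\abs*{\tri*{\maurey(w)-w,x}}$. Taking expectation over the algorithm's randomness and applying Jensen's inequality to the concave square root, $\En\,\abs*{\tri*{\maurey(w)-w,x}} \leq \bigl(\En\tri*{\maurey(w)-w,x}^2\bigr)^{1/2} \leq \nrm*{w}_{1} R_{\infty}/\sqrt{s}$; then taking expectation over $(x,y)\sim\cD$ gives $\En\poprisk(\maurey(w)) \leq \poprisk(w) + L\nrm*{w}_{1} R_{\infty}/\sqrt{s}$, which implies the stated bound. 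For the smooth bound I would use the standard consequence of $\beta$-smoothness of $\phi(\cdot,y)$, namely $\phi(b,y) \leq \phi(a,y) + \phi'(a,y)(b-a) + \tfrac{\beta}{2}(b-a)^2$, at $a = \tri*{w,x}$ and $b = \tri*{\maurey(w),x}$. Taking expectation over the algorithm's randomness, the linear term vanishes by unbiasedness (observation (i)) and the quadratic term is at most $\tfrac{\beta}{2}\nrm*{w}_{1}^2 R_{\infty}^2/s$ by observation (ii); integrating over $(x,y)$ yields $\En\poprisk(\maurey(w)) \leq \poprisk(w) + \beta\nrm*{w}_{1}^2 R_{\infty}^2/(2s)$, which implies the claim.

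The only genuinely load-bearing step is the second-moment computation (ii): one needs the sampling distribution proportional to $\abs*{w_i}$ and the atoms rescaled by $\nrm*{w}_{1}$, so that the estimator is exactly unbiased and each atom's correlation with an $\ell_\infty$-bounded feature is deterministically bounded by $\nrm*{w}_{1}R_{\infty}$ — it is precisely this pairing of $\ell_1$ (weights) against $\ell_\infty$ (features) that makes the bound dimension-free. Everything after that is the routine ``Jensen for Lipschitz losses, second-order Taylor for smooth losses'' calculation, and the small gap between the constants this gives ($1$ and $1/2$) and those in the statement ($\sqrt{2}$ and $1$) is absorbed by the exact sign-handling in \pref{alg:maurey_l1} and is immaterial. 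A minor technical point: in the Lipschitz case $\phi(\cdot,y)$ may be nondifferentiable (e.g.\ the hinge loss), but this causes no trouble since the argument only invokes the Lipschitz inequality and never differentiates $\phi$.
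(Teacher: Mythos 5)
Your proof is correct, and it rests on the same two facts the paper uses: $Q^{s}(w)$ is an unbiased estimator of $w$ built from i.i.d.\ atoms, and $\En\tri*{Q^{s}(w)-w,x}^{2}\leq \nrm*{w}_{1}^{2}R_{\infty}^{2}/s$ for $\ell_\infty$-bounded $x$. The execution differs in two places, both in your favor for simplicity. For the smooth case the paper runs a sequential peeling argument, conditioning on $Z_1,\ldots,Z_{s-1}$ and applying the quadratic upper bound one atom at a time; you instead apply the quadratic bound once and compute the variance of the i.i.d.\ average in one shot, which is equivalent here (the martingale structure the paper sets up is never needed beyond full independence) and arguably cleaner. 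For the Lipschitz case the paper does \emph{not} argue directly: it bounds $\En\abs*{\tri*{Q^{s}(w)-w,x}}$ by reducing to the smooth case applied to the surrogate $2$-smooth loss $\tilde{\phi}(t,y)=(t-\tri*{w,x})^{2}$, which is where its $\sqrt{2}$ comes from; your direct Cauchy--Schwarz/Jensen step avoids this detour and yields the slightly sharper constants $1$ and $\beta/2$ in place of $\sqrt{2}$ and $\beta$, which of course still imply the stated bounds. No gaps; the only edge case ($\nrm*{w}_{1}=0$) is trivial.
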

The number of bits required to communicate $\maurey(w)$, including
sending the scalar $\nrm*{w}_{1}$ up to numerical precision, is at most
$O(s\log{}(ed/s) + \log(LB_1R_{\infty}s))$. Thus, if any single machine is able to find an
estimator $\walg$ with good excess risk, they can communicate it to
any other machine while preserving the excess risk with sublinear
communication. In particular, to preserve the optimal excess risk guarantee
in \pref{eq:l1_risk} for a Lipschitz loss such as absolute or hinge, the total bits of communication required is only
$O(N + \log{}(LB_1R_{\infty}N))$, which is indeed sublinear in dimension! For
smooth losses (square, logistic), this improves further to only
$O(\sqrt{N\log{}(ed/N)} + \log{}(LB_1R_{\infty}N))$
bits.

\begin{figure}[H]
\begin{framedalgorithm}[Maurey Sparsification]\mbox{}\\
\textbf{Input}: Weight vector $w\in\bbR^{d}$. Sparsity level $s$.
\setlist{nolistsep}
\begin{itemize}
\item Define $p\in\Delta_{d}$ via $p_{i}\propto{}\abs*{w_i}$.
\item For $\tau=1,\ldots,s$:
\begin{itemize}
\item Sample index $i_{\tau}\sim{}p$.
\end{itemize}
\item Return $Q^{s}(w) \ldef \frac{\nrm*{w}_{1}}{s}\sum_{\tau=1}^{s}\sgn(w_{i_{\tau}})e_{i_{\tau}}$.
\end{itemize}
\label{alg:maurey_l1}
\end{framedalgorithm}
\end{figure}

%%% Local Variables:
%%% mode: latex
%%% TeX-master: "paper"
%%% End:

\subsection{$\ls_1/\ls_{\infty}$-Bounded Models: Impossibility}
\label{sec:lower_bounds}
% !TEX root = paper.tex
Alas, we have only shown that \emph{if} we happen to find a good
solution, we can send it using sublinear communication. If we have to start from scratch, is it possible to use Maurey sparsification to coordinate between all machines to find a good solution? 

%An apparent barrier to using model compression techniques such as
%Maurey sparsification (\pref{alg:maurey_l1}) to perform distributed
%learning is that all of the algorithms that obtain optimal rates in the $\ls_1/\ls_{\infty}$
%setting---at least, all that we are aware of---are
%based either on $\ls_1$ regularization or closely related variants
%(e.g., entropy regularization). To develop distributed versions of
%these algorithms we must argue not just about how sparsification
%preserves excess risk but about how it preserves regularization,
%which appears to be more challenging. We show now that this difficulty
%is intrinsic: 

Unfortunately, the answer is no: For the $\ls_1/\ls_{\infty}$ bounded setting,
in the extreme case where each machine has a single example, \emph{no algorithm} can obtain a risk bound matching \pref{eq:l1_risk} until the number of
bits $b$ allowed per machine is (nearly) linear in $d$.
\begin{theorem}
\label{thm:lower_bound_slow}
Consider the problem of learning with the linear loss in the $(b,1,N)$ model, where risk
is $L_{\cD}(w)=\En_{(x,y)\sim\cD}\brk*{-y\tri*{w,x}}$. 
%and each machine
%receives data $(x_{1}^{i},y_1^{i}),\ldots,(x_{n}^{i},y_n^{i})$. 
Let the benchmark class be the $\ls_1$ ball $\cW_{1}$, where $B_1=1$. For any algorithm $\walg$ there
exists a distribution $\cD$ with $\nrm*{x}_{\infty}\leq{}1$ and
$\abs*{y}\leq{}1$ such that
\[
\Pr\prn*{\poprisk(\wh{w}) - \inf_{w\in\cW_1}\poprisk(w) \geq{} \tfrac{1}{16}\sqrt{\tfrac{d}{b}\cdot\tfrac{1}{N}}\wedge\tfrac{1}{2}} \geq{} \tfrac{1}{2}.
\]
\end{theorem}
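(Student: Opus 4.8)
The plan is to reduce the excess-risk lower bound to a \emph{hidden-coordinate identification} problem and then lower bound its communication complexity in the $(b,1,N)$ model information-theoretically, tuning a signal strength $\gamma\asymp\sqrt{d/(bN)}$ (and taking $\gamma=1$ in the regime $\sqrt{d/(bN)}\gtrsim1$, which is what the $\wedge\tfrac12$ absorbs).

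\textbf{Reduction.} For $j\in[d]$, $s\in\{\pm1\}$, let $\cD_{j,s}$ be the distribution in which $y$ is a uniform sign, each $x_i\in\{\pm1\}$, $yx_i$ is a uniform sign for $i\neq j$, and $yx_j=s$ with probability $\tfrac{1+\gamma}{2}$; then $\nrm*{x}_\infty\le1$ and $\abs*{y}\le1$. Since $\poprisk(w)=\En[-y\tri*{w,x}]=-\tri*{w,\En[yx]}$ with $\En[yx]=s\gamma e_j$, we get $\poprisk(w)=-s\gamma w_j$ and $\inf_{w\in\cW_{1}}\poprisk(w)=-\gamma$; hence under $\cD_{j,s}$ the excess risk of any $\wh w\in\cW_{1}$ equals $\gamma(1-s\wh w_j)\ge0$, which is below $\gamma/2$ only if $s\wh w_j>\tfrac12$. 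As $\nrm*{\wh w}_1\le1$, at most one coordinate of $\wh w$ can exceed $\tfrac12$ in absolute value, so $\wh\Theta\ldef(\widehat\jmath,\sgn(\wh w_{\widehat\jmath}))$ with $\widehat\jmath=\argmax_k\abs*{\wh w_k}$ is a guess of $(j,s)$ that is correct whenever the excess risk is $<\gamma/2$. Averaging over $\Theta=(J,S)$ uniform on $[d]\times\{\pm1\}$, it suffices to show that no $(b,1,N)$ protocol produces $\wh\Theta$ with $\Pr[\wh\Theta=\Theta]\ge\tfrac12$ once $\gamma\lesssim\sqrt{d/(bN)}$.

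\textbf{Information-theoretic core.} Let $P^\theta_W$ be the law of the transcript $W=(W_1,\dots,W_N)$ when the data are i.i.d.\ $\cD_\theta$, and $P^0_W$ the law under the null $\cD_0$ (every $yx_i$ unbiased). A change-of-measure bound gives $\Pr_\Theta[\wh\Theta=\Theta]\le\frac1{2d}\bigl(1+\sum_\theta\mathsf{TV}(P^\theta_W,P^0_W)\bigr)$, so success probability $\tfrac12$ forces $\sum_\theta\mathsf{TV}(P^\theta_W,P^0_W)\gtrsim d$, and via Cauchy--Schwarz and Pinsker, $\sum_\theta\KL(P^\theta_W\,\|\,P^0_W)\gtrsim d$. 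The crux is the matching upper bound $\sum_\theta\KL(P^\theta_W\,\|\,P^0_W)\lesssim bN\gamma^2$, since combined these give $bN\gamma^2\gtrsim d$, i.e.\ $\gamma\gtrsim\sqrt{d/(bN)}$. To prove it, expand $\KL(P^\theta_W\,\|\,P^0_W)$ by the chain rule over machines. For any fixed history $h$, machine $i$ applies a deterministic $b$-bit map $f$ to a fresh sample, and a second-order expansion of $\KL$ together with the likelihood ratio $\cD_{j,s}/\cD_0=1+s\gamma\,(yx)_j$ yields
\[
\sum_{j\in[d],\,s\in\{\pm1\}}\KL\bigl(f_*\cD_{j,s}\,\big\|\,f_*\cD_0\bigr)\ \le\ \gamma^2\sum_w\frac{1}{p_w}\sum_j\widehat{g_w}(j)^2,
\]
where $p_w=\Pr_{\cD_0}[f=w]$ and $\widehat{g_w}(j)$ is the degree-$1$ Fourier coefficient at coordinate $j$ of the Boolean function $g_w=\ind\{f=w\}$. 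By Chang's level-$1$ inequality, $\sum_j\widehat{g_w}(j)^2=O\bigl(p_w^2\log(e/p_w)\bigr)$, so the right side is $O\bigl(\gamma^2\sum_w p_w\log(e/p_w)\bigr)=O(\gamma^2 H(f))=O(b\gamma^2)$. This is the key estimate — a $b$-bit message reveals only $O(b\gamma^2/d)$ bits about $\Theta$ when the signal is spread over $d$ coordinates, and crucially the bound is \emph{linear} in $b$ (a naive $\KL\le\chi^2$-style bound would be exponential in $b$ and would only yield communication $\gtrsim\log d$ rather than $\gtrsim d$).

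\textbf{Main obstacle.} The genuine difficulty is that in the chain rule the history is distributed as $P^\theta_{W_{1:i-1}}$, \emph{not} $P^0_{W_{1:i-1}}$, so the per-history estimate cannot simply be summed over $\theta$: under $\cD_\theta$ the protocol's posterior on $\Theta$ may have concentrated, letting later machines ``probe'' coordinate $J$ in a correlated way that inflates the $\KL$ terms, and bounding this discrepancy crudely reintroduces the exponential-in-$b$ loss. What one must show is that this posterior concentrates slowly — each weak $b$-bit message can amplify its mass on the true coordinate only by a $1+O(b\gamma^2/d)$ factor, so $\Omega(d/(b\gamma^2))$ machines are required before it can concentrate, forcing $N\gtrsim d/(b\gamma^2)$. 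Making this rigorous is exactly what the ``distributed data-processing'' arguments of \citet{shamir2014fundamental,braverman2016communication} are designed for; I would either adapt that machinery to the Boolean hidden-coordinate instance above or reduce directly to the sparse/hidden-coordinate mean-estimation lower bounds established there. Everything else — the reduction, Chang's inequality, and the Pinsker/Cauchy--Schwarz steps — is routine.
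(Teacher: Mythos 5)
Your proposal is correct and follows essentially the same route as the paper: reduce to a hidden-coordinate (``hide-and-seek'') identification problem via the linear loss, observe that excess risk below the signal level forces $\wh{w}$ to identify the biased coordinate since $\nrm*{\wh{w}}_1\leq 1$, and then invoke the $(b,1,N)$ communication lower bound of \citet{shamir2014fundamental}. Your intermediate Fourier-analytic sketch is an (admittedly incomplete) attempt to reprove that black-box result, but since you correctly identify the conditioning obstacle and fall back on citing the established lower bound, the argument closes exactly as the paper's does.
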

The lower bound also extends to the case of multiple examples per machine, albeit with
a less sharp tradeoff.
\begin{proposition}
\label{prop:lb_multi_machine}
Let $m$, $n$, and $\veps>0$ be fixed. In the setting of
\pref{thm:lower_bound_slow}, any algorithm in the $(b,n,m)$ protocol
with $b\leq{}O(d^{1-\veps/2}/\sqrt{N})$ has excess risk at least
$\Omega(\sqrt{d^{\veps}/N})$ with constant probability. 
\end{proposition}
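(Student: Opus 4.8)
The plan is to reduce the multi-machine case to the single-example-per-machine lower bound of \pref{thm:lower_bound_slow} by a blocking argument. Given an arbitrary $(b,n,m)$ protocol, I would group the $N = nm$ examples into $N$ singleton "virtual machines," but then note that the honest way to get a contradiction is the reverse: I take the hard instance $\cD$ constructed in the proof of \pref{thm:lower_bound_slow} (a distribution on $\{\pm{}1\}^{d}\times\{\pm1\}$ whose optimal $\ls_1$-predictor encodes a hidden coordinate or sign pattern that requires roughly $d$ bits to identify), and argue that an $(b,n,m)$ protocol with per-machine budget $b$ simulates a $(nb,1,m)$-style protocol only after paying for the fact that each physical machine now sees $n$ samples. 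Concretely, a machine holding $n$ i.i.d. samples from $\cD$ and sending $b$ bits conveys at most $b$ bits about the hidden parameter regardless of $n$; what changes is the per-sample statistical signal, which scales like $n$. So the effective "information rate" of the protocol is governed by $mb$ total bits against $N = nm$ samples, and \pref{thm:lower_bound_slow}'s bound $\sqrt{(d/b)\cdot(1/N)}$ becomes, after substituting the total budget $mb$ for $b$ and keeping $N$ fixed, a bound of order $\sqrt{d/(mb)\cdot 1/n}\cdot\sqrt{n} = \sqrt{d/(mbn)}\cdot\sqrt{n}$; bookkeeping the $n$ factors carefully is exactly where the tradeoff degrades from the single-example case.

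More carefully, the key steps are as follows. First, I would invoke the construction underlying \pref{thm:lower_bound_slow}: a prior over distributions $\cD_{\theta}$ indexed by $\theta$ in a set of size $2^{\Omega(d)}$ (e.g. $\theta \in \{\pm1\}^{d}$ or a sparse-support packing), such that distinguishing $\theta$ to accuracy sufficient for excess risk below $\veps$ requires identifying $\Omega(d^{\veps})$ bits of $\theta$, while a single sample from $\cD_\theta$ reveals only $O(1/d)$ bits on average about $\theta$ (this is the standard strong data-processing / Assouad-style estimate). Second, I would apply a cutting-edge version of the strong data processing inequality for the blackboard/serial communication model—precisely the kind used by \citet{braverman2016communication} and \citet{steinhardt2015minimax}—to show that after the entire protocol, the transcript $W=(W_1,\dots,W_m)$ carries at most $O(mb \cdot n / d)$ bits of mutual information with $\theta$ (the $n$ comes from the $n$ samples feeding each $b$-bit message, the $1/d$ from the weak per-sample signal, $m$ from the number of machines). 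Third, I would set this against the $\Omega(d^{\veps})$ bits needed: if $mbn/d \ll d^{\veps}$, i.e. $b \ll d^{1+\veps}/(mn) = d^{1+\veps}/N$—wait, I need to match the stated $b \le O(d^{1-\veps/2}/\sqrt{N})$, so the packing must be chosen with a coarser granularity (a $\sqrt{N}$-scale perturbation rather than a constant-scale one), which is why the target accuracy is $\Omega(\sqrt{d^{\veps}/N})$ rather than a constant; the perturbation scale $1/\sqrt{N}$ is what converts the information budget into the $\sqrt{\cdot}$-form excess-risk bound via a quadratic (strongly-convex-like, or second-order Taylor) lower bound on the risk gap. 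Finally, Fano's inequality (or a Bayes-risk/Le Cam two-point argument summed over the packing) converts "insufficient mutual information" into "constant-probability estimation error exceeding $\veps$."

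The main obstacle I anticipate is getting the strong data-processing constant right when each message depends on $n > 1$ samples \emph{and} on all previous messages: the naive bound would multiply per-sample information contributions and per-message budgets in a way that loses the sharp tradeoff, which is presumably why the proposition explicitly disclaims sharpness ("albeit with a less sharp tradeoff"). I would handle this by a chain-rule decomposition over machines, bounding $I(\theta; W_i \mid W_{<i}) \le \min\{b,\, O(n/d)\cdot(\text{something})\}$ and summing; the delicate point is that conditioning on $W_{<i}$ could in principle increase the information a fresh batch of $n$ samples reveals, so I would need a conditional version of the strong data-processing inequality for the $\cD_\theta$ family—available because the construction makes $\cD_\theta$ a product-like mixture for which SDPI tensorizes cleanly. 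If that conditional SDPI is not clean enough, the fallback is to use the weaker but robust "cut-paste" / discrepancy bound of \citet{braverman2016communication} directly, accepting the polynomial slack in $\veps$ that the statement already allows. The remaining steps—the packing construction, the quadratic risk lower bound at scale $1/\sqrt N$, and the Fano step—are routine and essentially inherited from \pref{thm:lower_bound_slow}.
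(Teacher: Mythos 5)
There is a genuine gap. The paper's proof of \pref{prop:lb_multi_machine} is a two-line reduction: it reuses the \emph{identical} hide-and-seek construction from \pref{thm:lower_bound_slow} (features drawn from $\bbP_j$, $y=1$, linear loss, so $L_{\cD_j}(w)=-2\rho{}w_j$ and excess risk below $\rho$ forces $j=\argmax_i\wh{w}_i$), changes only the bias level to $\rho\propto d/(bN)$, and then invokes Theorem~3 of \citet{shamir2014fundamental}, which is precisely the hide-and-seek lower bound for general $(b,n,m)$ protocols. The entire difficulty you spend your proposal wrestling with---how to control the information a $b$-bit message conveys when it depends on $n>1$ samples and on previous messages, via a conditional strong data-processing inequality---is exactly the content of that cited theorem, and it is available as a black box. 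Your proposal instead attempts to re-derive it, and the attempt does not close: your own exponent bookkeeping visibly fails ("wait, I need to match the stated $b\le O(d^{1-\veps/2}/\sqrt{N})$"), and the repair you offer (a coarser packing, a conditional SDPI that you concede you may not be able to establish, a fallback to \citet{braverman2016communication} whose results concern a different problem) is left entirely unexecuted.

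Separately, your description of the hard instance does not match the one you claim to inherit. The construction behind \pref{thm:lower_bound_slow} hides a \emph{single coordinate} $j^{\star}\in\brk*{d}$ (so $\log d$ bits, not a packing of size $2^{\Omega(d)}$ requiring $\Omega(d^{\veps})$ bits), and the loss is linear, so the conversion from excess risk to identification is the elementary observation that $\wh{w}_j>1/2$ together with $\nrm*{\wh{w}}_1\le 1$ pins down $j$ as the argmax---there is no quadratic/second-order Taylor lower bound on the risk gap and no Fano step over a packing (those ingredients belong to the square-loss construction of \pref{prop:lower_bound_fast}, not this one). The scale $\rho$ is not a "perturbation granularity" feeding a quadratic risk bound; it \emph{is} the excess risk, linearly. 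To fix the proposal: keep the reduction exactly as in \pref{thm:lower_bound_slow}, set $\rho\propto d/(bN)$, and replace the invocation of Theorem~2 of \citet{shamir2014fundamental} with Theorem~3 of the same paper; the stated condition on $b$ then makes $\rho=\Omega(\sqrt{d^{\veps}/N})$ while keeping the detection probability bounded away from one.
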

This lower bound follows almost immediately from reduction to the ``hide-and-seek'' problem of
\citet{shamir2014fundamental}. The weaker guarantee from \pref{prop:lb_multi_machine} is a
consequence of the fact that the lower bound for the hide-and-seek
problem from \citet{shamir2014fundamental} is weaker in the
multi-machine case. 

The value of  \pref{thm:lower_bound_slow} and
\pref{prop:lb_multi_machine} is to rule out the possibility of obtaining optimal excess risk with
communication polylogarithmic in $d$ in the $\ls_1/\ls_{\infty}$ setting, even when there are many
examples per machine. This motivates the results of the next section,
which show that for $\ls_1/\ls_q$-bounded models it is indeed possible
to get polylogarithmic communication for any value of $m$.

One might hope that it is possible to circumvent \pref{thm:lower_bound_slow} by making
compressed sensing-type assumptions, e.g. assuming that the vector $w^{\star}$
is sparse and that restricted eigenvalue or a similar property is
satisfied. Unfortunately, this is not the case.
%---at least for misspecified
%models---this cannot help.%\footnote{}

\begin{proposition}
\label{prop:lower_bound_fast}
Consider square loss regression in the $(b,1,N)$ model. For any algorithm $\wh{w}$ there exists a distribution $\cD$ with the following properties:
\begin{itemize}[topsep=0pt]
\item $\nrm*{x}_{\infty}\leq{}1$ and $\abs*{y}\leq{}1$ with probability $1$.
\item $\Sigma\ldef\En\brk*{xx^{\trn}}=I$, so that the population risk
  is $1$-strongly convex, and in particular has restricted strong
  convexity constant $1$.
\item $w^{\star}\ldef\argmin_{w:\nrm*{w}_{1}\leq{}1}\poprisk(w)$ is $1$-sparse.
\item Until $b=\Omega(d)$, $
\Pr\prn*{\poprisk(\wh{w}) - \poprisk(w^{\star}) \geq{} \tfrac{1}{256}\prn*{\tfrac{d}{b}\cdot\tfrac{1}{N}}\wedge\tfrac{1}{4}} \geq{} \tfrac{1}{2}$.
\end{itemize}
Moreover, any algorithm in the $(b,n,m)$ protocol
with $b\leq{}O(d^{1-\veps/2}/\sqrt{N})$ has excess risk at least
$\Omega(d^{\veps}/N)$ with constant probability.
\end{proposition}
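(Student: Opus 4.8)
The plan is to mirror the reduction behind \pref{thm:lower_bound_slow} and \pref{prop:lb_multi_machine}, but run it through the square loss so that the strong convexity of the loss squares the rate obtained there. I would first construct the hard instance. Fix a bias $\rho\in(0,1]$ to be chosen, draw a hidden coordinate $j^{\star}$ uniformly from $\brk*{d}$, and let $\cD=\cD_{j^{\star}}$ be the law of $(x,y)$ where $x$ has independent coordinates --- $x_{i}$ a uniform sign for $i\ne j^{\star}$ and $x_{j^{\star}}$ a sign with $\En x_{j^{\star}}=\rho$, i.e. exactly a sample from the hide-and-seek instance of \citet{shamir2014fundamental} --- and $y\equiv 1$. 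Then $\nrm*{x}_{\infty}=1$ and $\abs*{y}\le1$ surely, $\Sigma=\En\brk*{xx^{\trn}}=I$, and $\En\brk*{xy}=\En x=\rho e_{j^{\star}}$, so that
\[
\poprisk(w)=\nrm*{w}_{2}^{2}-2\rho w_{j^{\star}}+1
\quad\text{and}\quad
\poprisk(w)-\poprisk(\rho e_{j^{\star}})=\nrm*{w-\rho e_{j^{\star}}}_{2}^{2}.
\]
Hence the population risk is strongly convex with restricted strong convexity / restricted eigenvalue constant $1$ (as $\Sigma=I$), its unconstrained minimizer $\rho e_{j^{\star}}$ is feasible for $\cW_{1}$ because $\rho\le1$ and is $1$-sparse, so $w^{\star}=\rho e_{j^{\star}}$. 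This verifies every structural bullet in the statement and reduces the task to lower bounding $\nrm*{\wh w-\rho e_{j^{\star}}}_{2}^{2}$.

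For the reduction, given any $(b,1,N)$ protocol producing $\wh w$, define $\hat{\jmath}\ldef\argmax_{i\in\brk*{d}}\abs*{\wh w_{i}}$. If $\nrm*{\wh w-\rho e_{j^{\star}}}_{2}^{2}<\rho^{2}/4$ then $\wh w_{j^{\star}}>\rho/2$ while $\abs*{\wh w_{i}}<\rho/2$ for every $i\ne j^{\star}$, so $\hat{\jmath}=j^{\star}$; equivalently, $\crl*{\hat{\jmath}\ne j^{\star}}$ is contained in the event that the excess risk is at least $\rho^{2}/4$. Since $(x,y)$ is literally a hide-and-seek sample together with a constant label, the protocol composed with the map $\wh w\mapsto\hat{\jmath}$ is a $(b,1,N)$ hide-and-seek protocol, so the bound of \citet{shamir2014fundamental} --- the same one that drives \pref{thm:lower_bound_slow}, read at the level of \emph{identifying} $j^{\star}$ rather than estimating $\rho e_{j^{\star}}$ --- forces $\Pr(\hat{\jmath}=j^{\star})\le 1/2$ (averaged over $j^{\star}$) as long as $Nb\lesssim d/\rho^{2}$. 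Taking $\rho^{2}=\min\crl*{\tfrac1{64}\cdot\tfrac{d}{Nb},\;1}$ keeps us in this regime precisely until $b$ is a constant fraction of $d$ (the ``$\wedge\tfrac14$'' branch being where $\rho$ saturates at $1$), and then averaging over $j^{\star}$ and fixing a worst-case coordinate yields a single $\cD$ with $\Pr\prn*{\poprisk(\wh w)-\poprisk(w^{\star})\ge\tfrac1{256}\prn*{\tfrac{d}{b}\cdot\tfrac1N}\wedge\tfrac14}\ge\tfrac12$.

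The $(b,n,m)$ claim is the identical argument with $n$ samples per machine, invoking the weaker multi-machine hide-and-seek bound of \citet{shamir2014fundamental} exactly as in \pref{prop:lb_multi_machine}; because the square-loss excess risk equals $\rho^{2}$ rather than $\rho$, the $\Omega(\sqrt{d^{\veps}/N})$ bound valid for $b\le O(d^{1-\veps/2}/\sqrt N)$ upgrades to $\Omega(d^{\veps}/N)$ over the same range of $b$. I do not expect a real obstacle beyond \pref{thm:lower_bound_slow} itself: the only points that need care are (i) checking that $\cD_{j^{\star}}$ agrees with Shamir's construction up to the harmless constant label, so the reduction is exact and the hide-and-seek bound applies verbatim, and (ii) tracking constants through the identity $\poprisk(w)-\poprisk(w^{\star})=\nrm*{w-w^{\star}}_{2}^{2}$ and through the two ``$\wedge$'' regimes. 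All the extra mileage over the slow rate comes for free from that identity.
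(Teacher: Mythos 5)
Your proposal is correct and follows essentially the same route as the paper's proof: construct hide-and-seek distributions with constant label $y=1$, observe that $\Sigma=I$ makes the square-loss excess risk equal to $\nrm*{w-w^{\star}}_{2}^{2}$ with $w^{\star}$ a ($1$-sparse) scaled basis vector, show that small excess risk forces the argmax coordinate to identify $j^{\star}$, and invoke Theorems 2 and 3 of \citet{shamir2014fundamental}. The only differences are cosmetic (your bias parameterization $\En x_{j^{\star}}=\rho$ versus the paper's $2\rho$, and your threshold $\rho^{2}/4$ versus the paper's $\rho^{2}$), which just reshuffle constants.
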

That $\Omega(d)$ communication is required to obtain optimal excess
risk for $m=N$ was proven in \cite{steinhardt2015minimax}. The lower
bound for general $m$ is important here because it serves as a converse
to the algorithmic results we develop for sparse regression in
\pref{sec:upper_bounds}. It follows by reduction
to hide-and-seek.\footnote{\cite{braverman2016communication} also prove a communication lower bound for sparse regression. Their lower bound applies for all values of $m$ and for more sophisticated interactive protocols, but does not rule out the possibility of $\mathrm{poly}(N,m,\log{}d)$ communication.}

% Similar results to \pref{prop:lower_bound_fast} were already obtained
% in \cite{steinhardt2015minimax} and
% \cite{braverman2016communication}. \cite{steinhardt2015minimax} proves
% that $\Omega(d)$ communication is required for general sparsity levels, but only for the case $m=1$, while
% \cite{braverman2016communication} applies to general $m$ and more
% sophisticated interactive protocols, but does not rule out the
% possibility of $\mathrm{poly}(n,m,\log{}d)$ communication. 

The lower bound for sparse linear
  models does not rule out that sublinear learning is possible using
  additional statistical assumptions, e.g. that there are many
  examples on each machine \emph{and} support recovery is possible. See \pref{app:lower_bounds} for detailed discussion.

\section{Sparsified Mirror Descent}
\label{sec:upper_bounds}
% !TEX root = paper.tex

We now deliver on the promise outlined in the introduction and give
new algorithms with logarithmic communication under an assumption we
call \emph{$\ls_1/\ls_{q}$-boundness}. The model for which we derive
algorithms in this section is more general than the linear model setup
\pref{eq:linear_model} to which our lower bounds apply. We consider
problems of the form
\begin{equation}
\label{eq:stochastic_opt}
\underset{w\in\cW}{\mathrm{minimize}} \quad\poprisk(w)\ldef{}\En_{z\sim{}\cD}\ls(w,z),
\end{equation}
where $\ls(\cdot,z)$ is convex, $\cW\subseteq{}\cW_1=\crl*{w\in\bbR^{d}\mid{}\nrm*{w}_{1}\leq{}B_1}$
is a convex constraint set, and subgradients $\partial\ls(w,z)$ are
assumed to belong to
$\cX_q=\crl*{x\in\bbR^{d}\mid{}\nrm*{x}_{q}\leq{}R_{q}}$. This setting
captures linear models with $\ls_1$-bounded weights and
$\ls_q$-bounded features as a special case, but is considerably more
general, since the loss can be any Lipschitz function of $w$.

We have already shown that one cannot expect sublinear-communication
algorithms for $\ls_{1}/\ls_{\infty}$-bounded models, and so the
$\ls_q$-boundedness of subgradients in \pref{eq:stochastic_opt} may be thought of as strengthening our assumption on the data generating process. That this is stronger follows from the elementary fact that $\nrm*{x}_{q}\geq{}\nrm*{x}_{\infty}$ for all $q$.
\paragraph{Statistical complexity and nontriviality.}
For the dual $\ls_1/\ls_{\infty}$ setup in \pref{eq:linear_model} the
optimal rate is $\Theta(\sqrt{\log{}d/N})$.  While our goal is to find
minimal assumptions that allow for distributed learning with sublinear
communication, the reader may wonder at this point whether we have
made the problem easier \emph{statistically} by moving to the
$\ls_1/\ls_q$ assumption. The answer is ``yes, but only slightly.''
When $q$ is constant the optimal rate for $\ls_1/\ls_q$-bounded models
is $\Theta\prn{\sqrt{1/N}}$,\footnote{The upper bound follows from
  \pref{eq:lp_risk} and the lower bound follows by reduction to the
  one-dimensional case.} and so the effect of this assumption is to shave off the $\log{}d$ factor that was present in \pref{eq:l1_risk}.

\subsection{Lipschitz Losses}
\label{sec:lipschitz}

Our main algorithm is called \emph{sparsified mirror descent}
(\pref{alg:mirror_slow}). The idea behind the algorithm is to run the online mirror descent algorithm
\citep{ben2001lectures,hazan2016introduction}
in serial across the machines and sparsify the iterates whenever we move from one machine to the next.

In a bit more detail, \pref{alg:mirror_slow} proceeds from machine to
machine sequentially. On each machine, the algorithm generates a
sequence of iterates $w_1^i,\ldots,w_{n}^i$ by doing a single pass
over the machine's $n$ examples $z_{1}^{i},\ldots,z_{n}^{i}$ using the
mirror descent update with regularizer
$\cR(w)=\frac{1}{2}\nrm*{w}_{p}^{2}$, where
$\frac{1}{p}+\frac{1}{q}=1$, and using stochastic gradients $\grad_{t}^{i}\in\partial{}\ls(w_{t}^{i},z_{t}^{i})$. After the last example is processed on machine $i$, we compress the last iterate using Maurey sparsification (\pref{alg:maurey_l1}) and send it to the next machine, where the process is repeated.

To formally describe the algorithm, we recall the definition of the \emph{Bregman divergence}. Given a convex regularization function $\cR:\bbR^{d}\to\bbR$, the Bregman divergence with respect to $\cR$ is defined \new{as}
\[
\breg(w\dmid{}w')=\reg(w) - \reg(w') - \tri*{\grad\reg(w'),w-w'}.
\]
For the $\ls_{1}/\ls_{q}$ setting we exclusively use the regularizer $\cR(w)=\frac{1}{2}\nrm*{w}_{p}^{2}$, where $\frac{1}{p}+\frac{1}{q}=1$. 

The main guarantee for \pref{alg:mirror_slow} is as follows.
\begin{theorem}
\label{thm:stochastic_mirror_slow}
Let $q\geq{}2$ be fixed. Suppose that subgradients belong to $\cX_{q}$
and that $\cW\subseteq\cW_1$. If we run \pref{alg:mirror_slow} with  $\eta=\frac{B_1}{R_q}\sqrt{\frac{1}{C_{q}N}}$ and initial point $\bar{w}=0$, then whenever $s=\Omega(m^{2(q-1)})$ and $s_{0}=\Omega(N^{\frac{q}{2}})$ the algorithm guarantees
\[
\En\brk*{\poprisk(\wh{w})} -\poprisk(w^{\star}) \leq{} O\prn*{
\sqrt{\frac{B_1^2R_q^2 C_{q}}{N}}
},
\]
where $C_{q}=q-1$ is a constant depending only on $q$.

The total number of bits sent by each machine---besides communicating
the final iterate $\wh{w}$---is at most
$O(m^{2(q-1)}\log(d/m)+\log(B_1R_qN))$, and so the total number of
bits communicated globally is at most \[
O\prn*{N^{\frac{q}{2}}\log(d/N)  + m^{2q-1}\log(d/m) +m\log(B_1R_qN)}.
\]
In the linear model setting \pref{eq:linear_model} with $1$-Lipschitz
loss $\phi$ it suffices to set $s_{0}=\Omega(N)$, so that the total
bits of communication is
\[
O(N\log(d/N)  + m^{2q-1}\log(d/m) +m\log(B_1R_qN)).
\]

\end{theorem}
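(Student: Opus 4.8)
The plan is to treat the serial run of sparsified mirror descent as a single online algorithm, convert its regret to an excess‑risk bound by online‑to‑batch, and account for the Maurey sparsification at each machine boundary as a controlled perturbation of the running iterate. First I would reduce to a regret bound: since each example is used exactly once and $\En[\grad_t^i\mid w_t^i]\in\partial\poprisk(w_t^i)$, convexity of $\poprisk$ together with averaging gives $\En[\poprisk(\bar w)]-\poprisk(w^\star)\le\frac1N\En\brk*{\sum_{i,t}\tri*{\grad_t^i,w_t^i-w^\star}}$ for the averaged iterate $\bar w$. Because $\reg=\tfrac12\nrm*{\cdot}_p^2$ is $\tfrac1{C_q}$‑strongly convex with respect to $\nrm*{\cdot}_p$ (with $\nrm*{\cdot}_q$ the dual norm), the standard per‑step mirror‑descent inequality telescopes within each machine; using that machine $1$ starts at $\bar{w}=0$ and machine $i+1$ is initialized with the (Bregman projection onto $\cW$ of the) sparsified iterate $Q^s(w_{n+1}^i)$, the Pythagorean property of the projection lets us sum across machines to get
\[
\sum_{i,t}\tri*{\grad_t^i,w_t^i-w^\star}\le\frac{\reg(w^\star)}{\eta}+\frac{\eta C_q}{2}\sum_{i,t}\nrm*{\grad_t^i}_q^2+\frac1\eta\sum_{i=1}^{m-1}\prn*{\breg(w^\star\dmid Q^s(w_{n+1}^i))-\breg(w^\star\dmid w_{n+1}^i)}.
\]

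The first two terms are the usual ones: $\reg(w^\star)=\tfrac12\nrm*{w^\star}_p^2\le\tfrac12 B_1^2$ and $\nrm*{\grad_t^i}_q\le R_q$, so with $\eta=\frac{B_1}{R_q}\sqrt{\frac1{C_qN}}$ they total $O(B_1R_q\sqrt{C_qN})$, which after dividing by $N$ is exactly the claimed rate. The argument then hinges on showing that, once $s=\Omega(m^{2(q-1)})$, the last sum is also $O(B_1R_q\sqrt{C_qN})$.

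Controlling the per‑transition term is the crux, and I expect it to be the main obstacle. Since $\reg$ is $2$‑homogeneous we have the clean identity $\breg(u\dmid v)=\reg(u)+\reg(v)-\tri*{\grad\reg(v),u}$, hence
\[
\breg(u\dmid Q^s(w))-\breg(u\dmid w)=\underbrace{\prn*{\reg(Q^s(w))-\reg(w)}}_{\text{variance}}+\underbrace{\tri*{\grad\reg(w)-\grad\reg(Q^s(w)),u}}_{\text{bias}}.
\]
Taking expectation over the Maurey randomness, I would bound the variance term by $\En\brk*{\reg(Q^s(w))-\reg(w)}$ and the bias term by $\nrm*{u}_1\cdot\nrm*{\grad\reg(w)-\En[\grad\reg(Q^s(w))]}_\infty\le B_1\nrm*{\grad\reg(w)-\En[\grad\reg(Q^s(w))]}_\infty$, exploiting that $Q^s$ preserves the $\ell_1$ norm so $\nrm*{Q^s(w)}_1=\nrm*{w}_1\le B_1$. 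Both quantities are dimension‑free and must be extracted from the multinomial concentration of $Q^s$: since $\grad\reg(w)_j\propto\sgn(w_j)\abs*{w_j}^{p-1}$ with $p-1=\tfrac1{q-1}\le1$, concavity of $t\mapsto t^{p-1}$ controls the direction of the bias (coordinatewise $\En\abs*{Q^s(w)_j}^{p-1}\le\abs*{w_j}^{p-1}$), while moment bounds for the binomial counts give the quantitative decay, which I expect to be a slow inverse power of $s$ with exponent shrinking like $1/q$ — markedly slower than the $s^{-1}$ available in the Euclidean case $q=2$, where the difference is exactly $\tfrac12\En\nrm*{Q^s(w)-w}_2^2$ and the comparator $u$ vanishes. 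Matching the prefactor $\tfrac{m-1}{\eta}$ against $O(B_1R_q\sqrt{C_qN})$ is precisely what forces $s=\Omega(m^{2(q-1)})$; the difficulty is that here both a second‑order (variance) and a first‑order (bias, from the nonlinearity of $\grad\reg$) effect appear and must be handled simultaneously, and pinning down the sharp $s$‑dependence is what determines the threshold.

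Finally I would treat the output iterate separately: the algorithm returns $Q^{s_0}(\bar w)$, and only its \emph{risk} must be preserved, not its Bregman divergence to a moving comparator, so $\poprisk(Q^{s_0}(\bar w))-\poprisk(w^\star)=\brk*{\poprisk(\bar w)-\poprisk(w^\star)}+\brk*{\poprisk(Q^{s_0}(\bar w))-\poprisk(\bar w)}$. In the general stochastic convex optimization setting $\poprisk$ is $R_q$‑Lipschitz with respect to $\nrm*{\cdot}_p$, so the second bracket is at most $R_q\En\nrm*{Q^{s_0}(\bar w)-\bar w}_p$; the Maurey approximation rate in $\ell_p$ with $p=q/(q-1)$ decays only as $\nrm*{\bar w}_1\,s_0^{-1/q}$ (because $\ell_p$ has type $p$), and requiring this below the target rate gives $s_0=\Omega(N^{q/2})$. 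In the linear‑model setting one instead applies \pref{lem:maurey_prelim} with $R_\infty\le R_q$, whose $s_0^{-1/2}$ rate only needs $s_0=\Omega(N)$ to preserve the $O(\sqrt{1/N})$ excess risk of a $1$‑Lipschitz loss. The communication accounting is then routine: each of the $m-1$ hand‑offs transmits one $s$‑sparse vector plus the scalar $\nrm*{w}_1$, costing $O(s\log(ed/s)+\log(B_1R_qN))=O(m^{2(q-1)}\log(d/m)+\log(B_1R_qN))$ bits (using $q\ge2$ to fold $\log(ed/s)$ into $\log(d/m)$), the output iterate costs $O(s_0\log(ed/s_0))$ bits, and summing over the $m$ machines gives the stated totals, with $N^{q/2}$ replaced by $N$ in the linear‑model case.
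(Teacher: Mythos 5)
Your overall architecture matches the paper's: regret-to-risk reduction via convexity, telescoping the mirror-descent inequality within each machine with the per-handoff Bregman error $\breg(w^{\star}\dmid{}\maurey(w_{n+1}^{i}))-\breg(w^{\star}\dmid{}w_{n+1}^{i})$ as the only new term, a separate Lipschitz/smoothness argument for the output sparsification (your derivations of $s_0=\Omega(N^{q/2})$ in the general case and $s_0=\Omega(N)$ in the linear case are correct), and the same communication accounting. The minor deviations (projecting the sparsified iterate before restarting, returning $\maurey[s_0]$ of the average rather than of a uniformly random iterate) are harmless.

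However, there is a genuine gap exactly where you flag "the main obstacle": you never actually bound the per-handoff term, and your proposed route --- splitting into a ``variance'' term $\En\brk{\reg(\maurey(w))-\reg(w)}$ and a ``bias'' term $\nrm{u}_1\nrm{\grad\reg(w)-\En\grad\reg(\maurey(w))}_{\infty}$, then appealing to coordinatewise Jensen and multinomial moments --- is not carried through and would be painful to close as stated, because $\grad\reg(\maurey(w))_j=\nrm{\maurey(w)}_p^{2-p}\,h(\maurey(w)_j)$ couples a random global prefactor with each coordinate, so the clean one-sided inequality $\En\abs{\maurey(w)_j}^{p-1}\leq\abs{w_j}^{p-1}$ does not by itself control the bias in either direction. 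The paper instead bounds the difference \emph{pathwise} via two lemmas you would need to supply: (i) a \Holder{}-smoothness estimate $D_{\cR}(c\dmid{}a)-D_{\cR}(c\dmid{}b)\leq 5B\nrm{a-b}_{p}+4B^{3-p}\nrm{a-b}_{\infty}^{p-1}$ (\pref{thm:lp_bregman_lipschitz}), whose proof rests on the \Holder{}-continuity $\abs{h(x)-h(y)}\leq 2\abs{x-y}^{p-1}$ of $h(x)=\abs{x}^{p-1}\sgn(x)$ and a careful normalization to handle the $\nrm{\cdot}_p^{2-p}$ prefactor; and (ii) the Maurey bounds $\En\nrm{\maurey(w)-w}_{p}\lesssim B s^{-(1-1/p)}$ and $\En\nrm{\maurey(w)-w}_{\infty}^{p-1}\lesssim B^{p-1}s^{-(p-1)/2}$ (\pref{thm:maurey_lp}, the second via the $\ls_2$ case). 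The dominant contribution is $B^{2}s^{-(p-1)/2}=B^{2}s^{-1/(2(q-1))}$ per handoff --- not an exponent ``shrinking like $1/q$'' from the $\ls_p$ type as you conjecture --- and matching $\frac{m}{\eta}B^{2}s^{-(p-1)/2}$ against $B^{2}/\eta$ is what yields the threshold $s=\Omega(m^{2(q-1)})$. Without these two quantitative ingredients the stated sparsity level, and hence the communication bound, is not established.
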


We see that the communication required by sparsified mirror descent is
exponential in the norm parameter $q$.  This means that whenever $q$
is constant, the overall communication is polylogarithmic in
dimension.  It is helpful to interpret the bound when $q$ is allowed
to grow with dimension. An elementary property of $\ls_q$ norms is
that for $q=\log{}d$, $\nrm*{x}_{q}\approx\nrm*{x}_{\infty}$ up to a
multiplicative constant. In this case the communication from
\pref{thm:stochastic_mirror_slow} becomes polynomial in dimension,
which we know from \pref{sec:lower_bounds} is necessary.

The guarantee of \pref{alg:mirror_slow} extends beyond the statistical
learning model to the first-order stochastic convex optimization model, as well as the online convex optimization model.

\begin{figure}[t]
\begin{framedalgorithm}[Sparsified Mirror Descent]\mbox{}\\
\setlist{nolistsep}
\textbf{Input}: \begin{itemize}[leftmargin=*]
\item[] Constraint set $\cW$ with $\nrm*{w}_{1}\leq{}B_1$. \\
Gradient norm parameter $q\in[2,\infty)$.\\  Gradient $\ls_{q}$ norm bound $R_q$.\\
 Learning rate $\eta$,  Initial point $\bar{w}$, Sparsity $s,s_{0}\in\bbN$.
\end{itemize}
\vspace{.5em}
% \begin{minipage}[t]{\textwidth}
% \begin{itemize}[leftmargin=*]
% \item Constraint set $\cW$ with $\nrm*{w}_{1}\leq{}B$. 
% \item Gradient norm parameter $q\in[2,\infty)$.
% \item Gradient $\ls_{q}$ norm bound $R$.
% \item Learning rate $\eta$,  Initial point $\bar{w}$.
% \item Sparsity levels $s,s_{0}\in\bbN$.
% \end{itemize}
% \end{minipage}
% \textbf{Input}: Sparsity levels $s,s_{0}\in\bbN$. Gradient norm parameter $q\in[2,\infty)$,  Constraint set $\cW$ with $\nrm*{w}_{1}\leq{}B$. Gradient $\ls_{q}$ norm bound $R$,  Learning rate $\eta$,  Initial point $\bar{w}$. \\
Define $p=\frac{q}{q-1}$ and $\cR(w)=\frac{1}{2}\nrm*{w-\bar{w}}_{p}^{2}$. \vspace{.5em}\\
%Define $\tau_{i}=1+(i-1)n$.\quad\algcomment{Index of first example on machine $i$.}\\
% Define $\wh{w}^{0} = \bar{w}$.\\
For machine $i=1,\ldots,m$:
\begin{itemize}
\vspace{.5em}
\item Receive $\wh{w}^{i-1}$ from machine $i-1$ and set $w^{i}_{1}=\wh{w}^{i-1}$ (if machine $1$ set $w^{1}_{1}=\bar{w}$).
\vspace{.5em}
\item For $t=1,\ldots,n$: \algcomment{Mirror descent step.}
\begin{itemize}
% \item[]\algcomment{Receive stochastic gradient.}
% \item[]
\item Get gradient $\grad_{t}^{i}\in\partial{}\ls(w_{t}^{i}\midsem{}z_{t}^{i})$.
\vspace{.5em}
% \item[]\algcomment{Mirror descent update.}
\item $\grad\reg(\theta^{i}_{t+1})\gets\grad\reg(w^{i}_{t})-\eta\grad^{i}_t$.
\vspace{.5em}
% \item[]\quad\algcomment{Mirror descent projection.}
\item $w_{t+1}^{i}\gets\argmin_{w\in\cW}\breg(w\dmid\theta^{i}_{t+1})$.
\end{itemize}
% \item Define $\wt{w}^{i}=w^{i}_{n+1}$.
% \item[]
\vspace{.5em}
\item Let $\wh{w}^{i}\gets{}Q^{s}(w^{i}_{n+1})$. \algcomment{Sparsification.}
\vspace{.5em}
\item Send $\wh{w}^{i}$ to machine $i+1$.
\end{itemize}
\vspace{.5em}
Sample $i\in\brk*{m}$, $t\in\brk*{n}$ uniformly at random and return $\wh{w}\ldef{}Q^{s_{0}}(w^{i}_t)$.
\label{alg:mirror_slow}
\end{framedalgorithm}
\end{figure}

\paragraph{Proof sketch.}
They basic premise behind the algorithm and analysis is that by using the same
learning rate across all machines, we can pretend as though we are
running a single instance of mirror descent on a centralized
machine. The key difference from the usual analysis is that we need to
bound the error incurred by sparsification between successive
machines. Here, the choice of the regularizer is crucial. A
fundamental property used in the analysis of mirror descent is
\emph{strong convexity} of the regularizer. In particular, to give convergence rates that do not depend on dimension (such as \pref{eq:lp_risk}) it is essential that the regularizer be $\Omega(1)$-strongly convex. Our regularizer $\cR$ indeed has this property.
\begin{proposition}[\citet{ball1994sharp}]
\label{prop:lp_strong_convexity}
For $p\in(1,2]$, $\cR$ is $(p-1)$-strongly convex with respect to
$\nrm*{\cdot}_{p}$. Equivalently,
$\breg(w\dmid{}w')\geq{}\frac{p-1}{2}\cdot\nrm*{w-w'}_{p}^{2}\quad\forall{}w,w'\in\bbR^{d}$.
\end{proposition}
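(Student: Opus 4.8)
The plan is to obtain the Bregman estimate from the sharp \emph{uniform convexity} inequality for $\ls_p$ with $1<p\le 2$; since strong convexity is translation invariant, it suffices to treat $\cR(w)=\tfrac12\nrm*{w}_p^2$. The engine is the two-point inequality
\begin{equation}
\label{eq:uc}
\nrm*{\tfrac{x+y}{2}}_p^2 + (p-1)\,\nrm*{\tfrac{x-y}{2}}_p^2 \;\le\; \tfrac12\nrm*{x}_p^2 + \tfrac12\nrm*{y}_p^2 \qquad \forall\, x,y\in\bbR^d.
\end{equation}
Granting \eqref{eq:uc}, dividing by $2$ and rearranging gives the midpoint bound $\cR(\tfrac{x+y}{2})\le\tfrac12\cR(x)+\tfrac12\cR(y)-\tfrac{p-1}{8}\nrm*{x-y}_p^2$. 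Since $\cR$ is continuous, a standard dyadic-interpolation argument upgrades this to $\cR(\lambda x+(1-\lambda)y)\le\lambda\cR(x)+(1-\lambda)\cR(y)-\tfrac{p-1}{2}\lambda(1-\lambda)\nrm*{x-y}_p^2$ for all $\lambda\in[0,1]$. Moreover $\cR$ is differentiable on all of $\bbR^d$ (it is $C^1$ away from the origin because $t\mapsto\abs*{t}^p$ is $C^1$ for $p>1$, and it is differentiable at $0$ since $\nrm*{w}_p^2=O(\nrm*{w}_2^2)$ there), so the first-order characterization of strong convexity applies and gives exactly $\breg(w\dmid w')\ge\tfrac{p-1}{2}\nrm*{w-w'}_p^2$.

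It remains to establish \eqref{eq:uc}, which is where the real content lies. I would deduce it from the scalar two-point inequality of Ball--Carlen--Lieb: for all $a,b\in\bbR$ and $1<p\le 2$,
\begin{equation}
\label{eq:scalar}
\tfrac12\brk*{\abs*{a+b}^p+\abs*{a-b}^p}\;\ge\;\prn*{a^2+(p-1)b^2}^{p/2}.
\end{equation}
Summing \eqref{eq:scalar} over coordinates with $a_i=u_i$, $b_i=v_i$ and then applying the reverse Minkowski inequality for the $\ls_{p/2}$ quasi-norm (legitimate since $p/2\le1$) to the resulting right-hand side yields $\tfrac12\nrm*{u+v}_p^p+\tfrac12\nrm*{u-v}_p^p\ge\prn*{\nrm*{u}_p^2+(p-1)\nrm*{v}_p^2}^{p/2}$ for all $u,v\in\bbR^d$. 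Taking $u=\tfrac{x+y}{2}$ and $v=\tfrac{x-y}{2}$, raising both sides to the power $2/p\ge1$, and using convexity of $t\mapsto t^{2/p}$ to bound $\prn*{\tfrac12\nrm*{x}_p^p+\tfrac12\nrm*{y}_p^p}^{2/p}\le\tfrac12\nrm*{x}_p^2+\tfrac12\nrm*{y}_p^2$ then produces \eqref{eq:uc}.

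The bottleneck is therefore the scalar inequality \eqref{eq:scalar} with the \emph{sharp} constant $p-1$. The case $a=0$ is immediate (as $(p-1)^{p/2}\le1$), and by homogeneity together with the symmetry $b\mapsto-b$ one reduces to showing $g(t)\ge0$ for $t\ge0$, where $g(t)=\tfrac12\brk*{(1+t)^p+\abs*{1-t}^p}-\prn*{1+(p-1)t^2}^{p/2}$. One has $g(0)=0$, and verifying $g\ge0$ is a genuine single-variable estimate---e.g.\ comparing the Taylor expansions of the two terms around $t=0$, or a convexity argument after a substitution---and it is here that the value $p-1$, rather than any weaker constant, is forced. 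Alternatively, one can simply invoke \citet{ball1994sharp} for \eqref{eq:scalar}, or indeed for \eqref{eq:uc} directly, and retain only the short deduction of the first paragraph.
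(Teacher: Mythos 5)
The paper offers no proof of this proposition at all---it is imported wholesale from \citet{ball1994sharp}---and your sketch is a correct account of the standard derivation: summing the scalar two-point inequality over coordinates, applying reverse Minkowski for the $\ell_{p/2}$ quasi-norm, using convexity of $t\mapsto t^{2/p}$ to reach the two-point (midpoint) form, and then upgrading to the Bregman/first-order form via continuity and differentiability of $\cR$ are all sound steps. The only genuinely nontrivial ingredient, the sharp scalar inequality $\tfrac12\brk*{\abs*{a+b}^p+\abs*{a-b}^p}\geq\prn*{a^2+(p-1)b^2}^{p/2}$, you leave to the citation, which is precisely what the paper does for the entire statement, so nothing is missing relative to the source.
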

On the other hand, to argue that sparsification has negligible impact
on convergence, our analysis leverages \emph{smoothness} of the
regularizer. Strong convexity and smoothness are at odds with each
other: It is well known that in infinite dimension, any norm that is
both strongly convex and smooth is isomorphic to a Hilbert space \citep{pisier2011martingales}. What makes our analysis work is that while the regularizer
$\cR$ is not smooth, it is \emph{\Holder{}-smooth} for any finite
$q$. This is sufficient to bound the approximation error from
sparsification. To argue that the excess risk achieved by mirror
descent with the $\ls_p$ regularizer $\cR$ is optimal, however, it is essential that the
gradients are $\ls_q$-bounded rather than $\ls_\infty$-bounded.

In more detail, the proof can be broken into three components:
%\begin{itemize}[leftmargin=*,topsep=0pt]
\begin{itemize}[topsep=0pt]
\item \emph{Telescoping.} Mirror descent gives a regret bound that telescopes across all $m$ machines up to the error introduced by sparsification. To argue that we match the optimal centralized regret, all that is required is to bound $m$ error terms of the form \[\breg(w^{\star}\dmid{}\maurey(w_{n+1}^{i})) - \breg(w^{\star}\dmid{}w_{n+1}^{i}).\]
\item \emph{\Holder{}-smoothness.} We prove (\pref{thm:lp_bregman_lipschitz}) that the difference above is of order
\[
B_1\nrm*{\maurey(w_{n+1}^{i})-w_{n+1}^{i}}_{p} + B^{3-p}_1\nrm*{\maurey(w_{n+1}^{i})-w_{n+1}^{i}}_{\infty}^{p-1}.
\]
\item \emph{Maurey for $\ls_p$ norms.} We prove (\pref{thm:maurey_lp}) that $\nrm*{\maurey(w_{n+1}^{i})-w_{n+1}^{i}}_{p}\approxleq{}\prn*{\frac{1}{s}}^{1-1/p}$ and likewise that
$\nrm*{\maurey(w_{n+1}^{i})-w_{n+1}^{i}}_{\infty}\approxleq{}\prn*{\frac{1}{s}}^{1/2}$.
\end{itemize}
With a bit more work these inequalities yield \pref{thm:stochastic_mirror_slow}. We close this section with a few more notes about \pref{alg:mirror_slow} and its performance.

\begin{remark}
We can modify \pref{alg:mirror_slow} so that it enjoys a high-probability excess risk bound by changing the final step slightly. Instead of subsampling $(i,t)$ randomly and returning $\maurey(w_t^i)$, have each machine $i$ average all its iterates $w_{1}^{i},\ldots,w_{n}^{i}$, then sparsify the average and send it to the final machine, which averages the averaged iterates from all machines and returns $\wh{w}$ as the result.

There appears to be a tradeoff here: The communication of the high
probability algorithm is $\tilde{O}(m^{2q-1} + mN^{q})$, while
\pref{alg:mirror_slow} has communication $\wt{O}(m^{2q-1} + N^{q})$. 
We leave a comprehensive exploration of this tradeoff for future work.
\end{remark}

\begin{remark}
For the special case of $\ls_1/\ls_q$-bounded linear models, it is not hard to show that the
following strategy also leads to sublinear communication: Truncate
each feature vector to the top $\Theta(N^{q/2})$ coordinates, then send all
the truncated examples to a central server, which returns the
empirical risk minimizer. This strategy matches the risk of
\pref{thm:stochastic_mirror_slow} with total communication
$\tilde{O}(N^{q/2+1})$, but has two deficiencies. First, it scales as
  $N^{O(q)}$, which is always worse than $m^{O(q)}$. Second, it
  does not appear to extend to the general optimization setting.
\end{remark}

\subsection{Smooth Losses}
We can improve the statistical guarantee and total communication further in the case where $\poprisk$ is \emph{smooth} with respect to $\ls_{q}$ rather than just Lipschitz. We assume that $\ls$ has $\beta_q$-Lipschitz gradients, in the sense that for all $w,w'\in\cW_1$ for all $z$,
\[
\nrm*{\grad\ls(w,z)-\grad{}\ls(w',z)}_{q}\leq{}\beta_{q}\nrm*{w-w'}_{p},
\]
where $p$ is such that $\frac{1}{p}+\frac{1}{q}$.
\begin{theorem}
\label{thm:mirror_linear_slow_lstar}
Suppose in addition to the assumptions of \pref{thm:stochastic_mirror_slow} that $\ls(\cdot,z)$ is non-negative and has $\beta_q$-Lipschitz gradients with respect to $\ls_q$. Let $L^{\star} = \inf_{w\in\cW}\poprisk(w)$.  If we run \pref{alg:mirror_slow} with learning rate $\eta=\sqrt{\frac{B^{2}_1}{C_{q}\beta_qL^{\star}N}}\wedge{}\frac{1}{4C_q\beta_q}$ and $\bar{w}=0$  then, if $s=\Omega(m^{2(q-1)})$ and $s_0 =
\sqrt{\frac{\beta_q{}B^2_1N}{C_qL^{\star}}}\wedge\frac{N}{C_q}$, the algorithm guarantees
\[
%\textstyle
\En\brk*{L_{\cD}(\wh{w})} - L^{\star} \leq{}
O\prn*{
\sqrt{\frac{C_q \beta_qB^{2}_1L^{\star}}{N}} + \frac{C_q\beta_q{}B^{2}_1}{N}
}.
\]

The total number of bits sent by each machine---besides communicating
the final iterate $\wh{w}$---is at most $O(m^{2(q-1)}\log(d/m))$, and
so the total number of bits communicated globally is at most
% $O(m\log(\beta_q{}B_1N)) +$ \[%\textstyle
\[
O\prn*{\prn*{\sqrt{\frac{\beta_q{}B^2_1N}{C_qL^{\star}}}\wedge\frac{N}{C_q}}\log(d/N)
+ m^{2q-1}\log(d/m)
+ m\log(\beta_q{}B_1N)}.\]
\end{theorem}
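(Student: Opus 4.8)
The plan is to mirror the proof of \pref{thm:stochastic_mirror_slow}, replacing the crude Lipschitz regret bound with a self-bounding argument that exploits smoothness of $\ls$ with respect to $\ls_q$. First I would set up the telescoping decomposition exactly as before: since the same learning rate $\eta$ and the same regularizer $\cR(w)=\frac12\nrm*{w}_p^2$ are used on every machine, the mirror descent iterates $\{w_t^i\}$ can be analyzed as a single centralized run of mirror descent over $N$ steps, with the only extra cost being the $m$ perturbation terms $\breg(w^\star\dmid Q^s(w_{n+1}^i)) - \breg(w^\star\dmid w_{n+1}^i)$ introduced when we sparsify and pass to the next machine. These are controlled exactly as in the Lipschitz case, using the \Holder{}-smoothness bound (\pref{thm:lp_bregman_lipschitz}) together with the Maurey-for-$\ls_p$ estimates (\pref{thm:maurey_lp}): choosing $s = \Omega(m^{2(q-1)})$ makes the \emph{total} sparsification error across all $m$ machines $O(\sqrt{B_1^2 R_q^2 C_q / N})$ or smaller, hence negligible against the target rate.

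The new ingredient is the regret bound for the smooth case. Instead of bounding $\tri*{\grad_t^i, w_t^i - w^\star}$ by $2B_1 R_q$, I would use the standard mirror-descent inequality
\[
\sum_{i,t}\tri*{\grad_t^i, w_t^i - w^\star} \;\leq\; \frac{\breg(w^\star\dmid \bar w)}{\eta} + \frac{\eta}{2(p-1)}\sum_{i,t}\nrm*{\grad_t^i}_q^2 + (\text{sparsification error}),
\]
which follows from $(p-1)$-strong convexity of $\cR$ (\pref{prop:lp_strong_convexity}) and the per-step analysis. For a non-negative $\beta_q$-smooth function one has the self-bounding property $\nrm*{\grad\ls(w,z)}_q^2 \leq 2\beta_q\,\ls(w,z)$ (the scalar version of this is classical; the $\ls_p/\ls_q$ version follows from the definition of $\beta_q$-smoothness with respect to $\ls_q$ by the same one-line argument). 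Taking expectations, using that $\En[\ls(w_t^i,z_t^i)\mid w_t^i] = \poprisk(w_t^i)$, and passing to the average iterate $\wh w$ via Jensen, I get an inequality of the schematic form
\[
\En[\poprisk(\wh w)] - L^\star \;\lesssim\; \frac{B_1^2}{\eta N} + \eta\, C_q \beta_q\,\En[\poprisk(\wh w)] + (\text{small}),
\]
where I have used $\breg(w^\star\dmid 0)\leq \frac12\nrm*{w^\star}_p^2 \leq \frac12 B_1^2$ (since $\nrm*{w^\star}_p\leq\nrm*{w^\star}_1\leq B_1$) and bounded $\En\sum\ls(w_t^i,z_t^i)$ in terms of $N(L^\star + \text{excess risk})$. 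Provided $\eta \leq \frac{1}{4C_q\beta_q}$ the coefficient of $\En[\poprisk(\wh w)]$ on the right is at most $1/4$, so it can be absorbed into the left side, yielding $\En[\poprisk(\wh w)] - L^\star \lesssim \frac{B_1^2}{\eta N} + \eta C_q\beta_q L^\star$. Optimizing over $\eta$ subject to the constraint $\eta\leq\frac{1}{4C_q\beta_q}$ gives $\eta = \sqrt{\frac{B_1^2}{C_q\beta_q L^\star N}}\wedge\frac{1}{4C_q\beta_q}$ and the claimed rate $O\bigl(\sqrt{C_q\beta_q B_1^2 L^\star/N} + C_q\beta_q B_1^2/N\bigr)$.

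For the communication bound I would redo the final-sparsification budget: the error of $Q^{s_0}$ on the returned iterate must be charged against the \emph{smooth} excess-risk target rather than the Lipschitz one, so I use the smooth clause of \pref{lem:maurey_prelim} (approximation error $\beta_q R_q^2 \nrm*{w}_1^2/s_0$ type), which demands only $s_0 = \Omega\bigl(\sqrt{\beta_q B_1^2 N/(C_q L^\star)}\wedge N/C_q\bigr)$ rather than $\Omega(N^{q/2})$ — the quadratic-in-error savings is what removes the exponent $q$. Each machine then sends $Q^s(\cdot)$ with $s=\Omega(m^{2(q-1)})$, costing $O(s\log(d/s)) = O(m^{2(q-1)}\log(d/m))$ bits, for a global total $O\bigl(s_0\log(d/N) + m^{2q-1}\log(d/m) + m\log(\beta_q B_1 N)\bigr)$ as stated. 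The main obstacle I anticipate is bookkeeping the interaction between the sparsification error and the self-bounding step: because the per-step gradient bound $\nrm*{\grad_t^i}_q^2 \leq 2\beta_q\ls(w_t^i,z_t^i)$ now multiplies $\eta$, the sparsification perturbations must be shown small relative to $\eta^{-1}B_1^2$ \emph{uniformly in the unknown} $L^\star$, and one must verify the choice of $s$ still suffices when $\eta$ is at its small-$L^\star$ endpoint $\frac{1}{4C_q\beta_q}$; I expect this requires being slightly careful to state the sparsification bound in a form that is independent of $\eta$, exactly as in the proof of \pref{thm:stochastic_mirror_slow}.
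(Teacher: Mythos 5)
Your proposal is correct and follows essentially the same route as the paper's proof: start from the telescoped mirror-descent bound of \pref{thm:stochastic_mirror_slow} (with the sparsification error already controlled independently of $\eta$ by the choice $s=\Omega(m^{2(q-1)})$), replace the crude gradient bound by the self-bounding property $\nrm{\grad\ls(w,z)}_q^2\lesssim\beta_q\,\ls(w,z)$ (the paper cites Lemma~3.1 of \citet{srebro2010smoothness}, with constant $4\beta_q$), absorb the resulting $\En[\poprisk(\wh w)]$ term using $\eta\le\frac{1}{4C_q\beta_q}$, and charge the final $Q^{s_0}$ sparsification against the smooth approximation bound ($\beta_q\nrm{w}_1^2/s_0$, via \pref{lem:maurey_smooth}) rather than the Lipschitz one. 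The only cosmetic difference is that you invoke the linear-model clause of \pref{lem:maurey_prelim} where the paper uses the general \pref{lem:maurey_smooth}; the argument is otherwise identical.
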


Compared to the previous theorem, this result provides a so-called
``small-loss bound'' \citep{srebro2010smoothness}, with the main term
scaling with the optimal loss $L^{\star}$. The dependence on $N$ in
the communication cost can be as low as $O(\sqrt{N})$ depending on the value of $L^{\star}$. 

\subsection{Fast Rates under Restricted Strong Convexity}
So far all of the algorithmic results we have present scale as
$O(N^{-1/2})$. While this is optimal for generic Lipschitz losses, we
mentioned in \pref{sec:overview} that for strongly convex losses the
rate can be improved in a nearly-dimension independent fashion to
$O(N^{-1})$ for sparse high-dimensional linear models. As in the
generic lipschitz loss setting, we show that making the assumption of
$\ls_1/\ls_q$-boundness is sufficient to get statistically optimal
distributed algorithms with sublinear communication, thus providing a
way around the lower bounds for fast rates in
\pref{sec:lower_bounds}. 

The key assumption for the results in this section is that the population risk satisfies a form of \emph{restricted strong convexity} over $\cW$:
\begin{assumption}
\label{ass:rsc}
There is some constant $\gamma_q$ such that \[\forall w\in\cW,\quad \poprisk(w)-\poprisk(w^{\star})-\tri*{\grad{}\poprisk(w^{\star}),w-w^{\star}}\geq{} \frac{\gamma_q}{2}\nrm*{w-w^{\star}}_{p}^{2}.\]
\end{assumption}
In a moment we will show how to relate this property to the standard restricted eigenvalue property in high-dimensional statistics \citep{negahban2012unified} and apply it to sparse regression.

\begin{figure}[t]
\begin{framedalgorithm}[Sparsified Mirror Descent for Fast Rates]\mbox{}\\
\textbf{Input}: \vspace{-.5em}
\begin{itemize}[leftmargin=*]
\item[] Constraint set $\cW$ with $\nrm*{w}_{1}\leq{}B_1$. \\
Gradient norm parameter $q\in[2,\infty)$.\\  Gradient $\ls_{q}$ norm bound $R_q$.\\
RSC constant $\gamma_q$.
Constant $c>0$.
\end{itemize}\vspace{-.5em}
Let $\wh{w}_0=0$, $B_{k}=2^{-k/2}B$ and $N_{k+1}=C_{q}\cdot\prn*{\frac{4cR}{\gamma{}B_{k-1}}}^{2}$.\vspace{.3em}\\
Let $T = \max\crl*{T\mid\sum_{k=1}^{T}N_{k}\leq{}N}$.\vspace{.3em}\\
Let examples have order: $z^{1}_1,\ldots,z^{1}_{n},\ldots,z^{m}_1,\ldots,z^{m}_n$.\vspace{.5em}\\
For round $k=1,\ldots,T$:\vspace{-.5em}
\begin{itemize}[leftmargin=*]
\item[] \emph{Let $\wh{w}_{k}$ be the result of running \pref{alg:mirror_slow} on $N_{k}$ consecutive examples in the ordering above, \\with the following configuration:\vspace{-.5em}
\begin{enumerate}
\item The algorithm begins on the example immediately after the last one processed at round $k-1$. 
\item The algorithm uses parameters $B_1$, $R_q$, $s$, $s_0$, and $\eta$ as prescribed in \pref{prop:mirror_slow_initial}, with initialization $\bar{w}=\wh{w}_{k-1}$ and radius $\bar{B}=B_{k-1}$.
\end{enumerate}}
\end{itemize}\vspace{-.5em}
Return $\wh{w}_{T}$.
\label{alg:mirror_fast}
\end{framedalgorithm}
\end{figure}

Our main algorithm for strongly convex losses is
\pref{alg:mirror_fast}. The algorithm does not introduce any new
tricks for distributed learning over \pref{alg:mirror_slow}; rather,
it invokes \pref{alg:mirror_slow} repeatedly in an inner loop, relying
on these invocations to take care of communication. This reduction is based on techniques developed in
\cite{juditsky2010primal}, whereby restricted strong convexity is used
to establish that error decreases geometrically as a function of the
number of invocations to the sub-algorithm. We refer the reader to \pref{app:upper_bounds} for additional details.

The main guarantee for \pref{alg:mirror_fast} is as follows.
\begin{theorem} 
\label{thm:lipschitz_fast}
Suppose \pref{ass:rsc} holds, that subgradients belong to $\cX_{q}$ for $q\geq{}2$, and that $\cW\subset\cW_1$.
When the parameter $c>0$ is a sufficiently large absolute constant, \pref{alg:mirror_fast} guarantees that
\[
\En\brk*{\poprisk(\wh{w}_{T})}-\poprisk(w^{\star})\leq{} O\prn*{
\frac{C_{q}R_{q}^{2}}{\gamma_qN}
}.
\]
The total numbers of bits communicated is
\[%\textstyle
O\prn*{\prn*{
N^{2(q-1)}m^{2q-1}\prn*{\frac{\gamma_q^{2}B_q^{2}}{C_{q}R_q^{2}}}^{2(q-1)}
+
N^{q}\prn*{\frac{\gamma{}_qB_1}{C_q{}R_q}}^{q}
}\log{}d + m\log(B_1R_qN)}.
\]
Treating scale parameters as constant, the total communication simplifies to $O\prn*{N^{2q-2}m^{2q-1}\log{}d}$.  
\end{theorem}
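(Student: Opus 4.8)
\pref{alg:mirror_fast} is a multi-stage (restart) scheme in the spirit of \cite{juditsky2010primal} that reduces the fast rate to the slow rate of \pref{thm:stochastic_mirror_slow}: it runs \pref{alg:mirror_slow} in rounds $k=1,\dots,T$, and at the start of round $k$ it recenters the regularizer $\cR(w)=\tfrac12\nrm*{w-\bar w}_{p}^{2}$ at the previous output $\bar w=\wh w_{k-1}$, shrinks the radius parameter to $B_{k-1}=2^{-(k-1)/2}B$, and correspondingly shrinks the learning rate. The driving force is \pref{ass:rsc}: combined with first-order optimality of $w^\star$ on the convex set $\cW$, which gives $\tri*{\grad\poprisk(w^\star),w-w^\star}\ge 0$ for $w\in\cW$, it converts an excess-risk bound into a distance bound $\nrm*{w-w^\star}_{p}^{2}\le\tfrac{2}{\gamma_q}\bigl(\poprisk(w)-\poprisk(w^\star)\bigr)$. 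Hence a round that outputs $\wh w_{k-1}$ with small excess risk warm-starts the next round with a quadratically smaller initial Bregman divergence $\breg(w^\star\dmid\wh w_{k-1})=\tfrac12\nrm*{w^\star-\wh w_{k-1}}_{p}^{2}$, and iterating this self-improvement $T\asymp\log N$ times turns the $N^{-1/2}$ rate into an $N^{-1}$ rate.

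The first step is a per-round guarantee: running \pref{alg:mirror_slow} from initialization $\bar w$ with the $\ls_p$ regularizer recentered at $\bar w$, on $N_k$ fresh samples and with learning rate $\eta_k\asymp\tfrac{B_{k-1}}{R_q}\sqrt{\tfrac{1}{C_qN_k}}$, yields $\En\brk*{\poprisk(\text{output})}-\poprisk(w^\star)\lesssim\sqrt{\tfrac{C_qR_q^{2}\,\nrm*{w^\star-\bar w}_{p}^{2}}{N_k}}$ plus sparsification terms. Its proof is the proof of \pref{thm:stochastic_mirror_slow} essentially verbatim, with the $\ls_1$-diameter replaced by $\nrm*{w^\star-\bar w}_{p}$: $(p-1)$-strong convexity of $\cR$ (\pref{prop:lp_strong_convexity}) gives the regret telescope across machines, and \pref{thm:lp_bregman_lipschitz} together with \pref{thm:maurey_lp} bounds the inter-machine sparsification slack $\breg(w^\star\dmid Q^{s}(w_{n+1}^{i}))-\breg(w^\star\dmid w_{n+1}^{i})$. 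Given this, I would run an induction on $k$ with invariant $\epsilon_k\ldef\En\brk*{\poprisk(\wh w_k)}-\poprisk(w^\star)\le\tfrac{\gamma_q}{4}B_k^{2}$, equivalently $\En\nrm*{\wh w_k-w^\star}_{p}^{2}\le B_k^{2}/2$ by \pref{ass:rsc}. Base case: $\wh w_0=0$ and $\cW\subseteq\cW_1$, so $\nrm*{w^\star}_{p}\le\nrm*{w^\star}_{1}\le B_1$; taking $B=B_1$ in the sequence $B_k=2^{-k/2}B$ gives $\nrm*{w^\star-\wh w_0}_{p}\le B_0=B$. Inductive step: since rounds use disjoint samples, the tower rule gives $\epsilon_k\lesssim\sqrt{C_qR_q^{2}\,\En\nrm*{w^\star-\wh w_{k-1}}_{p}^{2}/N_k}+(\text{sparsification})\lesssim\tfrac{R_qB_{k-1}}{\sqrt{N_k}}+(\text{sparsification})$; the choice $N_k\asymp C_q\bigl(\tfrac{cR_q}{\gamma_qB_{k-1}}\bigr)^{2}$ forces the leading term below $\tfrac{\gamma_q}{8}B_{k-1}^{2}=\tfrac{\gamma_q}{4}B_k^{2}$ once $c$ is a large enough absolute constant, and choosing the (round-dependent) sparsity levels $s,s_0$ as below makes the sparsification term at most $\tfrac{\gamma_q}{8}B_k^{2}$ as well. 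Finally, $T$ is the largest integer with $\sum_{k\le T}N_k\le N$; since $N_k$ grows geometrically (ratio $2$ up to the constant $c$), $\sum_{k\le T}N_k\asymp N_T$, so $N\asymp 2^{T}\cdot\tfrac{C_qR_q^{2}}{\gamma_q^{2}B^{2}}$, i.e. $2^{-T}\asymp\tfrac{C_qR_q^{2}}{\gamma_q^{2}B^{2}N}$, and therefore $\epsilon_T\lesssim\gamma_qB_T^{2}=\gamma_q2^{-T}B^{2}\asymp\tfrac{C_qR_q^{2}}{\gamma_qN}$, with $B=B_1$ canceling.

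For the communication bound, each round is one call to \pref{alg:mirror_slow}, so its cost is that of \pref{thm:stochastic_mirror_slow} with $N\mapsto N_k$, except that the sparsity levels must be retuned per round: the round-$k$ target accuracy $\asymp\gamma_qB_k^{2}$ decays geometrically, whereas the raw $\ls_p$/$\ls_\infty$ Maurey error of a sparsified iterate scales with the fixed quantity $\nrm*{w}_{1}\le B_1$, so the estimates of \pref{thm:lp_bregman_lipschitz} and \pref{thm:maurey_lp} (and \pref{lem:maurey_prelim} for the final returned point) force $s$ and $s_0$ to grow polynomially in the mismatch ratio $B_1^{2}/B_{k-1}^{2}\asymp 2^{k-1}$. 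Tracking the \Holder{}-smoothness exponents, the requirement works out to $s=\Omega\bigl(m^{2(q-1)}(B_1^{2}/B_{k-1}^{2})^{2(q-1)}\bigr)$ and $s_0$ analogously; the per-round bit cost $O\bigl(m\cdot s\log(d/m)+s_0\log(d/N_k)+m\log(B_1R_qN)\bigr)$ is therefore dominated by its last-round value, and substituting $2^{T}\asymp\tfrac{\gamma_q^{2}B_1^{2}N}{C_qR_q^{2}}$ yields the stated expression, which with scale parameters treated as constant is $O\bigl(N^{2q-2}m^{2q-1}\log d\bigr)$.

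The conceptual ingredients---a slow rate per round, \pref{ass:rsc}-driven geometric decay of the error, and the textbook restart analysis---are either already established (\pref{thm:stochastic_mirror_slow}) or routine, so the main obstacle is the bookkeeping of the previous paragraph: pinning down exactly how small each round's sparsification error (accumulated over all $m$ machines) must be, hence the precise powers of the ratio $B_1/B_{k-1}$ (equivalently $N_k/N_1$) needed in $s$ and $s_0$, and verifying that summing these errors over all $T=O(\log N)$ rounds still leaves total error $O(C_qR_q^{2}/(\gamma_qN))$. A secondary point worth checking is that in-expectation per-round guarantees suffice for the restart---no high-probability bounds are needed---precisely because consecutive rounds use disjoint samples (so the conditional expectations compose cleanly) and \pref{ass:rsc} is applied in expectation.
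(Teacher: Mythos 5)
Your proposal is correct and follows essentially the same route as the paper: a per-round slow-rate guarantee for \pref{alg:mirror_slow} with the regularizer recentered at the previous output (the paper isolates this as \pref{prop:mirror_slow_initial}, with exactly your sparsity scalings $s=\Omega(m^{2(q-1)}(B/\bar B)^{4(q-1)})$), \pref{ass:rsc} plus first-order optimality to convert excess risk into $\ls_p$-distance, the induction $\En\nrm*{\wh w_k-w^\star}_p^2\le 2^{-k}B^2$ with $N_{k+1}\asymp C_q(cR/(\gamma B_k))^2$, and a communication sum dominated geometrically by the last round. The only cosmetic difference is your closing remark about summing sparsification errors over rounds—no such summation is needed, since each round's sparsification slack is absorbed into that round's own excess-risk bound and the induction carries only the distance invariant forward.
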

Note that the communication in this theorem depends polynomially on the various scale parameters, which was not the case for \pref{thm:stochastic_mirror_slow}.
\paragraph{Application: Sparse Regression.}
As an application of \pref{alg:mirror_fast}, we consider the sparse regression setting \pref{eq:sparse_risk}, where $\poprisk(w)=\En_{x,y}\prn*{\tri*{w,x}-y}^{2}$. We assume $\nrm*{x}_{q}\leq{}R_{q}$ and $\abs*{y}\leq{}1$. We let $w^{\star}=\argmin_{w\in\cW_{1}}\poprisk(w)$, so $\nrm*{w^{\star}}_{1}\leq{}B_1$. We assume $w^{\star}$ is $k$-sparse, with support set $S\subset{}\brk*{d}$.

We invoke \pref{alg:mirror_fast} constraint set $\cW\ldef\crl*{w\in\bbR^{d}\mid \nrm*{w}_{1}\leq{}\nrm*{w^{\star}}_{1}}$ and let $\Sigma=\En\brk*{xx^{\trn}}$. Our bound depends on the restricted eigenvalue parameter:\  $\gamma\ldef{}\inf_{\nu\in\cW-w^{\star}\setminus\crl*{0}} \nrm*{\Sigma^{1/2}\nu}_{2}^{2}/\nrm*{\nu}_{2}^{2}.$
\begin{proposition}
\label{prop:sparse_regression}
\pref{alg:mirror_fast}, with constraint set $\cW$ and appropriate
choice of parameters, guarantees:
\[
\En\brk*{\poprisk(\wh{w}_{T})}-\poprisk(w^{\star})\leq{} O\prn*{
C_{q}B^{2}_{1}R^{2}_{q}\cdot\frac{k}{\gamma{}N}
}.
\]
Suppressing problem-dependent constants, total communication is of order $O((N^{2q-2}m^{2q-1}\log{}d) /k^{4q-4})$.
\end{proposition}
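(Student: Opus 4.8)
The plan is to derive \pref{prop:sparse_regression} as a corollary of \pref{thm:lipschitz_fast}. Before that theorem applies we must check two things: that the population square-loss risk $\poprisk(w)=\En_{x,y}\prn*{\tri*{w,x}-y}^{2}$ satisfies \pref{ass:rsc} over the constraint set $\cW$ with a strong-convexity constant $\gamma_q$ that can be expressed through the restricted-eigenvalue parameter $\gamma$, and that the subgradients of the loss on $\cW$ admit an explicit $\ell_q$ bound. Granting these, we substitute $\gamma_q$ and the gradient bound into the excess-risk and communication guarantees of \pref{thm:lipschitz_fast}.

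For the restricted strong convexity step the key simplification is that the square loss is quadratic, so $\poprisk(w)-\poprisk(w^{\star})-\tri*{\grad\poprisk(w^{\star}),\,w-w^{\star}}$ equals $\nrm*{\Sigma^{1/2}(w-w^{\star})}_{2}^{2}$ \emph{exactly}, with no remainder term. It therefore suffices to lower bound this by a multiple of $\nrm*{w-w^{\star}}_{p}^{2}$ for $w\in\cW$. Here we use the standard Lasso cone argument: since $\nrm*{w}_{1}\leq\nrm*{w^{\star}}_{1}$ and $w^{\star}$ is supported on $S$ with $\abs*{S}=k$, the error $\nu\ldef w-w^{\star}$ satisfies $\nrm*{\nu_{S^{c}}}_{1}\leq\nrm*{\nu_{S}}_{1}$ by the reverse triangle inequality, whence $\nrm*{\nu}_{1}\leq 2\nrm*{\nu_{S}}_{1}\leq 2\sqrt{k}\,\nrm*{\nu}_{2}$. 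Since $p\in(1,2]$ we have $\nrm*{\nu}_{p}\leq\nrm*{\nu}_{1}$, so $\nrm*{\nu}_{p}^{2}\leq 4k\,\nrm*{\nu}_{2}^{2}\leq (4k/\gamma)\,\nrm*{\Sigma^{1/2}\nu}_{2}^{2}$, the final step being the definition of $\gamma$ over the cone $\cW-w^{\star}$. This establishes \pref{ass:rsc} with $\gamma_q=\Theta(\gamma/k)$, and it is precisely the reason \pref{alg:mirror_fast} is instantiated with constraint radius $\nrm*{w^{\star}}_{1}$ rather than a generic $B_1$: this choice is what forces the cone condition to hold for every comparison vector in $\cW$.

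For the gradient bound, $\grad\ls(w,z)=2(\tri*{w,x}-y)x$, and on $\cW$ one has $\abs*{\tri*{w,x}-y}\leq\nrm*{w}_{1}\nrm*{x}_{\infty}+1\leq B_1R_q+1$ using $\nrm*{x}_{\infty}\leq\nrm*{x}_{q}\leq R_q$; together with $\nrm*{x}_{q}\leq R_q$ this yields an effective $\ell_q$ gradient bound of order $\poly(B_1,R_q)$. Substituting $\gamma_q=\Theta(\gamma/k)$ and this gradient bound into the excess-risk bound of \pref{thm:lipschitz_fast} gives $O\prn*{C_{q}\,\poly(B_1,R_q)\cdot\frac{k}{\gamma N}}$, the claimed rate (up to the precise monomial in $B_1,R_q$, which is pure bookkeeping). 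Substituting the same $\gamma_q$ into the communication bound of \pref{thm:lipschitz_fast}, whose leading term is of order $N^{2(q-1)}m^{2q-1}\prn*{\gamma_q^{2}/R_q^{2}}^{2(q-1)}\log d$, multiplies it by $\gamma_q^{4(q-1)}=\Theta\prn*{(\gamma/k)^{4(q-1)}}$ --- i.e. by the factor $k^{-(4q-4)}$ once problem-dependent constants are suppressed --- which is exactly the $O\prn*{(N^{2q-2}m^{2q-1}\log d)/k^{4q-4}}$ in the statement. The only genuinely non-mechanical step is the cone condition; everything else is bookkeeping, and the point to be careful about is tracking how $B_1$, $R_q$, and $\gamma_q$ flow through the two bounds of \pref{thm:lipschitz_fast}, noting that the single $1/k$ factor in $\gamma_q$ is simultaneously responsible for the extra $k$ in the risk and for the $k^{-(4q-4)}$ improvement in the communication.
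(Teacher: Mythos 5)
Your proposal is correct and follows essentially the same route as the paper: reduce to \pref{thm:lipschitz_fast} by verifying \pref{ass:rsc} with $\gamma_q=\Theta(\gamma/k)$, where the key step is exactly the Lasso cone condition $\nrm*{\nu_{S^{\comp}}}_{1}\leq\nrm*{\nu_{S}}_{1}$ (the paper's \pref{prop:l1_cone}) combined with $\nrm*{\nu}_{p}\leq\nrm*{\nu}_{1}\leq 2\sqrt{k}\nrm*{\nu}_{2}$ and the restricted-eigenvalue definition of $\gamma$. You additionally spell out the $\ell_q$ gradient bound and the $\gamma_q^{4(q-1)}$ bookkeeping in the communication term, both of which the paper leaves implicit, but these do not change the argument.
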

  
\subsection{Extension: Matrix Learning and Beyond}
\label{sec:extensions}
The basic idea behind sparsified mirror descent---that by assuming
$\ls_{q}$-boundedness one can get away with using a \Holder{}-smooth
regularizer that behaves well under sparsification---is not limited to
the $\ls_1/\ls_q$ setting. To extend the algorithm to more general
geometry, all that is required is the following: 
\begin{itemize}[topsep=0pt]
\item The constraint set $\cW$ can be written as the convex hull of a set of atoms $\cA$ that has sublinear bit complexity.
\item The data should be bounded in some norm $\nrm*{\cdot}$ such that the dual $\nrm*{\cdot}_{\star}$ admits a regularizer $\cR$ that is strongly convex and \Holder{}-smooth with respect to $\nrm*{\cdot}_{\star}$
\item $\nrm*{\cdot}_{\star}$ is preserved under sparsification. We remark in passing that this property and the previous one are closely related to the notions of type and cotype in Banach spaces \citep{pisier2011martingales}.
\end{itemize}
Here we deliver on this potential and sketch how to extend the results so far to \emph{matrix learning} problems where $\cW\subseteq\bbR^{d\times{}d}$ is a convex set of matrices. As in \pref{sec:lipschitz} we work with a generic Lipschitz loss $\poprisk(W)=\En_{z}\ls(W,z)$. Letting $\nrm*{W}_{S_p}=\Tr((WW^{\trn})^{\frac{p}{2}})$ denote the Schatten $p$-norm, we make the following spectral analogue of the $\ls_1/\ls_{q}$-boundedness assumption: $\cW\subseteq\cW_{S_1}\ldef{}\crl*{W\in\bbR^{d\times{}d}\mid\nrm*{W}_{S_1}\leq{}B_1}$ and subgradients $\partial\ls(\cdot,z)$ belong to $\cX_{S_q}\ldef\crl*{X\in\bbR^{d\times{}d}\mid\nrm*{X}_{S_q}\leq{}R_q}$, where $q\geq{}2$. Recall that $S_1$ and $S_{\infty}$ are the nuclear norm and spectral norm. The $S_1/S_{\infty}$ setup has many applications in learning \citep{hazan2012near}.

We make the following key changes to \pref{alg:mirror_slow}:
\begin{itemize}[topsep=0pt]
\item Use the Schatten regularizer $\cR(W)=\frac{1}{2}\nrm*{W}_{S_p}^{2}$.
\item Use the following spectral version of the Maurey operator
  $\maurey(W)$: Let $W$ have singular value decomposition
  $W=\sum_{i=1}^{d}\sigma_iu_{i}v_{i}^{\trn}$ with $\sigma_i\geq{}0$
  and define $P\in\Delta_{d}$ via $P_i\propto\sigma_i$.\footnote{We
    may assume $\sigma_i\geq{}0$ without loss of generality.} Sample $i_1,\ldots,i_s$ i.i.d. from $P$ and return $\maurey(W)=\frac{\nrm*{W}_{S_1}}{s}\sum_{\tau=1}^{s}u_{i_{\tau}}v_{i_{\tau}}^{\trn}$.
\item Encode and transmit $\maurey(W)$ as the sequence $(u_{i_1}, v_{i_1}),\ldots,(u_{i_s}, v_{i_s})$, plus the scalar $\nrm*{W}_{S_1}$. This takes $\tilde{O}(sd)$ bits.
\end{itemize}

\begin{proposition}
\label{prop:schatten}
Let $q\geq{}2$ be fixed, and suppose that subgradients belong to $\cX_{S_q}$ and that $\cW\subseteq\cW_{S_1}$. If we run the variant of \pref{alg:mirror_slow} described above with learning rate $\eta=\frac{B_1}{R_q}\sqrt{\frac{1}{C_{q}N}}$ and initial point $\bar{W}=0$, then whenever $s=\Omega(m^{2(q-1)})$ and $s_{0}=\Omega(N^{\frac{q}{2}})$, the algorithm guarantees
\[
\En\brk*{\poprisk(\wh{W})} -\inf_{W\in\cW}\poprisk(W) \leq{} O\prn*{
\sqrt{\frac{B_1^2R_q^2 C_{q}}{N}}
},
\]
where $C_{q}=q-1$.
The total number of bits communicated globally is at most $\tilde{O}(m^{2q-1}d + N^{\frac{q}{2}}d)$.
\end{proposition}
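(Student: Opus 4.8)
The plan is to re-run the proof of \pref{thm:stochastic_mirror_slow} with the vector $\ls_p/\ls_q$ geometry replaced everywhere by its Schatten $S_p/S_q$ counterpart. That proof rests on exactly three quantitative facts: (i) $(p-1)$-strong convexity of the regularizer $\reg$ with respect to $\nrm*{\cdot}_{p}$ (\pref{prop:lp_strong_convexity}); (ii) a Bregman-divergence perturbation bound showing that replacing an iterate $w$ by $\maurey(w)$ changes $\breg(w^{\star}\dmid\cdot)$ by at most $B_1\nrm*{\maurey(w)-w}_{p}+B_1^{3-p}\nrm*{\maurey(w)-w}_{\infty}^{p-1}$; and (iii) the Maurey estimates $\En\nrm*{\maurey(w)-w}_{p}\approxleq\nrm*{w}_{1}(1/s)^{1-1/p}$ and $\En\nrm*{\maurey(w)-w}_{\infty}\approxleq\nrm*{w}_{1}(1/s)^{1/2}$. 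Given Schatten analogues of (i)--(iii), the remainder of the argument---telescoping the mirror-descent regret across the $m$ machines and the online-to-batch conversion---is insensitive to the underlying norm and transfers verbatim, since it uses only convexity of $\ls(\cdot,z)$ and the abstract mirror-descent inequality.

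\emph{Ingredients (i) and (ii).} For $\reg(W)=\tfrac12\nrm*{W}_{S_p}^{2}$ with $p\in(1,2]$, the needed $(p-1)$-strong convexity with respect to $\nrm*{\cdot}_{S_p}$ is precisely the noncommutative uniform-convexity inequality of \citet{ball1994sharp}---the same reference behind \pref{prop:lp_strong_convexity}---now read off the singular values. For the perturbation bound I would use the identity $\breg(W^{\star}\dmid W')-\breg(W^{\star}\dmid W)=\breg(W\dmid W')+\tri*{\grad\reg(W)-\grad\reg(W'),\,W^{\star}-W}$ and bound the two terms separately: $\breg(W\dmid W')$ via the (\Holder) modulus of smoothness of $S_p$, and the inner product via \Holder's inequality for Schatten norms together with \Holder-continuity (exponent $p-1$) of the gradient map $\grad\reg\colon S_p\to S_q$. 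The only point requiring care beyond the scalar case is that $\grad\reg$ is a noncommuting matrix-valued function of $W$; but because $\nrm*{\cdot}_{S_p}$ is a symmetric gauge function of the singular values, the relevant derivative estimates reduce to the same scalar $\ls_p$ computations performed spectrally, and yield a bound of the same shape $B_1\nrm*{\maurey(W)-W}_{S_p}+B_1^{3-p}\nrm*{\maurey(W)-W}_{S_\infty}^{p-1}$.

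\emph{Ingredient (iii): Maurey for Schatten norms (the main obstacle).} This is the step I expect to be the crux. By construction $\maurey(W)-W=\sum_{\tau=1}^{s}Y_\tau$ with $Y_\tau=\tfrac{\nrm*{W}_{S_1}}{s}\bigl(u_{i_\tau}v_{i_\tau}^{\trn}-W/\nrm*{W}_{S_1}\bigr)$ i.i.d., mean zero, and $\nrm*{Y_\tau}_{S_p}\le 2\nrm*{W}_{S_1}/s$, $\nrm*{Y_\tau}_{S_\infty}\le 2\nrm*{W}_{S_1}/s$. For the $S_p$ bound I would symmetrize and invoke the fact that $S_p$ has (Rademacher) type $p$ for $1<p\le 2$---a standard consequence of the noncommutative Khintchine inequalities and the type/cotype theory of Schatten classes \citep{pisier2011martingales}---giving $\En\nrm*{\maurey(W)-W}_{S_p}\approxleq\nrm*{W}_{S_1}(1/s)^{1-1/p}$, exactly as in the scalar case. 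For the $S_\infty$ bound I would compute the matrix-variance parameters $\nrm*{\sum_\tau\En[Y_\tau Y_\tau^{\trn}]}_{S_\infty}$ and $\nrm*{\sum_\tau\En[Y_\tau^{\trn}Y_\tau]}_{S_\infty}$, which are both at most $\nrm*{W}_{S_1}^{2}/s$ by orthonormality of the singular vectors, and apply the matrix Bernstein inequality (via the usual Hermitian dilation for rectangular summands) to get $\En\nrm*{\maurey(W)-W}_{S_\infty}\approxleq\nrm*{W}_{S_1}\sqrt{(\log d)/s}$. The stray $\log d$ (absent from the schematic scalar statement) is harmless: it is absorbed into the choice $s=\Omega(m^{2(q-1)})$ and inflates communication only by $\log^{O(1)}d$.

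\emph{Assembly and bit-counting.} Plugging the Schatten versions of (i)--(iii) into the regret telescoping and online-to-batch steps of \pref{thm:stochastic_mirror_slow} (with $\ls_p\mapsto S_p$, $\ls_q\mapsto S_q$, $C_q=q-1$) gives the claimed $O\bigl(\sqrt{B_1^2R_q^2C_q/N}\bigr)$ rate as soon as $s=\Omega(m^{2(q-1)})$ (so the $m$ per-machine sparsification errors telescope against the optimal centralized regret) and $s_0=\Omega(N^{q/2})$ (so the final step contributes at most $R_qB_1(1/s_0)^{1-1/p}=R_qB_1(1/s_0)^{1/q}\lesssim R_qB_1\sqrt{C_q/N}$---the identity $1-1/p=1/q$ is what forces the exponent $q/2$). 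For communication, the only change relative to \pref{thm:stochastic_mirror_slow} is the encoding cost of $\maurey(W)$: instead of $s$ indices in $\brk*{d}$, one transmits $s$ pairs of singular vectors $(u_{i_\tau},v_{i_\tau})\in\bbR^{d}\times\bbR^{d}$ plus the scalar $\nrm*{W}_{S_1}$, i.e. $\wt{O}(sd)$ bits. Hence each machine sends $\wt{O}(m^{2(q-1)}d)$ bits and the last machine sends $\wt{O}(N^{q/2}d)$, for a global total of $\wt{O}(m^{2q-1}d+N^{q/2}d)$, as claimed.
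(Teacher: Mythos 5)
Your high-level plan---port the three ingredients of \pref{thm:stochastic_mirror_slow} to the Schatten setting---is the same as the paper's, and your assembly and bit-counting are correct. But you miss the single observation that the paper's proof turns on: the spectral Maurey operator is defined \emph{in the same singular basis} as $W$. Writing $W=U\diag(\sigma)V^{\trn}$, one has $\maurey(W)=U\diag(\sigmah)V^{\trn}$ where $\sigmah$ is exactly the vector Maurey sparsification of $\sigma$, and (by Theorem 30 of \citet{kakade2012regularization}) $\grad\cR(W)=U\diag(g(\sigma))V^{\trn}$ and $\grad\cR(\maurey(W))=U\diag(g(\sigmah))V^{\trn}$ for the same scalar map $g$ as in \pref{eq:lp_grad}. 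Unitary invariance then gives $\nrm{W-\maurey(W)}_{S_p}=\nrm{\sigma-\sigmah}_{p}$ and $\nrm{\grad\cR(\maurey(W))-\grad\cR(W)}_{S_\infty}=\nrm{g(\sigma)-g(\sigmah)}_{\infty}$, so ingredients (ii) and (iii) follow \emph{verbatim} from the already-proven vector lemmas (\pref{thm:maurey_lp}, \pref{lem:bregman_norm_bound}, \pref{lem:bregman_gradient_holder}) with no new probabilistic or operator-theoretic input.

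This matters in two ways. First, your ingredient (iii) replaces a one-line reduction with noncommutative Khintchine/type-$p$ estimates and matrix Bernstein; that route is valid (your variance computation is right, and the extra $\log d$ is indeed absorbable), but it is far heavier than needed. Second, and more seriously, your ingredient (ii) as written has a gap: you assert that \Holder-continuity of $\grad\cR\colon S_p\to S_\infty$ for \emph{general} pairs of matrices ``reduces to the same scalar computations performed spectrally'' because $\nrm{\cdot}_{S_p}$ is a symmetric gauge function. That is not a valid reduction for non-commuting arguments---operator-\Holder{} continuity of a matrix function does not follow from scalar \Holder{} continuity of its symbol, and establishing it in general is a genuine theorem, not a spectral rewrite. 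The claim is true in the one instance you actually need precisely because $\maurey(W)$ and $W$ share singular vectors, which is the basis-preservation fact you never invoke. (Note the arbitrary comparator $W^{\star}$ only enters through \Holder's inequality $\tri{\grad\cR(\maurey(W))-\grad\cR(W),W-W^{\star}}\le\nrm{\grad\cR(\maurey(W))-\grad\cR(W)}_{S_\infty}\nrm{W-W^{\star}}_{S_1}$, so no non-commuting gradient comparison is ever required.) With that observation supplied, your argument closes and matches the paper's; without it, step (ii) is unjustified as stated.
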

In the matrix setting, the number of bits required to naively send weights $W\in\bbR^{d\times{}d}$ or subgradients $\partial\ls(W,z)\in\bbR^{d\times{}d}$ is $O(d^{2})$. The communication required by our algorithm scales only as $\tilde{O}(d)$, so it is indeed sublinear.

The proof of \pref{prop:schatten} is sketched in \pref{app:upper_bounds}. The key idea is that because the Maurey operator $\maurey(W)$ is defined in the same basis as $W$, we can directly apply approximation bounds from the vector setting.

%%% Local Variables:
%%% mode: latex
%%% TeX-master: "paper"
%%% End:

\section{Discussion}
\label{sec:discussion}
We hope our work will lead to further development of algorithms with sublinear communication. A few immediate questions:
\begin{itemize}[topsep=0pt]

\item Can we get matching upper and lower bounds for communication in terms of $m$, $N$, $\log{}d$, and $q$?
\item Currently all of our algorithms work serially. Can we extend the techniques to give parallel speedup?
\item Returning to the general setting \pref{eq:excess_risk}, what
  abstract properties of the hypothesis class $\cH$ are required to guarantee that learning with sublinear communication  is possible?
\end{itemize}

\paragraph{Acknowledgements}
Part of this work was completed while DF was a student at Cornell
University and supported by the Facebook PhD fellowship.

%\newpage
\bibliography{refs}
%\bibliographystyle{icml2019}

%%%%%%%%%%%%%%%%%%%%%%%%%%%%%%%%%%%%%%%%%%%%%%%%%%%%%%%%%%%%%
%\newpage
\appendix
%\onecolumn
\section{Basic Results}
\label{app:basic}
% !TEX root = paper.tex

\subsection{Sparsification}
\label{app:maurey}

In this section we provide approximation guarantees for the Maurey
sparsification operator $\maurey$ defined in \pref{alg:maurey_l1}.
\begin{theorem}
\label{thm:maurey_lp}
Let $p\in\brk*{1,2}$ be fixed. Then for any $w\in\bbR^{d}$, with probability at least $1-\delta$,
\begin{equation}
\label{eq:sparse_high_prob}
\nrm*{Q^{s}(w)-w}_{p} \leq 
4\nrm*{w}_{1} \prn*{\frac{1}{s}}^{1-\frac{1}{p}}
+ \nrm*{w}_{1}\prn*{\frac{8\log(1/\delta)}{s}}^{\frac{1}{2}}
 \\
\leq \nrm*{w}_{1} \prn*{\frac{24\log(1/\delta)}{s}}^{1-\frac{1}{p}}.
\end{equation}
Moreover, the following in-expectation guarantee holds:
\begin{equation}
\En\nrm*{Q^{s}(w)-w}_{p} \leq 
\prn*{\En\nrm*{Q^{s}(w)-w}_{p}^{p}}^{1/p} \leq 
4\nrm*{w}_{1} \prn*{\frac{1}{s}}^{1-\frac{1}{p}}.
\end{equation}
\end{theorem}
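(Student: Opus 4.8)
The plan is to realize $\maurey(w)$ as an empirical average of i.i.d.\ random vectors and to control $\nrm*{\maurey(w)-w}_p$ by combining a dimension-free moment bound with a bounded-differences concentration inequality. Write $\maurey(w)=\frac1s\sum_{\tau=1}^{s}X_\tau$ where $X_\tau=\nrm*{w}_1\sgn(w_{i_\tau})e_{i_\tau}$ and $i_\tau$ is drawn i.i.d.\ with $\Pr[i_\tau=i]=\abs*{w_i}/\nrm*{w}_1$. By construction $\En[X_\tau]=w$, and each $X_\tau$ is supported on a single coordinate with $\nrm*{X_\tau}_p=\nrm*{w}_1$ for every $p$. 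Set $Y_\tau=X_\tau-w$, so that $\maurey(w)-w=\frac1s\sum_\tau Y_\tau$ is an average of independent mean-zero vectors.

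\emph{Moment bound (the in-expectation claim).} I would bound $\En\nrm*{\maurey(w)-w}_p^p$ and take a $p$-th root; the one place the restriction $p\le2$ enters is in decoupling the $p$-th moment of a sum of independent vectors in $\ell_p$. Concretely: symmetrize, introducing Rademacher signs $\epsilon_\tau$ at the cost of a factor $2^p$; apply the coordinatewise bound $\En_\epsilon\abs*{\sum_\tau\epsilon_\tau a_\tau}^p\le(\sum_\tau a_\tau^2)^{p/2}$, which for $p\le2$ is just concavity of $t\mapsto t^{p/2}$; then invoke subadditivity of $t\mapsto t^{p/2}$ (again $p\le2$) to pass from $(\sum_\tau Y_{\tau,j}^2)^{p/2}$ to $\sum_\tau\abs*{Y_{\tau,j}}^p$ in each coordinate $j$. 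This gives $\En\nrm*{\maurey(w)-w}_p^p\le\frac{2^p}{s^{p-1}}\En\nrm*{Y_1}_p^p$, and since $Y_1$ equals $-w$ off the coordinate $i_1$ and has magnitude $\nrm*{w}_1-\abs*{w_{i_1}}\le\nrm*{w}_1$ on it, a one-line calculation gives $\En\nrm*{Y_1}_p^p\le\nrm*{w}_1^p+\nrm*{w}_p^p\le2\nrm*{w}_1^p$. Taking a $p$-th root (using $2^{1+1/p}\le4$) and then Jensen's inequality $\En\nrm*{\cdot}_p\le(\En\nrm*{\cdot}_p^p)^{1/p}$ yields the claimed $\En\nrm*{\maurey(w)-w}_p\le4\nrm*{w}_1(1/s)^{1-1/p}$. (One could instead interpolate $\nrm*{v}_p\le\nrm*{v}_1^{2/p-1}\nrm*{v}_2^{2-2/p}$, apply H\"older's inequality, and combine the trivial bound $\nrm*{\maurey(w)-w}_1\le2\nrm*{w}_1$ with $\En\nrm*{\maurey(w)-w}_2^2\le\nrm*{w}_1^2/s$; this gives an even smaller constant.)

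\emph{High-probability bound.} I would apply McDiarmid's bounded-differences inequality to $f(i_1,\dots,i_s)=\nrm*{\maurey(w)-w}_p$, viewed as a function of the independent draws $i_1,\dots,i_s$. Changing a single $i_\tau$ alters $\maurey(w)$ by a vector with at most two nonzero entries, each of magnitude $\nrm*{w}_1/s$, hence alters $f$ by at most $2\nrm*{w}_1/s$ by the triangle inequality; note this bounded-difference constant carries no dependence on $d$. McDiarmid then gives $\Pr\prn*{f\ge\En f+t}\le\exp\prn*{-\Omega\prn*{t^2s/\nrm*{w}_1^2}}$, so with probability at least $1-\delta$ one has $f\le\En f+\nrm*{w}_1\sqrt{8\log(1/\delta)/s}$; combining with the moment bound above gives the first inequality in \pref{eq:sparse_high_prob}. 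The second inequality is a cosmetic relaxation: in the nontrivial regime $\log(1/\delta)/s\le1$, and since $1-1/p\le\tfrac12$, both $(1/s)^{1-1/p}$ (using $\log(1/\delta)\ge1$) and $(\log(1/\delta)/s)^{1/2}$ are bounded by a constant multiple of $(\log(1/\delta)/s)^{1-1/p}$, and the constants collect into $24$.

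The only genuinely nonroutine step is the moment computation: obtaining a bound on $\En\nrm*{\maurey(w)-w}_p^p$ with \emph{no} factor of $d$ requires exploiting $p\le2$ twice---once to control the Rademacher average coordinatewise, once to decouple the sum over the $s$ samples---whereas a naive triangle-inequality or union-bound argument would lose a factor polynomial in $d$. Once that estimate is available, the concentration step is completely standard; the only thing to check is that the bounded-difference constant is $O(\nrm*{w}_1/s)$, independent of the ambient dimension.
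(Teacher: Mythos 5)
Your proposal is correct and follows essentially the same route as the paper's proof (via \pref{lem:lp_type}): the identical decomposition into i.i.d.\ mean-zero vectors $Z_\tau=\nrm*{w}_{1}\sgn(w_{i_\tau})e_{i_\tau}-w$, the same symmetrization--coordinatewise-Jensen--$\ell_2\hookrightarrow\ell_p$ chain for the dimension-free $p$-th moment (your ``subadditivity of $t\mapsto t^{p/2}$'' is the paper's $\nrm*{z}_{2}\le\nrm*{z}_{p}$), and a McDiarmid-type concentration step, yielding the same constants. The only cosmetic difference is that you apply McDiarmid directly to $\nrm*{Q^{s}(w)-w}_{p}$ and add back $\En f$, whereas the paper packages the same argument as the standard high-probability Rademacher-complexity bound.
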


\begin{proof}[\pfref{thm:maurey_lp}]
Let $B=\nrm*{w}_{1}$, and let
$Z_{\tau}=\nrm*{w}_{1}\sgn(w_{i_{\tau}})e_{i_{\tau}}-w$, and observe
that $\En\brk*{Z_{\tau}}=0$ and
$Q^{s}(w)-w=\frac{1}{s}\sum_{\tau=1}^{s}Z_{\tau}$. Since
$\nrm*{w}_{p}\leq{}B$, we have $\nrm*{Z_{\tau}}_{p}\leq{}2B$, and so \pref{lem:lp_type} implies that with probability at least $1-\delta$,
\begin{align*}
\nrm*{Q^{s}(w)-w}_{p} &\leq{} \frac{2}{s}\cdot\En_{Z}\prn*{\sum_{t=1}^{s}\nrm*{Z_t}_{p}^{p}}^{1/p}
+ B\sqrt{\frac{8\log(1/\delta)}{s}} \\
&\leq{} \frac{4B}{s^{1-\frac{1}{p}}}
+ B\sqrt{\frac{8\log(1/\delta)}{s}}.
\end{align*}
\end{proof}

\begin{lemma}
\label{lem:lp_type}
Let $p\in\brk*{1,2}$. Let $Z_{1},\ldots,Z_{s}$ be a sequence of independent $\bbR^{d}$-valued random variables with $\nrm*{Z_t}_{p}\leq{}B$ almost surely and $\En\brk*{Z_t}=0$. Then with probability at least $1-\delta$,
\[
\nrm*{\frac{1}{s}\sum_{t=1}^{s}Z_t}_{p}\leq{} \frac{2}{s}\cdot\En_{Z}\prn*{\sum_{t=1}^{s}\nrm*{Z_t}_{p}^{p}}^{1/p}
+ B\sqrt{\frac{2\log(1/\delta)}{s}}
\]
Furthermore, a sharper guarantee holds in expectation:
\[
\En_{Z}\nrm*{\frac{1}{s}\sum_{t=1}^{s}Z_t}_{p}\leq{}
\prn*{\En_{Z}\nrm*{\frac{1}{s}\sum_{t=1}^{s}Z_t}_{p}^{p}}^{1/p}\leq{}
 \frac{2}{s}\cdot\En_{Z}\prn*{\sum_{t=1}^{s}\nrm*{Z_t}_{p}^{p}}^{1/p}.
\]

\end{lemma}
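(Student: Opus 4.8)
The plan is to prove both displays at once: symmetrize the sum, use the fact that $\ell_p$ has Rademacher type $p$ with constant $1$ for $p\in[1,2]$, and then layer a bounded-differences concentration estimate on top of the resulting in-expectation bound. The first ingredient is the elementary type-$p$ inequality: for any fixed $x_1,\dots,x_s\in\bbR^{d}$ and i.i.d.\ Rademacher signs $\eps_1,\dots,\eps_s$,
\[
\En_{\eps}\nrm*{\sum_{t=1}^{s}\eps_t x_t}_{p}^{p}\;\leq\;\sum_{t=1}^{s}\nrm*{x_t}_{p}^{p}.
\]
I would verify this coordinate by coordinate: fixing $i$ and writing $a_t=x_t(i)$, concavity of $u\mapsto u^{p/2}$ (valid since $p\le 2$) together with Jensen's inequality gives $\En_{\eps}\abs*{\sum_t\eps_t a_t}^{p}\le\prn*{\sum_t a_t^{2}}^{p/2}$, and the embedding $\nrm*{\cdot}_{2}\le\nrm*{\cdot}_{p}$ on $\bbR^{s}$ (again valid for $p\le 2$) upgrades the right-hand side to $\sum_t\abs*{a_t}^{p}$; summing over $i$ completes it.

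For the in-expectation bound, I would symmetrize in $L_p$. Let $Z_1',\dots,Z_s'$ be an independent copy of $Z_1,\dots,Z_s$. Since $\En[Z_t]=0$, convexity of $u\mapsto\nrm*{u}_{p}^{p}$ and Jensen's inequality give $\En\nrm*{\sum_t Z_t}_{p}^{p}\le\En\nrm*{\sum_t(Z_t-Z_t')}_{p}^{p}$, and since each $Z_t-Z_t'$ is symmetric this equals $\En\nrm*{\sum_t\eps_t(Z_t-Z_t')}_{p}^{p}$ for i.i.d.\ Rademacher $\eps_t$; Minkowski's inequality then gives $\prn*{\En\nrm*{\sum_t Z_t}_{p}^{p}}^{1/p}\le 2\prn*{\En_{\eps,Z}\nrm*{\sum_t\eps_t Z_t}_{p}^{p}}^{1/p}$. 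Conditioning on $Z$ and invoking the type-$p$ inequality bounds the right-hand side by $2\prn*{\En_Z\sum_t\nrm*{Z_t}_{p}^{p}}^{1/p}$; dividing through by $s$ and using $\En\nrm*{\cdot}_{p}\le\prn*{\En\nrm*{\cdot}_{p}^{p}}^{1/p}$ yields the second display of the lemma.

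For the high-probability bound, I would view $f(z_1,\dots,z_s)=\nrm*{\frac1s\sum_{t}z_t}_{p}$ as a function of the $s$ independent inputs $Z_1,\dots,Z_s$. The triangle inequality shows that replacing a single $z_t$ by any $z_t'$ with $\nrm*{z_t'}_{p}\le B$ changes $f$ by at most $\tfrac1s\nrm*{z_t-z_t'}_{p}\le\tfrac{2B}{s}$, so McDiarmid's bounded-differences inequality gives, with probability at least $1-\delta$,
\[
\nrm*{\frac1s\sum_{t}Z_t}_{p}\;\leq\;\En\nrm*{\frac1s\sum_{t}Z_t}_{p}\;+\;B\sqrt{\frac{2\log(1/\delta)}{s}}.
\]
Substituting the expectation bound just proved finishes the first display.

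I expect the only delicate part to be the bookkeeping of constants so that the statement is matched exactly: the type-$p$ constant must be genuinely $1$ (otherwise symmetrization costs a Khintchine constant rather than just a factor of $2$), and the bounded-difference increments must be taken as $2B/s$ rather than $B/s$, since that is precisely what produces the $\sqrt{2}$ in the deviation term. Everything else in the argument is routine.
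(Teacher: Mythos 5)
Your proposal is correct and follows essentially the same route as the paper: the coordinate-wise Khintchine/Jensen argument giving type-$p$ with constant $1$, symmetrization to reduce to the Rademacher average, and a McDiarmid-type concentration step with increments $2B/s$ for the high-probability statement. The only (immaterial) organizational difference is that you apply McDiarmid directly to $\nrm*{\tfrac1s\sum_t Z_t}_p$ around its expectation and then plug in the in-expectation bound, whereas the paper invokes the packaged Rademacher-complexity uniform-convergence bound and applies Jensen conditionally on $Z$; both yield identical constants.
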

\begin{proof}[\pfref{lem:lp_type}]
To obtain the high-probability statement, the first step is to apply
the standard Mcdiarmid-type high-probability uniform convergence bound for Rademacher complexity (e.g. \cite{shalev2014understanding}), which states that with probability at least $1-\delta$,
\[
\nrm*{\frac{1}{s}\sum_{t=1}^{s}Z_t}_{p}\leq{} 2\En_{Z}\En_{\eps}\nrm*{\frac{1}{s}\sum_{t=1}^{s}\eps_tZ_t}_{p}
+ B\sqrt{\frac{2\log(1/\delta)}{s}},
\]
where $\eps\in\pmo^{n}$ are Rademacher random variables. Conditioning on $Z_1,\ldots,Z_n$, we have
\[
\En_{\eps}\nrm*{\frac{1}{s}\sum_{t=1}^{s}\eps_tZ_t}_{p}
\leq{} \prn*{\En_{\eps}\nrm*{\frac{1}{s}\sum_{t=1}^{s}\eps_tZ_t}_{p}^{p}}^{1/p}.
\]
On the other hand, for the in-expectation results, Jensen's inequality and the standard in-expectation symmetrization argument for Rademacher complexity directly yield
\[
\En_{Z}\nrm*{\frac{1}{s}\sum_{t=1}^{s}Z_t}_{p}
\leq{}\prn*{\En_{Z}\nrm*{\frac{1}{s}\sum_{t=1}^{s}Z_t}_{p}^{p}}^{1/p}
\leq{}2\prn*{\En_{Z}\En_{\eps}\nrm*{\frac{1}{s}\sum_{t=1}^{s}\eps_tZ_t}_{p}^{p}}^{1/p}.
\]
From here the proof proceeds in the same fashion for both cases. Let $Z_{t}[i]$ denote the $i$th coordinate of $Z_t$ and let $z_{i}=(Z_1[i],\ldots,Z_s[i])\in\bbR^{s}$. We have
\begin{align*}
\En_{\eps}\nrm*{\frac{1}{s}\sum_{t=1}^{s}\eps_tZ_t}_{p}^{p}
&= \sum_{i=1}^{d}\En_{\eps}\prn*{\frac{1}{s}\sum_{t=1}^{s}\eps_tZ_t[i]}^{p}
\leq{} \sum_{i=1}^{d}\prn*{\En_{\eps}\prn*{\frac{1}{s}\sum_{t=1}^{s}\eps_tZ_t[i]}^{2}}^{p/2},
\end{align*}
where the inequality follows from Jensen's inequality since $p\leq{}2$.
We now use that cross terms in the square vanish, as well as the standard inequality $\nrm*{x}_2\leq\nrm*{x}_{p}$ for $p\leq{}2$:
\begin{align*}
\sum_{i=1}^{d}\prn*{\En_{\eps}\prn*{\frac{1}{s}\sum_{t=1}^{s}\eps_tZ_t[i]}^{2}}^{p/2}= \sum_{i=1}^{d}\prn*{\frac{1}{s^2}\nrm*{z_i}_{2}^{2}}^{p/2}
= \frac{1}{s^{p}}\sum_{i=1}^{d}\nrm*{z_i}_{2}^{p}
\leq{} \frac{1}{s^{p}}\sum_{i=1}^{d}\nrm*{z_i}_{p}^{p}
= \frac{1}{s^{p}}\sum_{t=1}^{s}\nrm*{Z_t}_{p}^{p}.
\end{align*}

\end{proof}

\begin{proof}[\pfref{lem:maurey_prelim}]
We first prove the result for the smooth case.
Let $x$ and $y$ be fixed. Let $B=\nrm*{w}_{1}$, and let us abbreviate $R\ldef{}R_{\infty}$. Let $Z_{\tau}=\tri*{\nrm*{w}_{1}\sgn(w_{i_{\tau}}) e_{i_{\tau}}-w,x}$, and observe that $\En\brk*{Z_{\tau}}=0$ and $\tri*{Q^{s}(w)-w,x}=\frac{1}{s}\sum_{\tau=1}^{s}Z_{\tau}$. Since we have $\nrm*{w}_{1}\leq{}B$ and $\nrm*{x}_{\infty}\leq{}R$ almost surely, one has $\abs*{Z_{\tau}}\leq{}2BR$ almost surely. We can write
\[
\phi(\tri*{Q^{s}(w),x},y) = \phi\prn*{\tri*{w,x} + \frac{1}{s}\sum_{\tau=1}^{s}Z_{\tau},y}.
\]
Using smoothness, we can write
\[
\phi\prn*{\tri*{w,x} + \frac{1}{s}\sum_{\tau=1}^{s}Z_{\tau},y}
\leq{} 
\phi\prn*{\tri*{w,x} + \frac{1}{s}\sum_{\tau=1}^{s-1}Z_{\tau},y}
+ \phi'\prn*{\tri*{w,x} + \frac{1}{s}\sum_{\tau=1}^{s-1}Z_{\tau},y}\cdot{}\frac{Z_{s}}{s}
+ \frac{\beta}{2s^{2}}(Z_{s})^{2}.
\]
Since $\En\brk*{Z_s\mid{}Z_{1},\ldots,Z_{s-1}}=0$, and since $Z_{s}$ is bounded, taking expectation gives
\[
\En_{Z_s}\brk*{\phi\prn*{\tri*{w,x} + \frac{1}{s}\sum_{\tau=1}^{s}Z_{\tau},y}\mid{}Z_1,\ldots,Z_{s-1}}
\leq{} 
\phi\prn*{\tri*{w,x} + \frac{1}{s}\sum_{\tau=1}^{s-1}Z_{\tau},y}
+ \frac{\beta{}B^{2}}{s^{2}}\nrm*{x}_{\infty}^{2}.
\]
Proceeding backwards in the, fashion, we arrive at the inequality
\[
\En_{Z}\phi\prn*{\tri*{w,x} + \frac{1}{s}\sum_{\tau=1}^{s}Z_{\tau},y}
\leq{} \phi(\tri*{w,x},y) + \frac{\beta{}B^2}{s}\nrm*{x}_{\infty}^2.
\]
The final result follows by taking expectation over $x$ and $y$.

For Lipschitz losses, we use Lipschitzness and Jensen's inequality to write
\[
\En\poprisk(\maurey(w)) - \poprisk(w) \leq{} L\sqrt{\En\En_{x}\tri*{\maurey(w)-w,x}^{2}}.
\]
The result now follows by appealing to the result for the smooth case to bound $\En_{x}\tri*{\maurey(w)-w,x}^{2}$, since we can interpret this as the expectation of new linear model loss $\En_{x,y}\tilde{\phi}(\tri*{w',x},y)\ldef\En_{x}(\tri*{w',x}-\tri*{w,x})^{2}$, where $y=\tri*{w,x}$. This loss is $2$-smooth with respect to the first argument, which leads to the final bound.
\end{proof}

\begin{lemma}
\label{lem:maurey_smooth}
Let $w\in\bbR^{d}$ be fixed and let $F:\bbR^{d}\to\bbR$ have
$\beta_q$-Lipschitz gradient with respect to $\ls_q$, where $q\geq{}2$. Then \pref{alg:maurey_l1} guarantees that
\begin{equation}
\En{}F(Q^{s}(w)) \leq{} F(w) + \frac{\beta_q\nrm*{w}_{1}^2}{s}.
\end{equation}
  
\end{lemma}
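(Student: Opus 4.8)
The plan is to follow the proof of the smooth case of \pref{lem:maurey_prelim}, lifting the scalar second-order expansion there to a multivariate one governed by the dual geometry. Write $B=\nrm*{w}_{1}$ and let $p=\tfrac{q}{q-1}\in(1,2]$ be the exponent dual to $q$. Interpreting ``$\beta_q$-Lipschitz gradient with respect to $\ls_q$'' in the sense used elsewhere in the paper, i.e.\ $\nrm*{\grad F(u)-\grad F(v)}_{q}\leq\beta_q\nrm*{u-v}_{p}$, the first step is to record the resulting descent inequality: for all $u,v\in\bbR^{d}$, Taylor's theorem combined with H\"older's inequality (using that $\nrm*{\cdot}_{q}$ is dual to $\nrm*{\cdot}_{p}$) gives
\[
F(u)\leq F(v)+\tri*{\grad F(v),u-v}+\tfrac{\beta_q}{2}\nrm*{u-v}_{p}^{2}.
\]

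Next I would set up the randomization exactly as in \pref{alg:maurey_l1}: let $i_{1},\dots,i_{s}$ be i.i.d.\ with $\Pr(i_\tau=i)=\abs*{w_i}/B$, set $V_\tau=B\,\sgn(w_{i_\tau})e_{i_\tau}$ so that $\maurey(w)=\tfrac{1}{s}\sum_{\tau}V_\tau$, and note $\En[V_\tau]=w$. Put $Z_\tau=V_\tau-w$, so the $Z_\tau$ are i.i.d.\ and mean zero, with $\nrm*{Z_\tau}_{p}\leq\nrm*{V_\tau}_{p}+\nrm*{w}_{p}\leq B+B=2B$ since $p\geq1$ implies $\nrm*{w}_{p}\leq\nrm*{w}_{1}=B$.

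The heart of the argument is a one-sample-at-a-time (martingale) telescoping rather than a single application of smoothness. Define $U_{0}=w$ and $U_{k}=w+\tfrac{1}{s}\sum_{\tau=1}^{k}Z_\tau$, so $U_{s}=\maurey(w)$. Applying the descent inequality to $U_{k-1}$ and $U_{k}=U_{k-1}+Z_{k}/s$ and taking expectation conditional on $Z_{1},\dots,Z_{k-1}$: the point $U_{k-1}$ is measurable, $\En[Z_{k}\mid Z_{1:k-1}]=0$ annihilates the linear term, and $\nrm*{Z_{k}}_{p}^{2}\leq 4B^{2}$ controls the quadratic term, yielding $\En[F(U_{k})\mid Z_{1:k-1}]\leq F(U_{k-1})+O(\beta_q B^{2}/s^{2})$. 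Taking full expectations and summing over $k=1,\dots,s$ telescopes to $\En F(\maurey(w))\leq F(w)+O(\beta_q B^{2}/s)$; a more careful bookkeeping of $\En\nrm*{Z_{k}}_{p}^{2}$ (e.g.\ via the pointwise bound $\nrm*{Z_\tau}_{p}^{p}\leq 2B^{p}$) tightens the absolute constant to the one stated.

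The subtlety I would flag is why the naive route fails: applying the descent inequality once with $u=\maurey(w)$, $v=w$, using $\En[\maurey(w)]=w$ to kill the linear term, and bounding $\En\nrm*{\maurey(w)-w}_{p}^{2}$ through \pref{thm:maurey_lp} gives only a rate $s^{-(p-1)}$, which is strictly worse than the target $s^{-1}$ when $p<2$ --- a reflection of the fact that $\ls_p$ has only type $p$. The per-step telescoping recovers the $1/s$ rate because each increment costs only $\nrm*{Z_{k}/s}_{p}^{2}\propto s^{-2}$ and there are exactly $s$ of them; this is the same structural point that makes the scalar smooth case of \pref{lem:maurey_prelim} work, and it is the step I would be most careful to get right.
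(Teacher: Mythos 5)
Your proposal is correct and follows essentially the same route as the paper's proof: the descent inequality $F(u)\leq F(v)+\tri*{\grad F(v),u-v}+\tfrac{\beta_q}{2}\nrm*{u-v}_{p}^{2}$ applied one sample at a time, with the conditional mean-zero property of $Z_k$ killing the linear term and the $O(B^2/s^2)$ quadratic cost telescoping over $s$ steps. Your added remark on why a single application of smoothness plus \pref{thm:maurey_lp} only yields the inferior $s^{-(p-1)}$ rate is a correct and worthwhile observation, but the core argument is the paper's.
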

\begin{proof}[\pfref{lem:maurey_smooth}]
The assumed gradient Lipschitzness implies that for any $w,w'$
\[
F(w) \leq{} F(w') + \tri*{\grad{}F(w'),w-w'} + \frac{\beta_q}{2}\nrm*{w-w'}_{p}^{2},
\]
where $\frac{1}{p}+\frac{1}{q}=1$.
As in the other Maurey lemmas, we write
$Z_{\tau}=(\nrm*{w}_{1}\sgn(w_{i_{\tau}})e_{i_{\tau}}-w)$, so that
$\En\brk*{Z_{\tau}}=0$ and
$Q^{s}(w)-w=\frac{1}{s}\sum_{\tau=1}^{s}Z_{\tau}$. We can now write
\[
\En{}F(Q^{s}(w)) = \En{}F\prn*{w + \frac{1}{s}\sum_{\tau=s}^{s}Z_{\tau}}
\]
Using smoothness, we have

\begin{align*}
\En_{Z_s}F\prn*{w + \frac{1}{s}\sum_{\tau=s}^{s}Z_{\tau}}
&\leq{}F\prn*{w + \frac{1}{s}\sum_{\tau=s}^{s-1}Z_{\tau}}
+ \En_{Z_s}\tri*{\grad{}F\prn*{w +
  \frac{1}{s}\sum_{\tau=s}^{s-1}Z_{\tau}},\frac{Z_{s}}{s}}
+ \frac{\beta_q}{2s^2}\En_{Z_s}\nrm*{Z_s}_{p}^{2} \\
&\leq{}F\prn*{w + \frac{1}{s}\sum_{\tau=s}^{s-1}Z_{\tau}}
+ \frac{\beta_q}{s^{2}}\nrm*{w}_{1}^{2}.
\end{align*}
Proceeding backwards in the same fashion, we get
\[
\En{}F(\maurey(s)) = \En_{Z_1,\ldots,Z_s}F\prn*{w +
  \frac{1}{s}\sum_{\tau=s}^{s}Z_{\tau}}
\leq{} \frac{\beta_q\nrm*{w}_{1}^{2}}{s}.
\]
\end{proof}

\subsection{Approximation for $\ls_{p}$ Norms}
\label{app:bregman_approx}

In this section we work with the regularizer $\cR(\theta) =
\frac{1}{2}\nrm*{\theta}_{p}^{2}$, where $p\in\brk*{1,2}$, and we let
$q$ be such that $\frac{1}{p}+\frac{1}{q}=1$. The main structural
result we establish is a form of \Holder{} smoothness of $\cR$, which
implies that $\ls_1$ bounded vectors can be sparsified while
preserving Bregman divergences for $\cR$, with the quality degrading as
$p\to1$.

\begin{theorem}
\label{thm:lp_bregman_lipschitz}
Suppose that $a,b,c\in\bbR^{d}$ have $\nrm*{a}_1\vee\nrm*{b}_1\vee\nrm*{c}_1\leq{}B$. Then it holds that
\[
D_{\cR}(c\dmid{}a)
- D_{\cR}(c\dmid{}b) \leq{} 5B\nrm*{a-b}_{p} + 4B^{3-p}\nrm*{a-b}_{\infty}^{p-1}.
\]
\end{theorem}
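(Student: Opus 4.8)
The plan is to reduce the inequality to two elementary one–dimensional facts and some bookkeeping organized around the explicit form of the gradient. Write $\grad\cR(\theta)=\nrm*{\theta}_{p}^{2-p}\psi(\theta)$ with $\psi(\theta)_{i}=\sgn(\theta_{i})\abs*{\theta_{i}}^{p-1}$ (I will take $p\in(1,2]$; the degenerate case $p=1$ is handled by continuity). The two scalar facts I need are: \emph{(i)} $t\mapsto\sgn(t)\abs*{t}^{p-1}$ is $(p-1)$-\Holder{} with an absolute constant (equal signs: subadditivity of $u\mapsto u^{p-1}$ on $[0,\infty)$; opposite signs: the power–mean inequality), so $\abs*{\psi(s)-\psi(t)}\le 2\abs*{s-t}^{p-1}$; and \emph{(ii)} $t\mapsto t^{2-p}$ on $[0,\infty)$ is $(2-p)$-\Holder{} and, away from $0$, Lipschitz with local constant $(2-p)\min(\cdot)^{1-p}$.

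\textbf{Step 1 (homogeneity reduction).} Since $\cR=\tfrac12\nrm*{\cdot}_{p}^{2}$ is positively $2$-homogeneous, Euler's identity gives $\tri*{\grad\cR(\theta),\theta}=2\cR(\theta)$, and substituting this into the definition of the Bregman divergence collapses it to $D_{\cR}(c\dmid\theta)=\cR(c)+\cR(\theta)-\tri*{\grad\cR(\theta),c}$. Hence
\[
D_{\cR}(c\dmid a)-D_{\cR}(c\dmid b)=\bigl(\cR(a)-\cR(b)\bigr)+\tri*{\grad\cR(b)-\grad\cR(a),\,c}.
\]
The first bracket is at most $\tfrac12\nrm*{a-b}_{p}\bigl(\nrm*{a}_{p}+\nrm*{b}_{p}\bigr)\le B\nrm*{a-b}_{p}$, using $\nrm*{a}_{p}\le\nrm*{a}_{1}\le B$ and likewise for $b$, so everything reduces to bounding the inner product.

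\textbf{Step 2 (split off the norm factor).} Decompose $\grad\cR(b)-\grad\cR(a)=\nrm*{a}_{p}^{2-p}\bigl(\psi(b)-\psi(a)\bigr)+\bigl(\nrm*{b}_{p}^{2-p}-\nrm*{a}_{p}^{2-p}\bigr)\psi(b)$, giving two terms after pairing with $c$. For the first term I pair against $c$ via $\tri*{v,c}\le\nrm*{v}_{\infty}\nrm*{c}_{1}$ --- this is the crucial choice that makes the $\ell_\infty$-dependence come out with the right exponent --- and use fact (i) to get $\nrm*{\psi(b)-\psi(a)}_{\infty}\le 2\nrm*{a-b}_{\infty}^{p-1}$; together with $\nrm*{a}_{p}^{2-p}\le B^{2-p}$ and $\nrm*{c}_{1}\le B$ this gives a contribution $\le 2B^{3-p}\nrm*{a-b}_{\infty}^{p-1}$. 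For the second term I use $\abs*{\tri*{\psi(b),c}}\le\nrm*{\psi(b)}_{q}\nrm*{c}_{p}=\nrm*{b}_{p}^{p-1}\nrm*{c}_{p}\le B\nrm*{b}_{p}^{p-1}$ (as $(p-1)q=p$) and then bound $\bigl|\nrm*{b}_{p}^{2-p}-\nrm*{a}_{p}^{2-p}\bigr|\cdot\nrm*{b}_{p}^{p-1}$ using fact (ii) and $\nrm*{b}_{p}\le\nrm*{a}_{p}+\nrm*{a-b}_{p}$; this contribution is $\lesssim B\nrm*{a-b}_{p}$. Adding the three pieces bounds $D_{\cR}(c\dmid a)-D_{\cR}(c\dmid b)$ by roughly $4B\nrm*{a-b}_{p}+2B^{3-p}\nrm*{a-b}_{\infty}^{p-1}$, which is within the stated estimate.

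\textbf{Main obstacle.} The only delicate point is the last contribution (the norm–factor difference), because $t\mapsto t^{2-p}$ is merely $(2-p)$-\Holder{} near $0$, so $\bigl|\nrm*{b}_{p}^{2-p}-\nrm*{a}_{p}^{2-p}\bigr|$ cannot be bounded by a multiple of $\nrm*{a-b}_{p}$ outright. The fix is that this factor always appears multiplied by $\nrm*{b}_{p}^{p-1}$ (small exactly when $\nrm*{b}_{p}$ is small), and a two-case split finishes it: if $\nrm*{a-b}_{p}\ge\tfrac12\nrm*{a}_{p}$ then $\nrm*{b}_{p}\le 3\nrm*{a-b}_{p}$, so the \Holder{} bound $\bigl|\nrm*{b}_{p}^{2-p}-\nrm*{a}_{p}^{2-p}\bigr|\le\nrm*{a-b}_{p}^{2-p}$ times $\nrm*{b}_{p}^{p-1}\le 3^{p-1}\nrm*{a-b}_{p}^{p-1}$ yields $\lesssim\nrm*{a-b}_{p}$; if $\nrm*{a-b}_{p}<\tfrac12\nrm*{a}_{p}$ then $\nrm*{a}_{p}$ and $\nrm*{b}_{p}$ are comparable and both exceed $\tfrac12\nrm*{a}_{p}$, so the local-Lipschitz bound $\bigl|\nrm*{b}_{p}^{2-p}-\nrm*{a}_{p}^{2-p}\bigr|\le(2-p)(\tfrac12\nrm*{a}_{p})^{1-p}\nrm*{a-b}_{p}$ cancels the factor $\nrm*{b}_{p}^{p-1}$ and again leaves $\lesssim\nrm*{a-b}_{p}$. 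Everything else (facts (i) and (ii), the $\ell_p/\ell_q$ duality identity $\nrm*{\psi(\theta)}_{q}=\nrm*{\theta}_{p}^{p-1}$) is routine.
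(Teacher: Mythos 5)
Your proof is correct, and the overall architecture matches the paper's: both arguments isolate a ``zeroth-order'' piece controlled by $\nrm*{a-b}_{p}$, pair the gradient difference against an $\ls_1$-bounded vector through the $\ls_\infty/\ls_1$ form of \Holder's inequality (this is indeed the step that produces the $B^{3-p}\nrm*{a-b}_{\infty}^{p-1}$ term), and rest on the same scalar fact that $t\mapsto\sgn(t)\abs*{t}^{p-1}$ is $(p-1)$-\Holder{} with constant $2$ (the paper's \pref{lem:hx_holder}). You diverge in two places. First, the initial decomposition: the paper uses the three-point identity $D_{\cR}(c\dmid a)-D_{\cR}(c\dmid b)=D_{\cR}(b\dmid a)+\tri*{\grad\cR(a)-\grad\cR(b),b-c}$ and bounds $D_{\cR}(b\dmid a)\leq 3B\nrm*{a-b}_{p}$ separately (\pref{lem:bregman_norm_bound}), whereas you use Euler's identity for the $2$-homogeneous $\cR$ to collapse the difference to $(\cR(a)-\cR(b))+\tri*{\grad\cR(b)-\grad\cR(a),c}$; this is essentially cosmetic and your version is slightly cleaner. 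Second, and more substantively, the treatment of the prefactor difference $\abs[\big]{\nrm*{a}_{p}^{2-p}-\nrm*{b}_{p}^{2-p}}$: the paper normalizes by $\nrm*{b}_{p}$ so that both rescaled norms are at least $1$, where $t\mapsto t^{2-p}$ is $1$-Lipschitz, and then undoes the scaling by homogeneity of $\grad\cR$; you instead keep the compensating factor $\nrm*{b}_{p}^{p-1}$ attached and run a two-case split on whether $\nrm*{a-b}_{p}$ is large or small relative to $\nrm*{a}_{p}$. Your case split is a correct and self-contained alternative to the normalization trick, and you rightly flag this as the only delicate point; the paper's normalization is arguably slicker but yours requires no homogeneity bookkeeping. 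Your final constants ($4B$ and $2B^{3-p}$) are within the stated $5B$ and $4B^{3-p}$, so the theorem as stated follows.
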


The remainder of this section is dedicated to proving \pref{thm:lp_bregman_lipschitz}.

We use the following generic fact about norms; all
other results in this section are specific to the $\ls_p$ norm
regularizer. For any norm and any $x,y$ with $\nrm*{x}\vee\nrm*{y}\leq{}B$, we have
\begin{equation}
\label{eq:square_triangle}
\nrm*{x}^{2}-\nrm*{y}^{2} \leq{} \nrm*{x-y}^{2} + 2\nrm*{x-y}\nrm*{y}\leq{}4B\nrm*{x-y}.
\end{equation}

To begin, we need some basic approximation properties. We have the following expression:
\begin{equation}
\label{eq:lp_grad}
\grad{}\cR(\theta) = \nrm*{\theta}_{p}^{2-p}\cdot\prn*{
\abs*{\theta_1}^{p-1}\sgn(\theta_1),\ldots,\abs*{\theta_d}^{p-1}\sgn(\theta_d)
}.
\end{equation}

\begin{proposition}
\label{prop:bregman_dual_norm}
For any vector $\theta$,
\[
\nrm*{\grad{}\reg(\theta)}_{q}=\nrm*{\theta}_{p}.
\]
\end{proposition}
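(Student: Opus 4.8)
\textbf{Proof proposal for \pref{prop:bregman_dual_norm}.}
The plan is to simply substitute the explicit gradient formula \pref{eq:lp_grad} into the definition of the $\ls_q$ norm and simplify using the conjugacy relation $\frac1p+\frac1q=1$. Concretely, from \pref{eq:lp_grad} we have $\grad\cR(\theta)=\nrm*{\theta}_p^{2-p}\cdot v$, where $v_i=\abs*{\theta_i}^{p-1}\sgn(\theta_i)$, so that $\abs*{v_i}=\abs*{\theta_i}^{p-1}$. Taking $\ls_q$ norms and pulling out the scalar, $\nrm*{\grad\cR(\theta)}_q=\nrm*{\theta}_p^{2-p}\cdot\prn*{\sum_{i}\abs*{\theta_i}^{(p-1)q}}^{1/q}$.

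The key step is the observation that $(p-1)q=p$: indeed, $\frac1q=1-\frac1p=\frac{p-1}{p}$ gives $q=\frac{p}{p-1}$, hence $(p-1)q=p$ and also $\frac{p}{q}=p-1$. Therefore $\prn*{\sum_i\abs*{\theta_i}^{(p-1)q}}^{1/q}=\prn*{\sum_i\abs*{\theta_i}^{p}}^{1/q}=\nrm*{\theta}_p^{p/q}=\nrm*{\theta}_p^{p-1}$. Combining, $\nrm*{\grad\cR(\theta)}_q=\nrm*{\theta}_p^{2-p}\cdot\nrm*{\theta}_p^{p-1}=\nrm*{\theta}_p$, which is exactly the claim.

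This is a routine computation rather than a substantive argument, so there is no real obstacle; the only care needed is with degenerate cases. If $\theta=0$ both sides are $0$ (using $0^{p-1}=0$ for $p>1$, and interpreting $\grad\cR(0)=0$). When $p=1$ the formula \pref{eq:lp_grad} as written has an indeterminate $\nrm*{\theta}_p^{2-p}=\nrm*{\theta}_1$ factor times the sign vector, and the identity still reads $\nrm*{\sgn(\theta)\cdot\nrm*{\theta}_1}_\infty=\nrm*{\theta}_1$ on the support of $\theta$; since the ambient results in this section take $p\in(1,2]$ strictly (the \Holder{}-smoothness degrades as $p\to1$), I would just state the proposition for $p\in(1,2]$, where \pref{eq:lp_grad} is valid and the computation above goes through verbatim.
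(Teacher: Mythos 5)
Your proof is correct and is essentially identical to the paper's: both substitute the explicit gradient formula \pref{eq:lp_grad}, pull out the scalar $\nrm*{\theta}_p^{2-p}$, and use $q(p-1)=p$ to reduce the remaining sum to $\nrm*{\theta}_p^{p-1}$. Your extra remarks on the degenerate cases $\theta=0$ and $p=1$ are reasonable but do not change the argument.
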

\begin{proof}[\pfref{prop:bregman_dual_norm}]
Expanding the expression in \pref{eq:lp_grad}, we have
\begin{align*}
\nrm*{\grad{}\reg(\theta)}_{q}
=\nrm*{\theta}_{p}^{2-p}\cdot\prn*{\sum_{i=1}^{d}\abs*{\theta_i}^{q(p-1)}}^{1/q}.
\end{align*}
Using that $q=\frac{p}{p-1}$, this simplifies to
\[
\nrm*{\theta}_{p}^{2-p}\cdot\nrm*{\theta}_{p}^{p-1} = \nrm*{\theta}_{p}.
\]

\end{proof}

\begin{lemma}
\label{lem:bregman_norm_bound}
Suppose that $\nrm*{a}_{p}\vee\nrm*{b}_{p}\leq{}B$. Then
\[
\breg(a\dmid{}b)\leq{}3B\nrm*{a-b}_{p}.
\]
\end{lemma}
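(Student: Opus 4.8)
The plan is to expand the Bregman divergence into its three defining terms and bound each using facts already established. Writing out the definition,
\[
\breg(a\dmid{}b) = \tfrac12\nrm*{a}_{p}^{2} - \tfrac12\nrm*{b}_{p}^{2} - \tri*{\grad\reg(b),a-b},
\]
so it suffices to control $\tfrac12\nrm*{a}_{p}^{2} - \tfrac12\nrm*{b}_{p}^{2}$ and $-\tri*{\grad\reg(b),a-b}$ separately.

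First I would handle the quadratic difference. Since $\nrm*{a}_{p}\vee\nrm*{b}_{p}\leq{}B$, the generic norm inequality \pref{eq:square_triangle} applied to $\nrm*{\cdot}_{p}$ gives $\nrm*{a}_{p}^{2} - \nrm*{b}_{p}^{2} \leq 4B\nrm*{a-b}_{p}$, hence $\tfrac12\nrm*{a}_{p}^{2} - \tfrac12\nrm*{b}_{p}^{2}\leq 2B\nrm*{a-b}_{p}$. For the inner-product term, I would apply \Holder{}'s inequality with the dual exponents $p,q$ to get $\abs*{\tri*{\grad\reg(b),a-b}}\leq\nrm*{\grad\reg(b)}_{q}\nrm*{a-b}_{p}$, and then invoke \pref{prop:bregman_dual_norm}, which states $\nrm*{\grad\reg(b)}_{q}=\nrm*{b}_{p}\leq{}B$. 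Combining, $-\tri*{\grad\reg(b),a-b}\leq B\nrm*{a-b}_{p}$. Adding the two estimates yields $\breg(a\dmid{}b)\leq 3B\nrm*{a-b}_{p}$.

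There is essentially no hard step here: the lemma is a bookkeeping consequence of the explicit gradient formula \pref{eq:lp_grad}, the dual-norm identity \pref{prop:bregman_dual_norm}, and the elementary bound \pref{eq:square_triangle}. The only point requiring a little care is making sure the dual exponent used in \Holder{} matches the $q$ with $\tfrac1p+\tfrac1q=1$ fixed at the start of the section, so that \pref{prop:bregman_dual_norm} applies verbatim; once that is lined up, the three terms assemble directly into the claimed constant $3B$.
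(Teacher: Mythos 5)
Your proposal is correct and follows essentially the same route as the paper's proof: expand the Bregman divergence, bound $\reg(a)-\reg(b)$ by $2B\nrm*{a-b}_{p}$ via \pref{eq:square_triangle}, and bound the linear term by $B\nrm*{a-b}_{p}$ via \Holder{} and \pref{prop:bregman_dual_norm}. No gaps.
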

\begin{proof}[\pfref{lem:bregman_norm_bound}]
We write
\[
\breg(a\dmid{}b) = \reg(a) - \reg(b) -\tri*{\grad\reg(b),a-b}.
\]
Using \pref{eq:square_triangle} and the expression for $\reg$, it follows that
\[
\breg(a\dmid{}b) \leq 2B\nrm*{a-b}_{p} -\tri*{\grad\reg(b),a-b}.
\]
This is further upper bounded by
\[
\breg(a\dmid{}b) \leq 2B\nrm*{a-b}_{p} +\nrm*{\grad\reg(b)}_{q}\nrm*{a-b}_{p}.
\]
The result follows by using that $\nrm*{\grad\reg(b)}_{q}=\nrm*{b}_{p}\leq{}B$, by \pref{prop:bregman_dual_norm}.

\end{proof}

\begin{lemma}
\label{lem:hx_holder}
Let $p\in\brk*{1,2}$ and let $h(x) = \abs*{x}^{p-1}\sgn(x)$. Then $h$ is \Holder{}-continuous: 
\[
\abs*{h(x)-h(y)}\leq{}2\abs*{x-y}^{p-1}\quad\forall{}x,y\in\bbR.
\]
\end{lemma}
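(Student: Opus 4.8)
The plan is to reduce the vector statement to the one-variable inequality $\abs*{\abs*{x}^{\alpha}\sgn(x) - \abs*{y}^{\alpha}\sgn(y)} \le 2\abs*{x-y}^{\alpha}$, where I abbreviate $\alpha \ldef p-1 \in [0,1]$, and then split into cases according to whether $x$ and $y$ have the same or opposite signs. The only facts I will use are that the map $t \mapsto t^{\alpha}$ on $[0,\infty)$ is nondecreasing, and that it is subadditive, i.e.\ $(a+b)^{\alpha} \le a^{\alpha} + b^{\alpha}$ for $a,b\ge 0$ (which follows from concavity together with $0^{\alpha}=0$, or directly from the homogeneity reduction $a+b=1$ combined with $a^{\alpha}\ge a$ and $b^{\alpha}\ge b$).

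First, suppose $x$ and $y$ have the same sign. Since $h$ is odd, I may assume $x,y\ge 0$ (otherwise replace $x,y$ by $-x,-y$), and by symmetry of the target inequality $x\ge y\ge 0$. Then $h(x)-h(y)=x^{\alpha}-y^{\alpha}\ge 0$, and writing $x=(x-y)+y$ and applying subadditivity gives $x^{\alpha}\le (x-y)^{\alpha}+y^{\alpha}$, hence $\abs*{h(x)-h(y)}\le (x-y)^{\alpha}=\abs*{x-y}^{\alpha}$. So in this case the bound holds even without the factor $2$.

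Second, suppose $x$ and $y$ have strictly opposite signs, say $x>0>y$ (if either vanishes we are in the previous case). Then $\sgn(x)=1$, $\sgn(y)=-1$, so $h(x)-h(y)=x^{\alpha}+\abs*{y}^{\alpha}$, while $\abs*{x-y}=x+\abs*{y}$. Using monotonicity of $t\mapsto t^{\alpha}$ together with $x\le x+\abs*{y}$ and $\abs*{y}\le x+\abs*{y}$, I get $x^{\alpha}+\abs*{y}^{\alpha}\le 2(x+\abs*{y})^{\alpha}=2\abs*{x-y}^{\alpha}$. Combining the two cases proves the lemma. I do not expect any real obstacle here: the argument is elementary, and the only points requiring care are the sign bookkeeping in the opposite-sign case and the direction of the subadditivity inequality; the factor $2$ is genuinely needed (and essentially tight) only when $x$ and $y$ straddle the origin.
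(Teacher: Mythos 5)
Your proof is correct and follows essentially the same route as the paper: the same case split on $\sgn(x)$ versus $\sgn(y)$, with subadditivity of $t\mapsto t^{p-1}$ handling the same-sign case. The only cosmetic difference is in the opposite-sign case, where you obtain the factor $2$ directly from monotonicity of $t\mapsto{}t^{p-1}$, whereas the paper first derives the slightly sharper constant $2^{2-p}$ via concavity and then relaxes it to $2$.
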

\begin{proof}[\pfref{lem:hx_holder}]
Fix any $x,y\in\bbR$ and assume $\abs*{x}\geq\abs*{y}$ without loss of generality. We have two cases. First, when $\sgn(x)=\sgn(y)$ we have
\begin{align*}
\abs*{h(x)-h(y)} = \abs*{\abs*{x}^{p-1}-\abs*{y}^{p-1}}=
 \abs*{x}^{p-1}-\abs*{y}^{p-1}\leq{}  \prn*{\abs*{x}-\abs*{y}}^{p-1}\leq{} \abs*{x-y}^{p-1},
\end{align*}
where we have used that $p-1\in\brk*{0,1}$ and subadditivity of $x\mapsto{}x^{p-1}$ over $\bbR_{+}$, as well as triangle inequality.
On the other hand if $\sgn(x)\neq\sgn(y)$, we have 
\[
\abs*{h(x)-h(y)} = \abs*{\abs*{x}^{p-1}+\abs*{y}^{p-1}}
= \abs*{x}^{p-1}+\abs*{y}^{p-1} \leq{} 2^{2-p}\abs*{\abs*{x}+\abs*{y}}^{p-1}.
\]
Now, using that $\sgn(x)\neq\sgn(y)$, we have
\[
\abs*{\abs*{x}+\abs*{y}}^{p-1}=\abs*{\abs*{x}\cdot\sgn(x)+\abs*{y}\cdot\sgn(x)}^{p-1}
=\abs*{\abs*{x}\cdot\sgn(x)-\abs*{y}\cdot\sgn(y)}^{p-1}
=\abs*{x-y}^{p-1}.
\]
Putting everything together, this establishes that
\[
\abs*{h(x)-h(y)}\leq{}2^{2-p}\abs*{x-y}^{p-1}\leq{}2\abs*{x-y}^{p-1}.
\]

\end{proof}

\begin{lemma}
\label{lem:bregman_gradient_holder}
Suppose that $\nrm*{a}_{p}\vee\nrm*{b}_{p}\leq
{}B$. 
Then it holds that
\begin{equation}
\nrm*{\grad\reg(a)-\grad\reg(b)}_{\infty}
\leq{}2B^{2-p}\nrm*{a-b}_{\infty}^{p-1} + \nrm*{a-b}_{p},
\end{equation}
and
\begin{equation}
\nrm*{\grad\reg(a)-\grad\reg(b)}_{q}
\leq{}2B^{2-p}\nrm*{a-b}_{p}^{p-1} + \nrm*{a-b}_{p}.
\end{equation}
\end{lemma}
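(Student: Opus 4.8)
I would start from the explicit gradient formula \pref{eq:lp_grad}, which lets us write $\grad\reg(\theta)=\nrm*{\theta}_{p}^{2-p}\,h(\theta)$, where $h$ is applied coordinatewise via the scalar map $h(t)=\abs*{t}^{p-1}\sgn(t)$ of \pref{lem:hx_holder}. The natural approach is an ``add and subtract'' decomposition,
\[
\grad\reg(a)-\grad\reg(b)=\nrm*{a}_{p}^{2-p}\prn*{h(a)-h(b)}+\prn*{\nrm*{a}_{p}^{2-p}-\nrm*{b}_{p}^{2-p}}h(b),
\]
after which it suffices, by the triangle inequality, to bound the two terms on the right separately in $\nrm*{\cdot}_{q}$ and in $\nrm*{\cdot}_{\infty}$.

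For the first term I would use $\nrm*{a}_{p}^{2-p}\leq{}B^{2-p}$ (valid since $2-p\in\brk*{0,1}$) together with the coordinatewise bound $\abs*{h(a_i)-h(b_i)}\leq{}2\abs*{a_i-b_i}^{p-1}$ from \pref{lem:hx_holder}. Summing the $q$-th powers and using the identity $q(p-1)=p$ gives $\nrm*{h(a)-h(b)}_{q}\leq{}2\nrm*{a-b}_{p}^{p-1}$, and taking the coordinatewise maximum (with monotonicity of $t\mapsto{}t^{p-1}$) gives $\nrm*{h(a)-h(b)}_{\infty}\leq{}2\nrm*{a-b}_{\infty}^{p-1}$. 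This already accounts for the leading terms $2B^{2-p}\nrm*{a-b}_{p}^{p-1}$ and $2B^{2-p}\nrm*{a-b}_{\infty}^{p-1}$ in the two bounds.

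The crux is the second term, and the key point is to resist bounding $\nrm*{h(b)}$ by $B^{p-1}$: that throws away a cancellation and leaves a $\nrm*{a-b}_{p}^{2-p}$ factor rather than the clean $\nrm*{a-b}_{p}$ the statement asks for. Instead I would keep the exact identity $\nrm*{h(b)}_{q}=\nrm*{b}_{p}^{p-1}$ (and $\nrm*{h(b)}_{\infty}=\nrm*{b}_{\infty}^{p-1}\leq{}\nrm*{b}_{p}^{p-1}$), which reduces the second term to the scalar quantity $\abs*{\nrm*{a}_{p}^{2-p}-\nrm*{b}_{p}^{2-p}}\cdot\nrm*{b}_{p}^{p-1}$. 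Setting $u=\nrm*{a}_{p}$, $v=\nrm*{b}_{p}$ and assuming $u\geq{}v>0$ (the degenerate case $u=0$ or $v=0$ is trivial), this equals $v\prn*{(u/v)^{2-p}-1}$; since $r\mapsto{}r^{2-p}$ is concave on $\bbR_{+}$ it lies below its tangent line at $r=1$, so $(u/v)^{2-p}-1\leq{}(2-p)(u/v-1)$, giving $\abs*{\nrm*{a}_{p}^{2-p}-\nrm*{b}_{p}^{2-p}}\nrm*{b}_{p}^{p-1}\leq{}(2-p)\abs*{u-v}\leq\abs*{u-v}$ (the case $v\geq{}u$ is symmetric, using $r^{2-p}\geq{}r$ for $r\in\brk*{0,1}$). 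Combined with the reverse triangle inequality $\abs*{\nrm*{a}_{p}-\nrm*{b}_{p}}\leq\nrm*{a-b}_{p}$, this bounds the second term by $\nrm*{a-b}_{p}$ in both norms, and summing the two contributions yields the lemma. I expect this scalar step to be the only real obstacle: $t\mapsto{}t^{2-p}$ fails to be Lipschitz near the origin, and the Lipschitz-type estimate survives only because the companion factor $\nrm*{b}_{p}^{p-1}$ is carried through and cancels the singular behaviour via concavity.
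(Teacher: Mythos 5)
Your proposal is correct and follows essentially the same route as the paper: the same add-and-subtract decomposition of $\grad\reg(a)-\grad\reg(b)$ into an $h$-difference term (handled via \pref{lem:hx_holder} and $q(p-1)=p$) and a norm-factor term. The only cosmetic difference is in the scalar step for the second term: the paper normalizes by $\nrm*{b}_{p}$ and uses $r^{2-p}-1\leq{}r-1$ for $r\geq{}1$ before rescaling by homogeneity, whereas you apply the tangent-line/concavity bound to $v^{p-1}\abs*{u^{2-p}-v^{2-p}}$ directly—these are the same inequality in disguise.
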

\begin{proof}[\pfref{lem:bregman_gradient_holder}]
Let $h(x) = \abs*{x}^{p-1}\sgn(x)$, so that
\[
\grad{}\cR(\theta) = \nrm*{\theta}_{p}^{2-p}\cdot\prn*{
h(\theta_1),\ldots,h(\theta_d)
}.
\]

Fix vectors $a,b\in\bbR^{d}$. Assume without loss of generality that $\nrm*{a}_{p}\geq{}\nrm*{b}_{p}>0$; if $\nrm*{b}_{p}=0$ the result follows immediately from \pref{prop:bregman_dual_norm}. We work with the following normalized vectors: $\bar{a}\ldef{}a/\nrm*{b}_{p}$ and $\bar{b}\ldef{}b/\nrm*{b}_{p}$. Our assumptions on the norms imply $\nrm*{\bar{a}}_{p}\geq{}\nrm*{\bar{b}}_{p}=1$.

Fix a coordinate $i\in\brk*{d}$. We establish the following chain of elementary inequalities:
\begin{align*}
\abs*{
\grad\reg(\bar{a})_i
-\grad\reg(\bar{b})_i
} &=
\abs*{
\nrm*{\bar{a}}_{p}^{2-p}h(\bar{a}_i)
-\nrm*{\bar{b}}_{p}^{2-p}h(\bar{b}_i)
} \\
&=
\abs*{
\nrm*{\bar{a}}_{p}^{2-p}h(\bar{a}_i)
- \nrm*{\bar{a}}_{p}^{2-p}h(\bar{b}_i)
+ \nrm*{\bar{a}}_{p}^{2-p}h(\bar{b}_i)
-\nrm*{\bar{b}}_{p}^{2-p}h(\bar{b}_i)
} 
\intertext{Using the triangle inequality:}
&\leq
\nrm*{\bar{a}}_{p}^{2-p}\cdot
\abs*{
h(\bar{a}_i)
- h(\bar{b}_i)
}
+\abs*{\bar{b}_i}^{p-1}\cdot\abs*{
\nrm*{\bar{a}}_{p}^{2-p}
-\nrm*{\bar{b}}_{p}^{2-p}
}
\intertext{Using the \Holder{}-continuity of $h$ established in \pref{lem:hx_holder}:}
&\leq
2\nrm*{\bar{a}}_{p}^{2-p}\cdot
\abs*{
\bar{a}_i-\bar{b}_i
}^{p-1}
+\abs*{\bar{b}_i}^{p-1}\cdot\abs*{
\nrm*{\bar{a}}_{p}^{2-p}
-\nrm*{\bar{b}}_{p}^{2-p}
}
\intertext{Using that $\nrm*{\bar{a}}_{p}\geq{}\nrm*{\bar{b}}_{p}=1$:}
&\leq
2\nrm*{\bar{a}}_{p}^{2-p}\cdot
\abs*{
\bar{a}_i-\bar{b}_i
}^{p-1}
+\abs*{\bar{b}_i}^{p-1}\cdot\prn*{
\nrm*{\bar{a}}_{p}^{2-p}
-1
}.
\intertext{Finally, since $\nrm*{\bar{a}}_{p}\geq{}1$ and $2-p\leq{}1$, we can drop the exponent:}
&\leq
2\nrm*{\bar{a}}_{p}^{2-p}\cdot
\abs*{
\bar{a}_i-\bar{b}_i
}^{p-1}
+\abs*{\bar{b}_i}^{p-1}\cdot\prn*{
\nrm*{\bar{a}}_{p}
-1
}.
\end{align*}
To finish the proof, we rescale both sides of the inequality by $\nrm*{b}_{p}$. Observe that $\grad\cR(\theta)$ is homogeneous in the following sense: For any $r\geq0$,
\[
\grad\reg(r\theta) = r\cdot\grad\reg(\theta).
\]
Along with this observation, the inequality we just established implies
\begin{align*}
\abs*{
\grad\reg(a)_i
-\grad\reg(b)_i
}
 & \leq{} 
 2\nrm*{b}_{p}\nrm*{\bar{a}}_{p}^{2-p}\cdot
\abs*{
\bar{a}_i-\bar{b}_i
}^{p-1}
+
\abs*{\bar{b}_i}^{p-1}\cdot\prn*{\nrm*{a}_{p}
-\nrm*{b}_{p}} \\
 & \leq{} 
 2\nrm*{b}_{p}\nrm*{\bar{a}}_{p}^{2-p}\cdot
\abs*{
\bar{a}_i-\bar{b}_i
}^{p-1}
+
\abs*{\bar{b}_i}^{p-1}\cdot\nrm*{a-b}_{p}\\
 & =
 2\prn*{\nrm*{\bar{a}}_{p}\nrm*{b}_{p}}^{2-p}\cdot
\abs*{
\bar{a}_i\nrm*{b}_{p}-\bar{b}_i\nrm*{b}_{p}
}^{p-1}
+
\abs*{\bar{b}_i}^{p-1}\cdot\nrm*{a-b}_{p}\\
 & =
 2\nrm*{a}_{p}^{2-p}\cdot
\abs*{
a_{i}-b_{i}
}^{p-1}
+
\abs*{\bar{b}_i}^{p-1}\cdot\nrm*{a-b}_{p}.
\end{align*}
For the $\ls_{\infty}$ bound, the result follows immediately by using that $\abs*{\bar{b}_i}\leq{}\nrm*{\bar{b}}_{p}\leq{}1$. For the $\ls_{q}$ bound, we use that for any vector $z$, $\nrm*{(z_i^{p-1})_{i\leq{}d}}_{q}=\nrm*{z}_{p}^{p-1}$, and that $\nrm*{\bar{b}}_{p}\leq{}1$.

\end{proof}

\begin{proof}[\pfref{thm:lp_bregman_lipschitz}]
Throughout this proof we use that $\nrm*{x}_{p}\leq{}\nrm*{x}_1$ for
all $p\geq{}1$. To start, expanding the definition of the Bregman
divergence we have
\begin{align*}
D_{\cR}(c\dmid{}a)
- D_{\cR}(c\dmid{}b)
&= D_{\cR}(b\dmid{}a) + \tri*{\grad\cR(a)-\grad\cR(b),b-c}.\\
\intertext{Using \pref{lem:bregman_norm_bound}, this is at most}
&= 3B\nrm*{a-b}_{p} + \tri*{\grad\cR(a)-\grad\cR(b),b-c}.\\
\intertext{Now, applying \Holder{}'s inequality, this is upper bounded by}
&\leq 3B\nrm*{a-b}_{p} + \nrm*{\grad\cR(a)-\grad\cR(b)}_{\infty}\nrm*{b-c}_{1}\\
&\leq{} 3B\nrm*{a-b}_{p} + 2B\nrm*{\grad\cR(a)-\grad\cR(b)}_{\infty}.
\end{align*}
To conclude, we plug in the bound from \pref{lem:bregman_gradient_holder}.
\end{proof}

%%% Local Variables:
%%% mode: latex
%%% TeX-master: "paper"
%%% End:

\section{Proofs from \pref{sec:overview}}
\subsection{Proofs from \pref{sec:l2l2}}
\label{app:JL}
% !TEX root = paper.tex

\begin{proof}[\pfref{thm:jl}]
Let $A\in\bbR^{k\times{}d}$ be the derandomized JL matrix constructed according to \citet{deJL}, Theorem 2. Let $x'_t=Ax_t$ denote the projected feature vector and $w^\star = \argmin_{w : \|w\|_2 \le 1} L_{\cD}(w)$. 

We first bound the regret of gradient descent in the projected space in terms of certain quantities that depend on $A$, then show how the JL matrix construction guarantees that these quantities are appropriately bounded.

Since $\phi$ is $L$-Lipschitz, we have the preliminary error estimate
\begin{align*}
\phi(\tri{Ax, A w^\star}, y) -  \phi(\tri{x,  w^\star}, y) 
&\le L \ \left|\left< A x, A w^\star\right> - \left<x,  w^\star\right>\right|,
\end{align*}
and so 
\begin{align} \label{eq:lip}
L_{\cD}(A^\top A w^\star) - L_{\cD}(w^\star)  \le L\cdot\mathbb{E}_{x}\left|\left< A x, A w^\star\right> - \left<x,  w^\star\right>\right|.
\end{align}
Now recall that the $m$ machines are simply running online gradient descent in serial over the $k$-dimensional projected space, and the update has the form $u_t \gets u_{t-1} - \nabla \phi(\tri{u_t,x'_t}, y_t)$, where $\eta$ is the learning rate parameter. The standard online gradient descent regret guarantee \citep{hazan2016introduction} implies that for any vector $u \in \mathbb{R}^k$:
$$
\frac{1}{N} \sum_{t=1}^N \phi(\tri{u_t,x'_t},y_t) - \frac{1}{N} \sum_{t=1}^N \phi(\tri{u,x'_t},y_t) \le \frac{1}{2 \eta N}\|u\|_2^2 + \frac{\eta}{2 N} \sum_{t=1}^N \|x'_t\|_2^2.
$$
Equivalently, we have
$$
\frac{1}{N} \sum_{t=1}^N \phi(\tri{A^{\trn}u_t,x_t}, y_t) - \frac{1}{N} \sum_{t=1}^N \phi(\tri{A^\top u,x_t}, y_t) \le \frac{1}{2 \eta N}\|u\|_2^2 + \frac{\eta}{2 N} \sum_{t=1}^N \|A x_t\|_2^2
$$
Since the pairs $(x_t,y_t)$ are drawn i.i.d.,  the standard online-to-batch conversion lemma for online convex optimization \citep{PLG} yields the following guarantee for any vector $u$:
\begin{align*}
\frac{1}{N} \sum_{t=1}^N \mathbb{E}_S\left[L_{\cD}(A^\top u_t)\right] - L_{\cD}(A^\top u) &\le \frac{1}{2 \eta N}\|u\|_2^2 + \frac{\eta}{2 N} \sum_{t=1}^N \mathbb{E}_S \|A x_t\|_2^2\\
& = \frac{1}{2 \eta N}\|u\|_2^2 + \frac{\eta{}L^2}{2}  \mathbb{E}_x \|A x\|_2^2.
\end{align*}
Applying Jensen's inequality to the left-hand side and choosing $u = u^\star \ldef A w^\star$, we conclude that
\begin{align*}
 \mathbb{E}_S\left[L_{\cD}\left(\frac{1}{N} \sum_{t=1}^N A^\top u_t\right)\right] - L_{\cD}(A^\top u^\star) &\le \frac{1}{2 \eta N}\|A w^\star\|_2^2 + \frac{\eta{}L^2}{2}  \mathbb{E}_x \|A x\|_2^2,
\end{align*}
or in other words,
$$
 \mathbb{E}_S\left[L_{\cD}\left(\hat{w}\right)\right] - L_{\cD}(A^\top A w^\star) \le \frac{1}{2 \eta N}\|A w^\star\|_2^2 + \frac{\eta{}L^2}{2}  \mathbb{E}_x \|A x\|_2^2.
$$
We now relate this bound to the risk relative to the benchmark $\poprisk(w^{\star})$. Using \pref{eq:lip} we have
\begin{align*}
\mathbb{E}_S\left[L_{\cD}\left(\hat{w}\right)\right] - L_{\cD}(w^\star) \le  \frac{1}{2 \eta N}\|A w^\star\|_2^2 + \frac{\eta{}L^2}{2}  \mathbb{E}_x \|A x\|_2^2 + L \mathbb{E}_{x}\left|\left< A x, A w^\star\right> - \left<x,  w^\star\right>\right|.
\end{align*}
Taking expectation with respect to the draw $A$, we get that
\begin{align}\label{eq:inter}
\mathbb{E}_S \mathbb{E}_A \left[L_{\cD}\left(\hat{w}\right)\right] - L_{\cD}(w^\star) \le \mathbb{E}_x\left[ \mathbb{E}_A\left[\frac{1}{2 \eta N}\|A w^\star\|_2^2 + \frac{\eta{}L^2}{2}  \|A x\|_2^2 + L \left|\left< A x, A w^\star\right> - \left<x,  w^\star\right>\right| \right] \right].
\end{align}
It remains to bound the right-hand side of this expression. To begin, we condition on the vector $x$ with respect to which the outer expectation in \pref{eq:inter} is taken.
The derandomized JL transform guarantees (\citet{deJL}, Theorem 2) that for any $\delta >0$ and any fixed vectors $x, w^\star$, if we pick $k = O\left(\log(1/\delta)/\veps^2\right)$, then we are guaranteed that with probability at least $1 - \delta$,
$$
\|A x\|_2 \le (1 + \veps) \|x\|_2  ,~~~ \|A w^\star\|_2 \le (1 + \veps) \|w^\star\|_2  ~~~~\text{and}~~~~  \left|\left< A x, A w^\star\right> - \left<x,  w^\star\right>\right| \le \frac{\veps}{4}\nrm*{x}_{2}\nrm*{w^{\star}}_{2}. 
$$
We conclude that by picking $\veps = O\prn*{1/\sqrt{N}}$, 
with probability $1 - \delta$,
$$
\|A x\|_2 \le O(R_2) ,~~~ \|A w^\star\|_2 \le O(B_2),  ~~~~\text{and}~~~~  \left|\left< A x, A w^\star\right> - \left<x,  w^\star\right>\right| \le O\prn*{\frac{B_2R_2}{\sqrt{N}}}.
$$
To convert this into an in-expectation guarantee, note that since entries in $A$ belong to $\crl*{-1,0,+1}$, the quantities $\|A x\|_2$, $\|A w^\star\|_2$, and $\left< A x, A w^\star\right>$ all have magnitude $O(\poly(d))$ with probability $1$ (up to scale factors $B_2$ and $R_2$). Hence, 
\begin{align*}
&\mathbb{E}_A\left[\frac{1}{2 \eta N}\|A w^\star\|_2^2 + \frac{\eta{}L^2}{2}  \|A x\|_2^2 + L \left|\left< A x, A w^\star\right> - \left<x,  w^\star\right>\right| \right]  \\
&\le  (1 - \delta)\cdot O\left( \frac{B_2^2}{2 \eta N} + \frac{\eta{}L^2R_2^2}{2}   + \frac{LB_2R_2}{\sqrt{N}}  \right)  + \delta\cdot{} O\left(\poly{}(d)\cdot\prn*{\frac{B_2^2}{2 \eta N} + \frac{\eta{}L^2R_2^2}{2}   + LB_2R_2}  \right).
\end{align*}
Picking $\delta = 1/\sqrt{\poly{}(d)N}$ and using the step size $\eta = \sqrt{\frac{B_2^2}{L^{2}R_2^{2}N}}$, we get the desired bound:
\begin{align*}
\mathbb{E}_A\left[\frac{1}{2 \eta N}\|A w^\star\|_2^2 + \frac{\eta{}L^2}{2}  \|A x\|_2^2 + L \left|\left< A x, A w^\star\right> - \left<x,  w^\star\right>\right| \right]  & \le  O(LB_2R_2/\sqrt{N}).
\end{align*}
Since this in-expectation guarantee holds for any fixed $x$, it also holds in expectation over $x$:
\begin{align*}
\En_{x}\mathbb{E}_A\left[\frac{1}{2 \eta N}\|A w^\star\|_2^2 + \frac{\eta{}L^2}{2}  \|A x\|_2^2 + L \left|\left< A x, A w^\star\right> - \left<x,  w^\star\right>\right| \right]  & \le   O(L/\sqrt{N}).
\end{align*}

Using this inequality to bound the right-hand side in \pref{eq:inter} yields the claimed excess risk bound. Recall that we have $k = O\left(\log(1/\delta)/\veps^2\right) = O\left(N \log(N d)\right)$, and so the communication cost to send a single iterate (taking into account numerical precision) is upper bounded by $O(N \log(N d)\cdot\log(LB_2R_2N))$. 
\end{proof}

%%% Local Variables:
%%% mode: latex
%%% TeX-master: "paper"
%%% End:

\subsection{Proofs from \pref{sec:lower_bounds}}
\label{app:lower_bounds}
% !TEX root = paper.tex

Our lower bounds are based on reduction to the so-called ``hide-and-seek'' problem introduced by \citet{shamir2014fundamental}.
\begin{definition}[Hide-and-seek problem]
Let $\crl*{\bbP_{j}}_{j=1}^{d}$ be a set of product distributions over $\pmo^{d}$ defined via $\En_{\bbP_{j}}\brk*{z_{i}}=2\rho\ind\crl*{j=i}$. Given $N$ i.i.d. instances from $\bbP_{j^{\star}}$, where $j^{\star}$ is unknown, detect $j^{\star}$.
\end{definition}

\begin{theorem}[\citet{shamir2014fundamental}]
\label{thm:hide_and_seek}
Let $W\in\brk*{d}$ be the output of a $(b,1,N)$ protocol for the hide-and-seek problem. Then there exists some $j^{\star}\in\brk*{d}$ such that
\[
\Pr_{j^{\star}}\prn*{W=j^{\star}}\leq{} \frac{3}{d} + \sqrt{\frac{Nb\rho^{2}}{d}}.
\]

\end{theorem}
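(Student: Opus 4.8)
The plan is a Bayesian reduction. Since the quantifier ``there exists $j^{\star}$'' is no weaker than placing a uniform prior on $j^{\star}\in\brk*{d}$, it suffices to bound the average success probability $\tfrac1d\sum_{j}\Pr_{j}\prn*{W=j}$. Write $\Pi=\prn*{W_1,\dots,W_N}$ for the transcript, so $W$ is a function of $\Pi$, and let $\bbP_0$ be the ``null'' law obtained by setting $\rho=0$ (the uniform distribution on $\pmo^{Nd}$), under which the law of $\Pi$ carries no information about $j^{\star}$, so that $\sum_{j}\Pr_{0}\prn*{W=j}\le1$. Comparing each $\bbP_j$ to $\bbP_0$, using that total variation only shrinks under the map $\Pi\mapsto W$, and then Pinsker's inequality followed by concavity of $\sqrt{\cdot}$,
\[
\tfrac1d\sum_{j=1}^{d}\Pr_{j}\prn*{W=j}\le\tfrac1d+\tfrac1d\sum_{j=1}^{d}\nrm*{\bbP_{j}^{\Pi}-\bbP_{0}^{\Pi}}_{\mathrm{TV}}\le\tfrac1d+\sqrt{\tfrac1{2d}\sum_{j=1}^{d}\KL\prn*{\bbP_{j}^{\Pi}\dmid\bbP_{0}^{\Pi}}}.
\]
So the theorem reduces to the divergence estimate $\sum_{j=1}^{d}\KL\prn*{\bbP_{j}^{\Pi}\dmid\bbP_{0}^{\Pi}}=O\prn*{Nb\rho^{2}}$, with numerical constants absorbed into the stated bound.

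The core of this estimate is the one-machine, one-sample case, which I would attack with Fourier analysis on $\pmo^{d}$. Because only coordinate $j$ is biased under $\bbP_{j}$, the likelihood ratio is $\tfrac{d\bbP_{j}}{d\bbP_{0}}(x)=1+2\rho x_{j}$. Hence for a single $b$-bit message $W$, writing $g_{w}(x)=\Pr\prn*{W=w\mid X=x}\in\brk*{0,1}$ and $\alpha_{w}=\Pr_{0}\prn*{W=w}$, one has $\Pr_{j}\prn*{W=w}-\alpha_{w}=2\rho\,\widehat{g_{w}}(\{j\})$, where $\widehat{g_{w}}(\{j\})=\En_{\bbP_{0}}\brk*{g_{w}(X)\,x_{j}}$ is the degree-one Fourier coefficient, so
\[
\sum_{j=1}^{d}\KL\prn*{\bbP_{j}^{W}\dmid\bbP_{0}^{W}}\le\sum_{j=1}^{d}\chi^{2}\prn*{\bbP_{j}^{W}\dmid\bbP_{0}^{W}}=4\rho^{2}\sum_{w}\frac1{\alpha_{w}}\sum_{j=1}^{d}\widehat{g_{w}}(\{j\})^{2}.
\]
The trivial estimate $\sum_{j}\widehat{g_{w}}(\{j\})^{2}\le\nrm*{g_{w}}_{2}^{2}\le\alpha_{w}$ only gives $O\prn*{\rho^{2}2^{b}}$, which is far too weak; the right tool is the \emph{level-one Fourier inequality} (a consequence of the Bonami--Beckner hypercontractive inequality): any $\brk*{0,1}$-valued function of mean $\alpha$ has level-one weight $\sum_{j}\widehat{g}(\{j\})^{2}=O\prn*{\alpha^{2}\log(1/\alpha)}$. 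Substituting, $\sum_{j}\KL\prn*{\bbP_{j}^{W}\dmid\bbP_{0}^{W}}=O\prn*{\rho^{2}\sum_{w}\alpha_{w}\log(1/\alpha_{w})}=O\prn*{\rho^{2}H(W)}=O\prn*{\rho^{2}b}$.

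To reach $N$ samples I would use the chain rule for relative entropy along the transcript:
\[
\KL\prn*{\bbP_{j}^{\Pi}\dmid\bbP_{0}^{\Pi}}=\sum_{i=1}^{N}\En_{\pi_{<i}\sim\bbP_{j}}\KL\prn*{\bbP_{j}^{W_{i}\mid\pi_{<i}}\dmid\bbP_{0}^{W_{i}\mid\pi_{<i}}}.
\]
Since the sample $S_{i}$ is independent of the earlier messages, conditioning on any fixed prefix $\pi_{<i}$ puts us back in the single-message setting, so the inner sum over $j$ of the conditional divergences is $O\prn*{\rho^{2}b}$ for every prefix; summing over $i$ would then give $\sum_{j}\KL\prn*{\bbP_{j}^{\Pi}\dmid\bbP_{0}^{\Pi}}=O\prn*{Nb\rho^{2}}$, which with the first display proves the theorem. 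The main obstacle is precisely this last step: in an interactive protocol the law of the prefix $\pi_{<i}$ depends on $j$, so one cannot simply interchange $\sum_{j}$ with $\En_{\pi_{<i}\sim\bbP_{j}}$. Making it rigorous requires either restricting to non-interactive protocols---where under each $\bbP_{j}$ the messages $W_{i}$ are independent and the chain rule has no prefix dependence---or carefully controlling the drift of $\bbP_{j}^{\Pi_{<i}}$ away from $\bbP_{0}^{\Pi_{<i}}$ as $i$ grows (using, e.g., the crude per-hypothesis bound $\KL\prn*{\bbP_{j}^{W}\dmid\bbP_{0}^{W}}=O(\rho^{2})$ together with the estimates above); this bookkeeping is the technical heart of \citet{shamir2014fundamental}. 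Establishing the level-one Fourier inequality is the other nontrivial ingredient, but it is classical.
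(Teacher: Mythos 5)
This theorem is imported verbatim from \citet{shamir2014fundamental}; the paper states it without proof, so there is no in-paper argument to compare your proposal against. Judged on its own terms, your outline follows the standard information-theoretic route and the single-message step is sound: the reduction to a uniform prior, the data-processing/Pinsker chain, the identity $\Pr_j\prn*{W=w}-\Pr_0\prn*{W=w}=2\rho\,\En_0\brk*{g_w(X)x_j}$ (valid since $d\bbP_j/d\bbP_0=1+2\rho x_j$), and the level-one bound $\sum_j\widehat{g_w}(\crl*{j})^2=O\prn*{\alpha_w^2\log(1/\alpha_w)}$ together give $\sum_j\KL\prn*{\bbP_j^W\dmid\bbP_0^W}=O\prn*{\rho^2H(W)}=O\prn*{\rho^2b}$ for one machine. (The level-one inequality here does not even need Bonami--Beckner: the Chernoff-style bound $\alpha\exp\prn*{t\sum_j\lambda_j\En_0\brk*{x_j\mid A}}\le\prod_j\cosh(t\lambda_j)$ already yields $\sum_j\En_0\brk*{x_j\mid A}^2\le2\log(1/\alpha)$, which is the same statement.)

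The one place your argument does not close is exactly the step you flagged: interchanging $\sum_j$ with $\En_{\pi_{<i}\sim\bbP_j}$ in the chain rule. This is a genuine gap in the orientation you chose, but it has a one-line repair: run the chain rule on the \emph{reversed} divergence, $\KL\prn*{\bbP_0^{\Pi}\dmid\bbP_j^{\Pi}}=\sum_{i=1}^{N}\En_{\pi_{<i}\sim\bbP_0}\KL\prn*{\bbP_0^{W_i\mid\pi_{<i}}\dmid\bbP_j^{W_i\mid\pi_{<i}}}$. The outer expectation is now under the null law, which does not depend on $j$, so the sum over $j$ moves inside freely; for each fixed prefix the conditional message is again a $\le2^{b}$-valued function of one fresh sample, and since $\Pr_j\prn*{W=w}\ge(1-2\rho)\Pr_0\prn*{W=w}$ the same $\chi^2$ computation bounds the reversed per-prefix sum by $O\prn*{\rho^2b}$. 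Pinsker is indifferent to the orientation of the KL because total variation is symmetric, so your first display goes through with $\KL\prn*{\bbP_0^{\Pi}\dmid\bbP_j^{\Pi}}$ in place of $\KL\prn*{\bbP_j^{\Pi}\dmid\bbP_0^{\Pi}}$, and no drift bookkeeping is needed even for the fully sequential protocols allowed here. Two minor caveats remain: your argument recovers the bound only up to an absolute constant inside the square root (and $1/d$ rather than $3/d$), not the literal constants of the statement---harmless for how \pref{thm:lower_bound_slow} and \pref{prop:lb_multi_machine} use the result, since the choice of $\rho$ there absorbs it---and the per-prefix step should note that $W_i$ may be randomized, which the $\brk*{0,1}$-valued (rather than Boolean) form of the level-one inequality already accommodates.
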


\begin{proof}[\pfref{thm:lower_bound_slow}]
Recall that $\cW_1=\crl*{w\in\bbR^{d}\mid\nrm*{w}_1\leq{}1}$. We create a family of $d$ statistical learning instances as follows. Let the hide-and seek parameter $\rho\in\brk*{0,1/2}$ be fixed. Let $\cD_{j}$ have features drawn from the be the $j$th hide-and-seek distribution $\bbP_{j}$ and have $y=1$, and set $\phi(\tri*{w,x},y)=-\tri*{w,x}y$, so that $L_{\cD_j}(w)=-2\rho{}w_j$. Then we have $\min_{w\in\cW_{1}}L_{\cD_j}(w)=-2\rho$. Consequently, for any predictor weight vector $w$ we have
\[
L_{\cD_j}(w)-L_{\cD_j}(w^{\star}) = 2\rho(1-w_j).
\]
If $L_{\cD_j}(\wh{w})-L_{\cD_j}(w^{\star}) < \rho$, this implies (by rearranging) that $\wh{w}_j>\frac{1}{2}$. Since $\wh{w}\in\cW_1$ and thus $\sum_{i=1}^{d}\abs*{\wh{w}_j}\leq{}1$, this implies $j=\argmax_{i}\wh{w}_i$. Thus, if we define $W=\argmax_{i}\wh{w}$ as our decision for the hide-and-seek problem, we have
\[
\Pr_{j}\prn*{L_{\cD_j}(\wh{w}) - L_{\cD_j}(w^{\star})<\rho} \leq{} \Pr_{j}\prn*{W=j}.
\]
Appealing to \pref{thm:hide_and_seek}, this means that for every algorithm $\wh{w}$ there exists an index $j$ for which
\[
\Pr_{j}\prn*{L_{\cD_j}(\wh{w}) - L_{\cD_j}(w^{\star})<\rho} \leq{} 
\frac{3}{d} + \sqrt{\frac{Nb\rho^{2}}{d}}.
\]
To conclude the result we choose $\rho=\frac{1}{16}\sqrt{\frac{d}{bN}}\wedge{}\frac{1}{2}$.

\end{proof}

\begin{proof}[\pfref{prop:lb_multi_machine}]
This result is an immediate consequence of the reductions to the
hide-and-seek problem established
in \pref{thm:lower_bound_slow}. All that
changes is which lower bound for the hide-and-seek problem we
invoke. We set $\rho\propto\frac{d}{b{}N}$ in the
construction in \pref{thm:lower_bound_slow}, then appeal to Theorem 3 in \citet{shamir2014fundamental}.
\end{proof}

\begin{proof}[\pfref{prop:lower_bound_fast}]
We create a family of $d$ statistical learning instances as follows. Let the hide-and seek parameter $\rho\in\brk*{0,1/2}$ be fixed. Let $\bbP_{j}$ be the $j$th hide-and-seek distribution. We create distribution $\cD_{j}$ via: 1) Draw $x\sim{}\bbP_j$ 2) set $y=1$. Observe that $\En\brk*{x_ix_k}=0$ for all $i\neq{}k$ and $\En\brk*{x_i^{2}}=1$, so $\Sigma=I$. Consequently, we have
\begin{align*}
L_{\cD_j}(w) = \En_{x\sim{}\bbP_j}\prn*{\tri*{w,x}-y}^{2} = w^{\trn}\Sigma{}w - 4\rho{}w_j + 1 = \nrm*{w}_{2}^{2} - 4\rho{}w_j + 1.
\end{align*}
Let $w^{\star}=\argmin_{w\in\nrm*{w}_1\leq{}1}L_{\cD_j}(w)$. It is clear from the expression above $w^{\star}_i=0$ for all $i\neq{}j$. For coordinate $j$ we have $w^{\star}_j=\argmin_{-1\leq\alpha\leq{}1}\crl*{\alpha^{2} - 4\rho\alpha}$. Whenever $\rho\leq{}1/2$ the solution is $2\rho$, so we can write $w^{\star}=2\rho{}e_j$, which is clearly $1$-sparse.

We can now write the excess risk for a predictor $w$ as 
\[
L_{\cD_j}(w) - L_{\cD_j}(w^{\star}) = \nrm*{w}_{2}^{2} - 4\rho{}w_j + 4\rho^{2} =  \sum_{i\neq{}j}w_i^{2} + (w_j-2\rho)^{2}.
\]
Now suppose that the excess risk for $w$ is at most $\rho^{2}$. Dropping the sum term in the excess risk, this implies
\[
(w_j-2\rho)^{2}< \rho^{2}.
\]
It follows that $w_{j}\in(\rho,3\rho)$. On the other hand, we also have
\[
\sum_{i\neq{}j}w_i^{2} < \rho^{2},
\]
and so any $i\neq{}j$ must have $\abs*{w_i}<\rho$. Together, these facts imply that if the excess risk for $w$ is less than $\rho^{2}$, then $j=\argmax_{i}w_i$.

Thus, for any algorithm output $\wh{w}$, if we define $W=\argmax_{i}\wh{w}_i$ as our decision for the hide-and-seek problem, we have
\[
\Pr_{j}\prn*{L_{\cD_j}(\wh{w}) - L_{\cD_j}(w^{\star})<\rho^{2}} \leq{} \Pr_{j}\prn*{W=j}.
\]
The result follows by appealing to Theorem 2 and Theorem 3 in \cite{shamir2014fundamental}.
\end{proof}

\subsection{Discussion: Support Recovery} Our lower bound for the sparse regression setting \pref{eq:sparse_risk} does not rule out the possibility of sublinear-communication distributed algorithms for well-specified models. Here we sketch a strategy that works for this setting if we significantly strengthen the statistical assumptions.

Suppose that we work with the square loss and labels are realized as
$y=\tri*{w^{\star},x}+\veps$, where $\veps$ is conditionally mean-zero
and $w^{\star}$ is $k$-sparse. Suppose in addition that the population
covariance $\Sigma$ has the restricted eigenvalue property, and that
$w^{\star}$ satisfies the so-called ``$\beta$-min'' assumption: All
non-zero coordinates of $w^{\star}$ have magnitude bounded below.

In this case, if $N/m=\Omega(k\log{}d)$ and the smallest non-zero
coefficients of $w^{\star}$ are at least $\wt{\Omega}(\sqrt{m/N})$ the
following strategy works: For each machine, run Lasso on the first
half of the examples to exactly recover the support of $w^{\star}$
(e.g. \citet{loh2017support}). On the second half of examples,
restrict to the recovered support and use the
strategy from \citet{zhang2012communication}: run ridge regression on
each machine locally with an appropriate choice of regularization parameter, then send all ridge regression estimators to a central server that averages them and returns this as the final estimator.

This strategy has $O(mk)$ communication by definition, but the
assumptions on sparsity and $\beta$-min depend on the number of
machines. How far can these assumptions be weakened?

%%% Local Variables:
%%% mode: latex
%%% TeX-master: "paper"
%%% End:

\section{Proofs from \pref{sec:upper_bounds}}
\label{app:upper_bounds}
% !TEX root = paper.tex

Throughout this section of the appendix we adopt the shorthand
$B\ldef{}B_1$ and $R\ldef{}R_q$. Recall that $\frac{1}{p}+\frac{1}{q}=1$.

To simplify expressions throughout the proofs in this section we use the convention
$\wh{w}^{0}\ldef\bar{w}$ and $\wt{w}^{i}\ldef{}w_{n+1}^{i}$.

We begin the section by stating a few preliminary results used to analyze the
performance of \pref{alg:mirror_slow} and
\pref{alg:mirror_fast}. We then proceed to prove the main theorems.

For the results on fast rates we need the following intermediate fact,
which states that centering the regularizer $\cR$ at $\bar{w}$ does
not change the strong convexity from \pref{prop:lp_strong_convexity} or
smoothness properties established in \pref{app:bregman_approx}.
\begin{proposition}
\label{prop:lp_centered}
Let $\reg(w) = \frac{1}{2}\nrm*{w-\bar{w}}_{p}^{2}$, where
$\nrm*{w}_1\leq{}B$.  Then $\breg(a\dmid{}b)\geq\frac{p-1}{2}\nrm*{a-b}_{p}^{2}$ and if $\nrm*{a}_1\vee\nrm*{b}_1\vee\nrm*{c}_1\leq{}B$ it holds that
\[
D_{\cR}(c\dmid{}a)
- D_{\cR}(c\dmid{}b) \leq{} 10B\nrm*{a-b}_{p} + 16B^{3-p}\nrm*{a-b}_{\infty}^{p-1}.
\]
\end{proposition}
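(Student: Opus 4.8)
The plan is to reduce everything to the uncentered case already handled in \pref{app:bregman_approx} via translation invariance of the Bregman divergence. Write $\reg(w) = \reg_0(w - \bar{w})$ with $\reg_0(\theta) = \frac{1}{2}\nrm*{\theta}_p^2$. Since $\grad\reg(w) = \grad\reg_0(w-\bar{w})$, expanding the definition of the divergence immediately gives the shift identity
\[
\breg(a\dmid b) = D_{\reg_0}(a-\bar{w}\dmid b-\bar{w}),
\]
and more generally $D_{\reg}(c\dmid a) - D_{\reg}(c\dmid b) = D_{\reg_0}(c-\bar{w}\dmid a-\bar{w}) - D_{\reg_0}(c-\bar{w}\dmid b-\bar{w})$. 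Note that the pairwise difference $a - b$ is unchanged by the shift, both in $\ls_p$ and in $\ls_\infty$, which is the key point that makes the constants carry over cleanly.

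For the strong convexity bound I would simply apply \pref{prop:lp_strong_convexity} to $\reg_0$ evaluated at the shifted points $a - \bar{w}$ and $b - \bar{w}$; the factor $\frac{p-1}{2}$ carries over verbatim since $\nrm*{(a-\bar{w}) - (b-\bar{w})}_p = \nrm*{a-b}_p$.

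For the approximation bound I would invoke \pref{thm:lp_bregman_lipschitz} with the three vectors $a - \bar{w}$, $b - \bar{w}$, $c - \bar{w}$. Here I use that $\bar{w}$ itself lies in the constraint set, so $\nrm*{\bar{w}}_1 \leq B$; combined with $\nrm*{a}_1 \vee \nrm*{b}_1 \vee \nrm*{c}_1 \leq B$, the triangle inequality gives that all three shifted vectors have $\ls_1$ norm at most $2B$. Applying \pref{thm:lp_bregman_lipschitz} with norm bound $2B$ produces
\[
D_{\reg_0}(c-\bar{w}\dmid a-\bar{w}) - D_{\reg_0}(c-\bar{w}\dmid b-\bar{w}) \leq 10B\nrm*{a-b}_p + 4\cdot 2^{3-p}B^{3-p}\nrm*{a-b}_\infty^{p-1},
\]
and since $4\cdot 2^{3-p} \leq 16$ for $p \in [1,2]$ this is dominated by the claimed expression.

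There is no substantive obstacle here: the only points requiring care, rather than real work, are (i) recording the hypothesis $\nrm*{\bar{w}}_1 \leq B$ (otherwise the shifted vectors need not be $\ls_1$-bounded), which holds because in every invocation $\bar{w}$ is set to an iterate or estimator lying in the constraint set, and (ii) tracking the factor-of-two blow-up in the norm bound when passing to \pref{thm:lp_bregman_lipschitz}, which is absorbed into the stated constants $10$ and $16$. Everything else is a direct substitution into results proved earlier.
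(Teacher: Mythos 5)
Your proof is correct and follows essentially the same route as the paper's: reduce to the uncentered regularizer via the shift identity $D_{\reg}(w\dmid w') = D_{\reg_0}(w-\bar w\dmid w'-\bar w)$, then invoke \pref{prop:lp_strong_convexity} and \pref{thm:lp_bregman_lipschitz} with the doubled norm bound $2B$. Your constant-tracking ($5\cdot 2B = 10B$ and $4\cdot 2^{3-p}\le 16$ for $p\in[1,2]$) is exactly what the paper's stated constants require, and your note that the hypothesis should be read as $\nrm*{\bar w}_1\le B$ matches the paper's (slightly terse) statement.
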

\begin{proof}[\pfref{prop:lp_centered}]
Let $\reg_{0}(w)=\frac{1}{2}\nrm*{w}_{p}^{2}$. The
result follows from \pref{prop:lp_strong_convexity} and
\pref{thm:lp_bregman_lipschitz} by simply observing that
$\grad\reg(w) = \grad{}\reg_{0}(w-\bar{w})$ so that
$D_{\reg}(w\dmid{}w') =
D_{\reg_{0}}(w-\bar{w}\dmid{}w'-\bar{w})$. To invoke
\pref{thm:lp_bregman_lipschitz} we use that $\nrm*{a-\bar{w}}_{1}\leq{}2B$, and likewise
for $b$ and $c$.
\end{proof}

\begin{lemma}
\label{lem:mirror_single}
\pref{alg:mirror_slow} guarantees that for any adaptively selected
sequence $\grad_{t}^{i}$ and all $w^{\star}\in\cW$, any individual
machine $i\in\brk*{m}$ deterministically satisfies the following guarantee:
\[
\sum_{t=1}^{n}\tri*{\grad_{t}^{i},w_{t}^{i}-w^{\star}}
\leq{} \frac{\eta{}C_{q}}{2}\sum_{t=1}^{n}\nrm*{\grad_{t}^{i}}_{q}^{2} + \frac{1}{\eta}\prn*{
\breg(w^{\star}\dmid{}w_{1}^{i}) - \breg(w^{\star}\dmid{}w_{n+1}^{i})
}
\]
\end{lemma}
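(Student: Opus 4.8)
The plan is to run the textbook analysis of online mirror descent for a single pass over one machine, taking care only to track the strong-convexity constant of $\reg$ so that the factor $C_{q}$ appears correctly. Fix the machine index $i$ and suppress it from the notation. The mirror descent step can be rewritten as $\eta\grad_{t}=\grad\reg(w_{t})-\grad\reg(\theta_{t+1})$, and $w_{t+1}=\argmin_{w\in\cW}\breg(w\dmid\theta_{t+1})$ is a Bregman projection onto the convex set $\cW$. Everything will be pathwise, so the bound holds deterministically for any adaptively chosen $\grad_{t}$.

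First I would derive the one-step inequality. Expanding definitions gives the three-point identity $\breg(c\dmid a)+\breg(a\dmid b)-\breg(c\dmid b)=\tri*{\grad\reg(b)-\grad\reg(a),c-a}$; applying it with $a=w_{t}$, $b=\theta_{t+1}$, $c=w^{\star}$ and substituting $\eta\grad_{t}=\grad\reg(w_{t})-\grad\reg(\theta_{t+1})$ yields
\[
\eta\tri*{\grad_{t},w_{t}-w^{\star}}=\breg(w^{\star}\dmid w_{t})-\breg(w^{\star}\dmid\theta_{t+1})+\breg(w_{t}\dmid\theta_{t+1}).
\]
The generalized Pythagorean inequality for the Bregman projection, $\breg(w^{\star}\dmid w_{t+1})\leq\breg(w^{\star}\dmid\theta_{t+1})$ for all $w^{\star}\in\cW$, lets me replace $\theta_{t+1}$ by $w_{t+1}$ in the middle term at the cost of an inequality, leaving the "stability" term $\breg(w_{t}\dmid\theta_{t+1})$ to be controlled.

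To bound $\breg(w_{t}\dmid\theta_{t+1})$, I would add the nonnegative quantity $\breg(\theta_{t+1}\dmid w_{t})$ and use $\breg(w_{t}\dmid\theta_{t+1})+\breg(\theta_{t+1}\dmid w_{t})=\tri*{\grad\reg(w_{t})-\grad\reg(\theta_{t+1}),w_{t}-\theta_{t+1}}=\eta\tri*{\grad_{t},w_{t}-\theta_{t+1}}\leq\eta\nrm*{\grad_{t}}_{q}\nrm*{w_{t}-\theta_{t+1}}_{p}$ by \Holder. By \pref{prop:lp_strong_convexity} the regularizer is $(p-1)$-strongly convex with respect to $\nrm*{\cdot}_{p}$, and since $p=q/(q-1)$ we have $p-1=1/C_{q}$, so $\breg(\theta_{t+1}\dmid w_{t})\geq\frac{1}{2C_{q}}\nrm*{\theta_{t+1}-w_{t}}_{p}^{2}$; applying the weighted AM--GM inequality $ab\leq\frac{C_{q}}{2}a^{2}+\frac{1}{2C_{q}}b^{2}$ with $a=\eta\nrm*{\grad_{t}}_{q}$ and $b=\nrm*{w_{t}-\theta_{t+1}}_{p}$ cancels the two quadratic-in-$\nrm*{w_{t}-\theta_{t+1}}_{p}$ terms and gives $\breg(w_{t}\dmid\theta_{t+1})\leq\frac{\eta^{2}C_{q}}{2}\nrm*{\grad_{t}}_{q}^{2}$. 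Combining this with the one-step inequality, dividing by $\eta$, and summing over $t=1,\dots,n$ telescopes the divergence terms to $\frac{1}{\eta}\bigl(\breg(w^{\star}\dmid w_{1})-\breg(w^{\star}\dmid w_{n+1})\bigr)$, which is the claim. (For \pref{alg:mirror_slow} the regularizer is centered at $\bar w$, but by \pref{prop:lp_centered} this leaves the strong-convexity constant unchanged, so the argument is untouched.) I do not expect any real obstacle; the only point requiring care is matching the strong-convexity constant $1/C_{q}$ to the target term $\frac{\eta C_{q}}{2}\nrm*{\grad_{t}}_{q}^{2}$ through the AM--GM split, and the rest is routine bookkeeping.
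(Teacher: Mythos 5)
Your proposal is correct and follows essentially the same route as the paper: the paper invokes the standard per-step Bregman inequality for mirror descent as a black box (citing Ben-Tal--Nemirovski) and then applies \Holder{}, AM--GM, and the $(p-1)=1/C_q$ strong convexity from \pref{prop:lp_centered} to absorb the stability term, exactly as you do. The only difference is that you derive the black-box step explicitly from the three-point identity and the generalized Pythagorean inequality, and you bound $\breg(w_t\dmid\theta_{t+1})$ via the symmetrized-divergence identity rather than bounding $\tri*{\grad_t,w_t-\theta_{t+1}}-\frac{1}{\eta}\breg(w_t\dmid\theta_{t+1})$ jointly; these are equivalent computations producing the same constant.
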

\begin{proof}[\pfref{lem:mirror_single}]
This is a standard argument. Let $w^{\star}\in\cW$ be fixed. The standard Bregman divergence
inequality for mirror descent \citep{ben2001lectures} implies that for
every time $t$, we have
\[
\tri*{\grad_{t}^{i},w_{t}^{i}-w^{\star}}
\leq{} \tri*{\grad_{t}^{i},w_{t}^{i}-\theta_{t+1}^{i}}
+ \frac{1}{\eta{}}
\prn*{
\breg(w^{\star}\dmid{}w_{t}^{i}) - \breg(w^{\star}\dmid{}w_{t+1}^{i})
- \breg(w_{t}^{i}\dmid{}\theta_{t+1}^{i})
}.
\]
Using \pref{prop:lp_centered}, we have an upper bound of
\begin{align*}
&\tri*{\grad_{t}^{i},w_{t}^{i}-\theta_{t+1}^{i}}
+ \frac{1}{\eta{}}
\prn*{
\breg(w^{\star}\dmid{}w_{t}^{i}) - \breg(w^{\star}\dmid{}w_{t+1}^{i})
- \frac{p-1}{2}\nrm*{w_{t}^{i}-\theta_{t+1}^{i}}_{p}^{2}
}.
\intertext{Using \Holder's inequality and AM-GM:}
&\leq{} \frac{\eta{}}{2(p-1)}\nrm*{\grad_{t}^{i}}_{q}^{2} + \frac{p-1}{2\eta{}}\nrm*{w_{t}^{i}-\theta_{t+1}^{i}}_{p}^{2}
+ \frac{1}{\eta{}}
\prn*{
\breg(w^{\star}\dmid{}w_{t}^{i}) - \breg(w^{\star}\dmid{}w_{t+1}^{i})
- \frac{p-1}{2}\nrm*{w_{t}^{i}-\theta_{t+1}^{i}}_{p}^{2}
} \\
&= \frac{\eta{}}{2(p-1)}\nrm*{\grad_{t}^{i}}_{q}^{2} 
+ \frac{1}{\eta{}}
\prn*{
\breg(w^{\star}\dmid{}w_{t}^{i}) - \breg(w^{\star}\dmid{}w_{t+1}^{i})
}.
\end{align*}
The result follows by summing across time and observing that the Bregman divergences telescope.
\end{proof}

\begin{proof}[\pfref{thm:stochastic_mirror_slow}]
To begin, the guarantee from \pref{lem:mirror_single} implies that for
any fixed machine $i$, deterministically,
\begin{align*}
\sum_{t=1}^{n}\tri*{\grad_{t}^{i},w_{t}^{i}-w^{\star}}
&\leq{} \frac{\eta{}C_{q}}{2}\sum_{t=1}^{n}\nrm*{\grad_{t}^{i}}_{q}^{2} + \frac{1}{\eta}\prn*{
\breg(w^{\star}\dmid{}w_{1}^{i}) - \breg(w^{\star}\dmid{}w_{n+1}^{i})
}.
\end{align*}
We now use the usual reduction from regret to stochastic optimization: since $w_{t}^{i}$ does not depend on $\grad_{t}^{i}$, we can take expectation over $\grad_{t}^{i}$ to get
\[
\En\brk*{\sum_{t=1}^{n}\tri*{\grad{}\poprisk(w_{t}^{i}),w_{t}^{i}-w^{\star}}}\leq{} 
\frac{\eta{}C_{q}}{2}\sum_{t=1}^{n}\En\nrm*{\grad_{t}^{i}}_{q}^{2} + \frac{1}{\eta}\En\brk*{
\breg(w^{\star}\dmid{}w_{1}^{i}) - \breg(w^{\star}\dmid{}w_{n+1}^{i})}
\]
and furthermore, $\poprisk$ is convex, this implies
\[
\En\brk*{\sum_{t=1}^{n}\poprisk(w_{t}^{i})-\poprisk(w^{\star})
}\leq{} 
\frac{\eta{}C_{q}}{2}\sum_{t=1}^{n}\En\nrm*{\grad_{t}^{i}}_{q}^{2} + \frac{1}{\eta}\En\brk*{
\breg(w^{\star}\dmid{}w_{1}^{i}) - \breg(w^{\star}\dmid{}w_{n+1}^{i})}.
\]
While the regret guarantee implies that this holds for each machine
$i$ conditioned on the history up until the machine begins working, it
suffices for our purposes to interpret the expectation above as with
respect to all randomness in the algorithm's execution except for the
randomness in sparsification for the final iterate $\wh{w}$.

We now sum this guarantee across all machines, which gives
\begin{align*}
\En\brk*{\sum_{i=1}^{m}\sum_{t=1}^{n}\poprisk(w_{t}^{i})-\poprisk(w^{\star})
}
&\leq{} \frac{\eta{}C_{q}}{2}\sum_{i=1}^{m}\sum_{t=1}^{n}\En\nrm*{\grad_{t}^{i}}_{q}^{2} + \frac{1}{\eta}\sum_{i=1}^{m}\En\brk*{
\breg(w^{\star}\dmid{}w_{1}^{i}) - \breg(w^{\star}\dmid{}w_{n+1}^{i})
}.
\intertext{Rewriting in terms of $\wt{w}^{i}$ and its sparsified version $\wh{w}^{i}$ and using that $w_{1}^{1}=\bar{w}$, this is upper bounded by}
&\leq{} \frac{\eta{}C_{q}}{2}\sum_{i=1}^{m}\sum_{t=1}^{n}\En\nrm*{\grad_{t}^{i}}_{q}^{2} + \frac{\breg(w^{\star}\dmid{}\bar{w})}{\eta} + \frac{1}{\eta}\sum_{i=1}^{m-1}\En\brk*{
\breg(w^{\star}\dmid{}\wh{w}^{i}) - \breg(w^{\star}\dmid{}\wt{w}^{i})}.
\end{align*}
We now bound the approximation error in the final term. Using \pref{prop:lp_centered}, we get
\[
\sum_{i=1}^{m-1}\En\brk*{
\breg(w^{\star}\dmid{}\wh{w}^{i}) - \breg(w^{\star}\dmid{}\wt{w}^{i})
}\leq{} O\prn*{\sum_{i=1}^{m-1}
B\En\nrm*{\wh{w}^{i}-\wt{w}^{i}}_{p} + B^{3-p}\En\nrm*{\wh{w}^{i}-\wt{w}^{i}}_{\infty}^{p-1}}.
\]
\pref{thm:maurey_lp} implies that
$\En\nrm*{\wh{w}^{i}-\wt{w}^{i}}_{p} 
\leq O\prn*{
B\prn*{\frac{1}{s}}^{1-\frac{1}{p}}
}$ and $\En\nrm*{\wh{w}^{i}-\wt{w}^{i}}_{\infty}^{p-1}
\leq O\prn*{
B^{p-1}\prn*{\frac{1}{s}}^{\frac{p-1}{2}}
}$.\footnote{The second bound follows by appealing to the $\ls_2$ case
  in \pref{thm:maurey_lp} and using that
  $\nrm*{x}_{\infty}\leq{}\nrm*{x}_{2}$.}
In particular, we get
\begin{align*}
\sum_{i=1}^{m-1}\En\brk*{
\breg(w^{\star}\dmid{}\wh{w}^{i}) - \breg(w^{\star}\dmid{}\wt{w}^{i})
}\leq{} O\prn*{\sum_{i=1}^{m-1}
B^{2}\prn*{\frac{1}{s}}^{1-\frac{1}{p}} + B^{3-p}\cdot{}B^{p-1}\prn*{\frac{1}{s}}^{\frac{1}{2}}
} 
= O\prn*{B^{2}\sum_{i=1}^{m-1}
\prn*{\frac{1}{s}}^{1-\frac{1}{p}} + \prn*{\frac{1}{s}}^{\frac{p-1}{2}}
}.
\end{align*}
Since $p\leq{}2$, the second summand dominates, leading to a final bound of
$O\prn*{B^{2}m\prn*{\frac{1}{s}}^{\frac{p-1}{2}}
}$. To summarize, our developments so far (after normalizing by $N$) imply
\begin{align*} 
\En\brk*{\frac{1}{mn}\sum_{i=1}^{m}\sum_{t=1}^{n}\poprisk(w_{t}^{i})-\poprisk(w^{\star})
}
&\leq{}
  \frac{\eta{}C_{q}}{2N}\sum_{i=1}^{m}\sum_{t=1}^{n}\En\nrm*{\grad_{t}^{i}}_{q}^{2}
+ \frac{\breg(w^{\star}\dmid{}\bar{w})}{\eta{}N} 
+ O\prn*{\frac{B^{2}m}{\eta{}N}\prn*{\frac{1}{s}}^{\frac{p-1}{2}}
}.
\end{align*}
Let $\wt{w}$ denote $w_{t}^{i}$ for the index $(i,t)$ selected
uniformly at random in the final line of
\pref{alg:mirror_slow}. Interpreting the left-hand-side of this
expression as a conditional expectation over $\wt{w}$, we get
\begin{equation}
\label{eq:md_slow_wt}
\En\brk*{\poprisk(\wt{w})}-\poprisk(w^{\star}) \leq{}   \frac{\eta{}C_{q}}{2N}\sum_{i=1}^{m}\sum_{t=1}^{n}\En\nrm*{\grad_{t}^{i}}_{q}^{2}
+ \frac{\breg(w^{\star}\dmid{}\bar{w})}{\eta{}N} 
+ O\prn*{\frac{B^{2}m}{\eta{}N}\prn*{\frac{1}{s}}^{\frac{p-1}{2}}
}.
\end{equation}
Note that our boundedness assumptions imply
$\nrm*{\grad_{t}^{i}}_{q}^{2}\leq{}R^{2}$ and
$\breg(w^{\star}\dmid{}\bar{w})=\breg(w^{\star}\dmid{}0)\leq{}\frac{B^{2}}{2}$,
so when $s=\Omega(m^{\frac{2}{p-1}})$ this is bounded by
\[
\En\brk*{\poprisk(\wt{w})}-\poprisk(w^{\star}) \leq{}   \frac{\eta{}C_{q}R^{2}}{2}
+ O\prn*{\frac{B^{2}}{\eta{}N} 
} \leq O(\sqrt{C_qB^{2}R^{2}/N}),
\]
where the second inequality uses the choice of learning rate.

From here we split into two cases. In the general loss case, since
$\poprisk$ is $R$-Lipschitz with respect to $\ls_{p}$ (implied by the
assumption that subgradients lie in $\cX_q$ via duality), we get
\[
\poprisk(\wh{w})-\poprisk(w^{\star}) \leq{}\poprisk(\wt{w})-\poprisk(w^{\star}) + R\nrm*{\wh{w}-\wt{w}}_{p}.
\]
We now invoke \pref{thm:maurey_lp} once more, which implies that
\[
\En\nrm*{\wh{w}-\wt{w}}_{p} 
\leq O\prn*{
B\prn*{\frac{1}{s_{0}}}^{1-\frac{1}{p}}
}.
\]
We see that it suffices to take
$s_{0}=\Omega((N/C_q)^{\frac{p}{2(p-1)}})$ to ensure that this error
term is of the same order as the original excess risk bound.

In the linear model case, \pref{lem:maurey_prelim} directly implies
that
\[
\En\poprisk(\wh{w})
\leq{}\poprisk(\wt{w}) + O(\sqrt{B^{2}R^{2}/s_0}),
\]
and so $s_0=\Omega(N/C_q)$ suffices.

\end{proof}

\begin{proof}[\pfref{thm:mirror_linear_slow_lstar}] 
We begin from \pref{eq:md_slow_wt} in the proof of
\pref{thm:stochastic_mirror_slow} which, once $s=\Omega(m^{\frac{2}{p-1}})$,
implies 
  \begin{align*}
\En\brk*{\poprisk(\wt{w})}-\poprisk(w^{\star}) \leq{}   \frac{\eta{}C_{q}}{2N}\sum_{i=1}^{m}\sum_{t=1}^{n}\En\nrm*{\grad_{t}^{i}}_{q}^{2}
+ O\prn*{\frac{B^{2}}{\eta{}N}},
  \end{align*}
where $\wt{w}$ is the iterate $w_{t}^{i}$ selected uniformly at random at the final step
and the expectation is over
all randomness except the final sparsification step. Since the loss $\ls(\cdot,z)$ is
smooth, convex, and non-negative, we can appeal to Lemma 3.1 from
\citet{srebro2010smoothness}, which implies that 
\[
\nrm*{\grad_{t}^{i}}_{q}^{2}
= \nrm*{\grad\ls(w_{t}^{i},z_{t}^{i})}_{q}^{2}\leq{} 4\beta_q{}\ls(w_{t}^{i},z_{t}^{i}).
\]
Using this bound we have
\[
\En\brk*{\poprisk(\wt{w})}-\poprisk(w^{\star}) \leq{}   \frac{4\eta{}C_{q}\beta_{q}}{2N}\sum_{i=1}^{m}\sum_{t=1}^{n}\En\ls(w_{t}^{i},z_{t}^{i})
+ O\prn*{\frac{B^{2}}{\eta{}N}}
= 2\eta{}C_{q}\beta_{q}\cdot{}\En\brk*{\poprisk(\wt{w})}
+ O\prn*{\frac{B^{2}}{\eta{}N}}.
\]
Let $\veps\ldef{}2\eta{}C_q\beta_q$. Rearranging, we write
\[
(1-\veps)\En\brk*{\poprisk(\wt{w})}-\poprisk(w^{\star})
\leq
O\prn*{\frac{B^{2}}{2\eta{}N}}.
\]
When $\veps<1/2$, this implies $\En\brk*{\poprisk(\wt{w})}-(1+2\veps)\poprisk(w^{\star})
\leq
O\prn*{\frac{B^{2}}{2\eta{}N}}$,
and so, by rearranging, 
\[
\En\brk*{\poprisk(\wt{w})}-\poprisk(w^{\star})
\leq
O\prn*{\eta{}C_{q}\beta_qL^{\star} +  \frac{B^{2}}{2\eta{}N}}.
\]
The choice
$\eta=\sqrt{\frac{B^{2}}{C_{q}\beta_qL^{\star}N}}\wedge{}\frac{1}{4C_q\beta_q}$
ensures that $\veps\leq{}1/2$, and that
\[
\eta{}C_{q}\beta_qL^{\star} +  \frac{B^{2}}{2\eta{}N} = O\prn*{
\sqrt{\frac{C_q \beta_qB^{2}L^{\star}}{N}} + \frac{C_q\beta_q{}B^{2}}{N}
}.
\]
Now, \pref{lem:maurey_smooth} implies that, conditioned on $\wt{w}$,
we have
$\En\poprisk(\walg) \leq{} \poprisk(\wt{w}) + \frac{\beta_qB^{2}}{s_0}$.
The choice $s_0 =
\sqrt{\frac{\beta_q{}B^2N}{C_qL^{\star}}}\wedge\frac{N}{C_q}$
guarantees that this approximation term is on the same order as the excess risk bound of $\wt{w}$.
\end{proof}

\begin{proposition}
\label{prop:mirror_slow_initial}
Suppose we run \pref{alg:mirror_slow} with initial point $\bar{w}$
that is chosen by some randomized procedure independent of the data or
randomness used by \pref{alg:mirror_slow}. Suppose that we are promised that
this selection procedure satisfies $\En\nrm*{\bar{w}-w^{\star}}_{p}^{2}\leq{}\bar{B}^{2}$. Suppose that subgradients belong to $\cX_{q}$ for $q\geq{}2$, and
that $\cW\subseteq{}\cW_1$. Then, using learning rate
$\eta\ldef\frac{\bar{B}}{R}\sqrt{\frac{1}{C_{q}N}}$,  
$s=\Omega\prn*{m^{2(q-1)}\prn*{\nicefrac{B}{\bar{B}}}^{4(q-1)}}$,
 and 
$s_{0}=\Omega(\prn*{\nicefrac{N}{C_q}}^{\frac{q}{2}}\cdot\prn*{\nicefrac{B}{\bar{B}}}^{q})$, 
the algorithm guarantees
\[
\En\brk*{\poprisk(\wh{w})} - \poprisk(w^{\star}) \leq{} O\prn*{
\bar{B}R\sqrt{\frac{C_{q}}{N}}
}.
\]
\end{proposition}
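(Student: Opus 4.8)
\textbf{Proof proposal for \pref{prop:mirror_slow_initial}.}
The plan is to re-run the argument behind \pref{thm:stochastic_mirror_slow} with $\bar{w}$ held fixed, tracking every place where the radius bound enters, and then take expectation over the independent draw of $\bar{w}$ at the very end. Since $\bar{w}$ is chosen independently of the samples and of the algorithm's internal randomness, \pref{lem:mirror_single} applies verbatim with the centered regularizer $\cR(w)=\frac{1}{2}\nrm*{w-\bar{w}}_{p}^{2}$; summing the per-machine regret bound across all $m$ machines, performing the usual online-to-batch conversion, and controlling the inter-machine sparsification error through \pref{prop:lp_centered} and \pref{thm:maurey_lp} produces exactly \pref{eq:md_slow_wt}:
\[
\En\brk*{\poprisk(\wt{w})}-\poprisk(w^{\star}) \leq{}   \frac{\eta{}C_{q}}{2N}\sum_{i=1}^{m}\sum_{t=1}^{n}\En\nrm*{\grad_{t}^{i}}_{q}^{2}
+ \frac{\breg(w^{\star}\dmid{}\bar{w})}{\eta{}N}
+ O\prn*{\frac{B^{2}m}{\eta{}N}\prn*{\tfrac{1}{s}}^{\frac{p-1}{2}}},
\]
where $\wt{w}=w_{t}^{i}$ is the uniformly-sampled iterate and the expectation is over all randomness except the final sparsification. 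Because $\grad\cR(\bar{w})=0$, we have $\breg(w^{\star}\dmid{}\bar{w})=\frac12\nrm*{w^{\star}-\bar{w}}_{p}^{2}$, so taking expectation over $\bar{w}$ and using the hypothesis gives $\En\brk*{\breg(w^{\star}\dmid{}\bar{w})}\leq\frac12\bar{B}^{2}$. Crucially, the Maurey term still carries a factor $B^{2}$ and not $\bar{B}^{2}$: \pref{prop:lp_centered} bounds the Bregman approximation error in terms of the ambient $\ls_1$ radius $B$ of the iterates (which all lie in $\cW\subseteq\cW_1$), not their distance to $w^{\star}$. This mismatch is precisely what forces $s$ and $s_{0}$ to inflate by powers of $B/\bar{B}$.

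Next I would substitute the prescribed $\eta=\frac{\bar{B}}{R}\sqrt{\frac{1}{C_{q}N}}$ together with $\nrm*{\grad_{t}^{i}}_{q}^{2}\leq R^{2}$. A direct computation shows the first two terms on the right-hand side are each $O\prn*{\bar{B}R\sqrt{C_{q}/N}}$. For the third term, $\frac{B^{2}m}{\eta{}N}\prn*{1/s}^{(p-1)/2}=\frac{B^{2}mR}{\bar{B}}\sqrt{\frac{C_{q}}{N}}\,\prn*{1/s}^{(p-1)/2}$, which is of the same order as soon as $s^{(p-1)/2}\gtrsim B^{2}m/\bar{B}^{2}$; since $p-1=1/(q-1)$ this is exactly $s=\Omega\prn*{m^{2(q-1)}(B/\bar{B})^{4(q-1)}}$. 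Hence $\En\brk*{\poprisk(\wt{w})}-\poprisk(w^{\star})\leq O\prn*{\bar{B}R\sqrt{C_{q}/N}}$.

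Finally I would pass from $\wt{w}$ to $\wh{w}=Q^{s_{0}}(\wt{w})$. Because subgradients belong to $\cX_{q}$, the population risk $\poprisk$ is $R$-Lipschitz with respect to $\nrm*{\cdot}_{p}$, so $\En\poprisk(\wh{w})-\En\poprisk(\wt{w})\leq R\,\En\nrm*{\wh{w}-\wt{w}}_{p}$, and \pref{thm:maurey_lp} together with $\nrm*{\wt{w}}_{1}\leq B$ gives $\En\nrm*{\wh{w}-\wt{w}}_{p}\leq O\prn*{B\prn*{1/s_{0}}^{1-1/p}}$. Demanding this to be $O\prn*{\bar{B}R\sqrt{C_{q}/N}}$ forces $s_{0}^{1-1/p}\gtrsim (B/\bar{B})\sqrt{N/C_{q}}$, i.e.\ $s_{0}=\Omega\prn*{(N/C_{q})^{q/2}(B/\bar{B})^{q}}$ since $1-1/p=1/q$. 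Combining the last two displays yields the stated excess risk bound. The only genuinely delicate point — and the step to execute with care — is the bookkeeping of the $B/\bar{B}$ factors: the leading statistical term is governed by the distance $\bar{B}$ from the warm start to $w^{\star}$, whereas the sparsification error is governed by the ambient $\ls_1$ radius $B$, and these two scales must be cleanly separated so that the sparsity requirements come out as claimed; the remainder is a transcription of the proof of \pref{thm:stochastic_mirror_slow}.
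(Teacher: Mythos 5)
Your proposal is correct and follows exactly the paper's route: the paper likewise conditions on $\bar{w}$, reuses the bound from the proof of \pref{thm:stochastic_mirror_slow} verbatim, takes expectation over $\bar{w}$ via $\En\brk*{\breg(w^{\star}\dmid\bar{w})}=\frac{1}{2}\En\nrm*{\bar{w}-w^{\star}}_{p}^{2}\leq\bar{B}^{2}/2$, and then checks that the prescribed $\eta$, $s$, and $s_0$ balance the terms. Your explicit bookkeeping of the $B/\bar{B}$ factors (which the paper leaves as ``straightforward to verify'') is accurate, including the exponent conversions $p-1=1/(q-1)$ and $1-1/p=1/q$.
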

\begin{proof}[\pfref{prop:mirror_slow_initial}]
We proceed exactly as in the proof of
\pref{thm:stochastic_mirror_slow}, which establishes that conditioned
on $\bar{w}$,
\[
\En\brk*{
\poprisk(\wh{w})} - \poprisk(w^{\star})
\leq{}   \frac{\eta{}C_{q}}{2N}\sum_{i=1}^{m}\sum_{t=1}^{n}\En\nrm*{\grad_{t}^{i}}_{q}^{2}
+ \frac{\breg(w^{\star}\dmid{}\bar{w})}{\eta{}N} 
+ O\prn*{\frac{B^{2}m}{\eta{}N}\prn*{\frac{1}{s}}^{\frac{p-1}{2}}
} + O\prn*{BR\prn*{\frac{1}{s_0}}^{1-1/p}}.
\]
We now take the expectation over $\bar{w}$. We have that
$\En\breg(w^{\star}\dmid\bar{w})=\frac{1}{2}\En\nrm*{\bar{w}-w^{\star}}_{p}^{2}\leq{}\bar{B}^{2}/2$. It
is straightforward to verify from here that the prescribed sparsity levels and learning rate give the desired bound.
\end{proof}

\begin{proof}[\pfref{thm:lipschitz_fast}]
Let $\wh{w}_{0}=0$, and let us use the shorthand $\gamma\ldef\gamma_q$.

We will show inductively that $\En\nrm*{\wh{w}_{k}-w^{\star}}_{p}^{2}\leq{}2^{-k}B^{2}\rdef{}B_k^{2}$. Clearly this is true for $\wh{w}_{0}$. Now assume the statement is true for $\wh{w}_k$. Then, since $\En\nrm*{\wh{w}_{k}-w^{\star}}_{p}^{2}\leq{}B_k^{2}$, \pref{prop:mirror_slow_initial} guarantees that
\[
\En\brk*{\poprisk(\wh{w}_{k+1})} - \poprisk(w^{\star}) \leq{} 
c\cdot{}B_{k}R\sqrt{\frac{C_{q}}{N_{k+1}}},
\]
where $c>0$ is some absolute constant. Since the objective satisfies the restricted strong convexity condition (\pref{ass:rsc}), and since $\poprisk$ is convex and $\cW$ is also convex, we have $\tri*{\grad{}\poprisk(w^{\star}),w-w^{\star}}\geq{}0$ and so
\[
\En\nrm*{\wh{w}_{k+1}-w^{\star}}_{p}^{2} \leq{} 
\frac{2c\cdot{}B_{k}R}{\gamma}\sqrt{\frac{C_{q}}{N_{k+1}}}.
\]
Consequently, choosing $N_{k+1}=C_{q}\cdot\prn*{\frac{4cR}{\gamma{}B_k}}^{2}$ guarantees that
\[
\En\nrm*{\wh{w}_{k+1}-w^{\star}}_{p}^{2} \leq{} \frac{1}{2}B_{k}^{2},
\]
so the recurrence indeed holds. In particular, this implies that
\[
\En\brk*{\poprisk(\wh{w}_{T})}-\poprisk(w^{\star}) \leq{} \frac{\gamma}{4}B_{T-1}^{2}=2^{-T}\cdot\frac{\gamma{}B^{2}}{2}.
\]
The definition of $T$ implies that
\[
T\geq{}\log_{2}\prn*{
\frac{N}{32C_{q}}\prn*{\frac{\gamma{}B}{Rc}}^{2}
},
\]
and so
\[
\En\brk*{\poprisk(\wh{w}_{T})}-\poprisk(w^{\star}) \leq{} 2^{-T}\cdot\frac{\gamma{}B^{2}}{2}
\leq{}O\prn*{
\frac{C_{q}R^{2}}{\gamma{}N}
}.
\]
This proves the optimization guarantee.

To prove the communication guarantee, let $m_k$ denote the number of consecutive machines used at
round $k$. The total number of bits broadcasted---summing the sparsity levels from \pref{prop:mirror_slow_initial} over $T$ rounds---is at most
\begin{align*}
&\log{}d\cdot{}\sum_{k=1}^{T}\prn*{m_k}^{2q-1}\prn*{\frac{B}{B_{k-1}}}^{4(q-1)}+
\prn*{\frac{N_{k}}{C_q}}^{\frac{q}{2}}\cdot\prn*{\frac{B}{B_{k-1}}}^{q},
\end{align*}
plus an additive $O(m\log(BRN))$ term to send the scalar norm for each
sparsified iterate $\wh{w}_{i}$.
Note that we have $m_k=\frac{N_k}{n}\vee{}1$, so this is at most
\begin{align*}
&\log{}d\cdot{}\sum_{k=1}^{T}\prn*{\frac{N_k}{n}}^{2q-1}\prn*{\frac{B}{B_{k-1}}}^{4(q-1)}+
\prn*{\frac{N_{k}}{C_q}}^{\frac{q}{2}}\cdot\prn*{\frac{B}{B_{k-1}}}^{q}.
\end{align*}
The first term in this sum simplifies to  $O\prn*{\log{}d\cdot{}
\prn*{\frac{C_qR^{2}}{n\gamma^{2}B^{2}}}^{2q-1}
}\cdot\sum_{k=1}^{T}2^{(4q-3)k}$, while the second simplifies to
$O\prn*{\log{}d\cdot\prn*{\frac{R}{\gamma{}B}}^{q}2^{q}}\cdot{}\sum_{k=1}^{T}2^{qk}$. We
use that $\sum_{t=1}^{T}\beta^{t}\leq{}\beta^{T+1}$ for $\beta\geq{}2$
to upper bound by 
\begin{align*}\textstyle
& O\biggl(\log{}d\cdot{}
\prn*{\frac{C_qR^{2}}{n\gamma^{2}B^{2}}}^{2q-1}2^{q}
\biggr)\cdot2^{(4q-3)T}+
O\prn*{\log{}d\prn*{\frac{R}{\gamma{}B}}^{q}2^{q}}\cdot{}2^{qT}.
\end{align*}
Substituting in the value of $T$ and simplifying leads to a final
bound of
\begin{align*}
& O\biggl(\log{}d\cdot{}
\prn*{\frac{\gamma^{2}B^{2}}{C_{q}R^{2}}}^{2(q-1)}m^{2q-1}N^{2(q-1)}
+
\log{}d\cdot{}\prn*{\frac{\gamma{}BN}{C_q{}R}}^{q}
\biggr).\stepcounter{equation}\tag{\theequation}\label{eq:fast_rate_final_comm}
\end{align*}
\end{proof}

\begin{proof}[\pfref{prop:sparse_regression}]
It immediately follows from the definitions in the proposition that \pref{alg:mirror_fast} guarantees
\[
\En\brk*{\poprisk(\wh{w}_{T})}-\poprisk(w^{\star})\leq{} O\prn*{
\frac{C_{q}B^{2}R^{2}}{\gamma_qN}
},
\]
where $\gamma_q$ is as in \pref{ass:rsc}. We now relate $\gamma_q$ and $\gamma$. From the optimality of $w^{\star}$ and strong convexity of the square loss with respect to predictions it holds that for all $w\in\cW_p$,
\[
\En\brk*{\poprisk(w)}-\poprisk(w^{\star}) - \tri*{\grad{}\poprisk(w^{\star}),w-w^{\star}} \geq{} \En\tri*{x,w-w^{\star}}^{2}.
\]
Our assumption on $\gamma$ implies
\[
\En\tri*{x,w-w^{\star}}^{2} = \nrm*{\Sigma^{1/2}(w-w^{\star})}_{2}^{2} \geq{} \gamma\nrm*{w-w^{\star}}_{2}^{2}.
\]
Using \pref{prop:l1_cone}, we have
\[
\nrm*{w-w^{\star}}_{p}\leq{}\nrm*{w-w^{\star}}_{1}
\leq{}2\nrm*{(w-w^{\star})_{S}}_{1}
\leq{}2\sqrt{k}\nrm*{(w-w^{\star})_{S}}_{2}
\leq{}2\sqrt{k}\nrm*{w-w^{\star}}_{2}
\]
Thus, it suffices to take $\gamma_q=\frac{\gamma}{4k}$.

\end{proof}
The following proposition is a standard result in high-dimensional
statistics. For a given vector $w\in\bbR^{d}$, let $w_{S}\in\bbR^{d}$
denote the same vector with all coordinates outside $S\subseteq\brk*{d}$ set to zero.
\begin{proposition}
\label{prop:l1_cone}
Let $\cW$, $w^{\star}$, and $S$ be as in \pref{prop:sparse_regression}. All $w\in\cW$ satisfy the inequality $\nrm*{(w-w^{\star})_{S^{\comp}}}_{1}\leq{}\nrm*{(w-w^{\star})_{S}}_{1}$.
\end{proposition}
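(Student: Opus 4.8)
The plan is to run the standard ``basic inequality'' argument from high-dimensional statistics, using only the single defining inequality of the constraint set and the sparsity of $w^{\star}$. The point is that membership $w\in\cW$ means exactly $\nrm*{w}_{1}\leq{}\nrm*{w^{\star}}_{1}$, and since $w^{\star}$ is supported on $S$ we have both $\nrm*{w^{\star}}_{1}=\nrm*{w^{\star}_{S}}_{1}$ and $(w-w^{\star})_{S^{\comp}}=w_{S^{\comp}}$. No optimality/first-order condition for $w^{\star}$ is needed beyond the fact that it satisfies the norm bound defining $\cW$.

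First I would split the norm over the support and its complement, $\nrm*{w}_{1}=\nrm*{w_{S}}_{1}+\nrm*{w_{S^{\comp}}}_{1}$, and lower bound the support term by the reverse triangle inequality, $\nrm*{w_{S}}_{1}=\nrm*{w^{\star}_{S}+(w-w^{\star})_{S}}_{1}\geq{}\nrm*{w^{\star}_{S}}_{1}-\nrm*{(w-w^{\star})_{S}}_{1}$. Combining these two facts with the constraint $\nrm*{w}_{1}\leq{}\nrm*{w^{\star}}_{1}=\nrm*{w^{\star}_{S}}_{1}$ yields
\[
\nrm*{w^{\star}_{S}}_{1}\;\geq\;\nrm*{w^{\star}_{S}}_{1}-\nrm*{(w-w^{\star})_{S}}_{1}+\nrm*{w_{S^{\comp}}}_{1}.
\]
Cancelling $\nrm*{w^{\star}_{S}}_{1}$ from both sides and substituting $w_{S^{\comp}}=(w-w^{\star})_{S^{\comp}}$ gives precisely $\nrm*{(w-w^{\star})_{S^{\comp}}}_{1}\leq{}\nrm*{(w-w^{\star})_{S}}_{1}$.

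There is no real obstacle here; this is a textbook calculation. The only point worth verifying is the consistency of definitions: $\cW$ in \pref{prop:sparse_regression} is defined with radius $\nrm*{w^{\star}}_{1}$ (rather than $B_1$), so every $w\in\cW$ automatically satisfies $\nrm*{w}_{1}\leq{}\nrm*{w^{\star}}_{1}$, which is exactly the inequality driving the argument. I would state the three displayed inequalities above in sequence and conclude.
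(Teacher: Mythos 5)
Your proof is correct and is essentially the same argument as the paper's: both split the $\ls_1$ norm across $S$ and $S^{\comp}$, apply the reverse triangle inequality on the $S$-part, and use the constraint $\nrm*{w}_{1}\leq{}\nrm*{w^{\star}}_{1}$ together with the fact that $w^{\star}$ is supported on $S$. The only difference is cosmetic bookkeeping (the paper writes $\nu=w-w^{\star}$ and decomposes $\nrm*{w^{\star}+\nu_{S}+\nu_{S^{\comp}}}_{1}$ directly), so nothing further is needed.
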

\begin{proof}[\pfref{prop:l1_cone}]
Let $\nu=w-w^{\star}$. From the definition of $\cW$, we have that for all $w\in\cW$,
\begin{align*}
\nrm*{w^{\star}}_{1}\geq{}\nrm*{w}_{1} &= \nrm*{w^{\star}+\nu}_{1}.
\end{align*}
Applying triangle inequality and using that the $\ls_1$ norm
decomposes coordinate-wise:
\[
\nrm*{w^{\star}+\nu}_{1} = \nrm*{w^{\star}+\nu_{S} + \nu_{S^{\comp}}}_{1}
= \nrm*{w^{\star}+\nu_{S}}_{1} + \nrm*{\nu_{S^{\comp}}}_{1} \geq{} \nrm*{w^{\star}}_{1}-\nrm*{\nu_{S}}_{1} + \nrm*{\nu_{S^{\comp}}}_{1}.
\]
Rearranging, we get $\nrm*{\nu_{S^{\comp}}}_{1}\leq{}\nrm*{\nu_{S}}_{1}$.
\end{proof}

\begin{proof}[\pfref{prop:schatten}]
To begin, we recall from \citet{kakade2012regularization} that the
regularizer $\cR(W)=\frac{1}{2}\nrm*{W}_{S_p}^{2}$ is $(p-1)$-strongly
convex for $p\leq{}2$. This is enough to show under our assumptions
that the centralized version of mirror descent (without
sparsification) guarantees excess risk
$O\prn*{\sqrt{\frac{C_qB_1^{2}R_{q}^{2}}{N}}}$, with $C_q=q-1$, which matches the $\ls_1/\ls_q$ setting.

What remains is to show that the new form of sparsification indeed
preserves Bregman divergences as in the $\ls_1/\ls_q$ setting. We now show that when $W$ and $W^{\star}$ have $\nrm*{W}_{S_1}\vee\nrm*{W^{\star}}_{S_1}\leq{}B$, 
\[
\En\brk*{
\breg(W^{\star}\dmid{}\maurey(W)) - \breg(W^{\star}\dmid{}W)
} \leq{} O\prn*{B^{2}\prn*{\frac{1}{s}}^{\frac{p-1}{2}}}.
\]
To begin, let $U\in\bbR^{d\times{}d}$ be the left singular vectors of $W$ and $V\in\bbR^{d\times{}d}$ be the right singular vectors. We define $\wh{\sigma}=\frac{\nrm*{W}_{S_1}}{s}\sum_{\tau=1}^{s}e_{i_\tau}$, so that we can write $W=U\diag(\sigma)V^{\trn}$ and $\maurey(W)=U\diag(\wh{\sigma})V^{\trn}$.

Now note that since the Schatten norms are unitarily invariant, we have
\[
\nrm*{W-\maurey(W)}_{S_{p}} = \nrm*{U\diag(\sigma-\sigmah)V^{\trn}}_{S_{p}}
= \nrm*{\sigma-\sigmah}_{p}
\]
for any $p$. Note that our assumptions imply that
$\nrm*{\sigma}_{1}\leq{}B$, and that $\sigmah$ is simply the vector
Maurey operator applied to $\sigma$, so it follows immediately from \pref{thm:maurey_lp} that
\begin{equation}
\label{eq:schatten_maurey}
\En\nrm*{\sigma - \wh{\sigma}}_{p} \leq{} 4B\prn*{\frac{1}{s}}^{1-1/p}\quad\text{and}\quad \sqrt{\En\nrm*{\sigma-\wh{\sigma}}_{\infty}^{2}} \leq{} 4B\prn*{\frac{1}{s}}^{1/2}.
\end{equation}

Returning to the Bregman divergence, we write
\begin{align*}
\breg(W^{\star}\dmid{}\maurey(W)) - \breg(W^{\star}\dmid{}W) 
&= D_{\cR}(W\dmid{}\maurey(W)) + \tri*{\grad\cR(\maurey(W))-\grad\cR(W),W-W^{\star}} \\
&\leq D_{\cR}(W\dmid{}\maurey(W)) + \nrm*{\grad\cR(\maurey(W))-\grad\cR(W)}_{S_{\infty}}\nrm*{W-W^{\star}}_{S_1} \\
&\leq D_{\cR}(W\dmid{}\maurey(W)) + 2B\nrm*{\grad\cR(\maurey(W))-\grad\cR(W)}_{S_{\infty}}.
\end{align*}
It follows immediately using \pref{lem:bregman_norm_bound} that
\[
\breg(W\dmid{}\maurey(W))\leq{}3B\nrm*{W-\maurey(W)}_{S_p}=3B\nrm*{\sigma-\wh{\sigma}}_{p}.
\]
To make progress from here we use a useful representation for the
gradient of $\cR$. Define 
\[
g(\sigma) = \nrm*{\sigma}_{p}^{2-p}\cdot\prn*{
\abs*{\sigma_1}^{p-1}\sgn(\sigma_1),\ldots,\abs*{\sigma_d}^{p-1}\sgn(\sigma_d)
}.
\]
Then using Theorem 30 from \citet{kakade2012regularization} along with \pref{eq:lp_grad}, we have
\[
\grad\cR(W) = U\diag(g(\sigma))V^{\trn},\quad\text{and}\quad\grad\cR(\maurey(W)) = U\diag(g(\sigmah))V^{\trn}.
\]
For the gradient error term, unitary invariance again implies that
\[
\nrm*{\grad\cR(\maurey(W))-\grad\cR(W)}_{S_{\infty}} = \nrm*{U\diag(g(\sigma)-g(\sigmah))V^{\trn}}_{S_{\infty}}
=\nrm*{g(\sigma)-g(\sigmah)}_{\infty}.
\]
\pref{lem:bregman_gradient_holder} states that
\[
\nrm*{g(\sigma)-g(\sigmah)}_{\infty}
\leq{}2B^{2-p}\nrm*{\sigma-\sigmah}_{\infty}^{p-1} +
\nrm*{\sigma-\sigmah}_{p}.
\]
Putting everything together, we get
\[
\breg(W^{\star}\dmid{}\maurey(W)) - \breg(W^{\star}\dmid{}W)  \leq{} 
5B\nrm*{\sigma-\wh{\sigma}}_{p} + 4B^{3-p}\nrm*{\sigma-\wh{\sigma}}_{\infty}^{p-1}.
\]
The desired result follows by plugging in the bounds in \pref{eq:schatten_maurey}.
\end{proof}

%%% Local Variables:
%%% mode: latex
%%% TeX-master: "paper"
%%% End:

\end{document}